\documentclass{article}

\usepackage{arxiv}

\usepackage[utf8]{inputenc} 
\usepackage[T1]{fontenc}    
\usepackage[hidelinks]{hyperref} 
\usepackage{url}            
\usepackage{booktabs}       
\usepackage{amsfonts}       
\usepackage{nicefrac}       
\usepackage{microtype}      
\usepackage{graphicx}
\usepackage{natbib}
\usepackage{doi}

\usepackage{xcolor}         

\usepackage{array}

\usepackage{threeparttable}
\usepackage{subcaption} 
\usepackage{caption}
\usepackage{amsthm}
\usepackage{multicol}
\usepackage{multirow}
\setcitestyle{numbers,square}

\usepackage{amsmath}
\usepackage{amssymb}
\usepackage{mathtools}
\usepackage{enumitem}
\usepackage{tabularx}

\newcommand{\best}[1]{\textbf{#1}}
\newcommand{\second}[1]{\underline{#1}}

\theoremstyle{plain}
\newtheorem{theorem}{Theorem}[section]
\newtheorem{proposition}[theorem]{Proposition}
\newtheorem{lemma}[theorem]{Lemma}
\newtheorem{corollary}[theorem]{Corollary}
\theoremstyle{definition}

\theoremstyle{remark}
\newtheorem{remark}[theorem]{Remark}

\title{TabNSM: Neural Sparse Mixer for Tabular Regression}

\author{
  Ali Eslamian \\
  Department of Computer Science \\
  University of Kentucky \\
  Lexington, KY 40506 \\
  \And
  Qiang Cheng \\
  Department of Computer Science \\
  Institute for Biomedical Informatics \\
  University of Kentucky \\
  Lexington, KY 40506 \\
}

\date{}

\renewcommand{\headeright}{}
\newcommand{\undertitle}{}
\renewcommand{\shorttitle}{TabNSM}

\hypersetup{
pdftitle={TabNSM: Neural Sparse Mixer for Tabular Regression},
pdfauthor={Ali Eslamian, Qiang Cheng},
}

\begin{document}

\maketitle

\begin{abstract}
Large-scale, high-dimensional tabular regression remains challenging. Tree-based models are often robust but do not support end-to-end representation learning, while deep models offer flexible feature learning but can incur costly interaction modeling and remain sensitive to noisy or redundant features. We propose \emph{TabNSM}, a scalable regression framework that extends our earlier sparse-attention and mixer architectures. TabNSM adapts these components within an \emph{Adaptive Sparse Interaction Module (ASIM)}, which combines foreground feature discovery, sparse local interaction encoding, and Feature-Token Mixing with near-linear complexity under fixed sparse configurations. The regression-specific contributions are a Multi-Stage Regression Head, \emph{GridLoss}, an ordinal-aware soft-binning objective, and \emph{RISE} (Reweighted Instance Sampling by Error), a difficulty-aware sampling strategy based on loss-quantile bins. Across nine real-world regression benchmarks, TabNSM achieves strong predictive performance and practical scalability, with consistent gains on high-dimensional and heterogeneous datasets.
\end{abstract}

\section{Introduction}
Modern applications in healthcare, finance, and industrial monitoring depend on large-scale tabular regression to forecast continuous outcomes, such as clinical risk scores or sensor-based targets. 
These tasks involve high-dimensional, heterogeneous feature spaces comprising numerical signals, categorical attributes, and text-derived embeddings~\citep{DANets}. A defining characteristic of such data is that predictive signals are rarely distributed evenly across the feature space; instead, a small subset of features may be decision-relevant for each instance, while the majority are noisy, redundant, or weakly informative~\citep{eslamian2025tabnsa}. This sparse and instance-specific structure contrasts with image and text modalities, where informative content is organized into coherent spatial or sequential regions.

This structure motivates a foreground-background view of high-dimensional tabular regression. For each sample, the feature space can be viewed as containing a small \emph{foreground} of decision-relevant features and a much larger \emph{background} of uninformative or noisy features. Crucially, this partition is not globally fixed: the foreground shifts across samples, so no single feature subset characterizes the entire dataset. A model that cannot dynamically isolate the foreground may conflate signal with background variation, degrading both accuracy and robustness.

The empirical strength of tree-based models on tabular data can be partly understood from this perspective. Models such as CatBoost perform recursive, axis-aligned feature selection, progressively isolating sparse predictive signals through hierarchical splitting \citep{Ngboost, NODE, prokhorenkova2018catboost}. This mechanism provides a structural prior well aligned with tabular sparsity. However, tree-based models are less suited to end-to-end differentiable representation learning, seamless integration of high-dimensional semantic embeddings, and joint optimization with learned feature encoders. These limitations become increasingly important in modern text-enriched tabular settings.

Deep tabular models offer greater representational flexibility but face complementary challenges. Transformer-style architectures typically treat tabular features as tokens and use self-attention to model cross-feature dependencies, but dense self-attention computes all pairwise interactions, resulting in quadratic cost with respect to feature dimension and potentially mixing informative features with large numbers of noisy or redundant background features. Recent state-space models (SSMs) \citep{ahamed2024mambatab, FT-Mamba, wang2024mambular} reduce this computational burden through linear-time sequential scanning, but their reliance on an imposed feature ordering does not directly provide sparse, instance-adaptive feature selection. Existing deep tabular models therefore face a persistent trade-off among expressive interaction modeling, robust foreground isolation, and computational scalability.

Building on our earlier TabNSA sparse-attention architecture and TabMixer mixing block~\citep{eslamian2025tabnsa,eslamian2025tabmixer}, we propose \textbf{Tabular Neural Sparse Mixer (TabNSM)}, a scalable framework for large-scale, high-dimensional tabular regression. TabNSM adapts these prior components within an \textit{Adaptive Sparse Interaction Module (ASIM)}, which combines instance-adaptive foreground discovery, sparse local interaction encoding, and Feature-Token Mixing (FTM). By selecting structured subsets of feature interactions, ASIM provides a differentiable analogue of tree-like selective feature isolation while preserving the expressiveness and end-to-end trainability of neural networks. FTM further promotes information propagation beyond the sparse attention support, enabling the model to balance selective feature interaction with broader feature-level context without incurring dense quadratic attention cost.

To support robust training with continuous targets, TabNSM further introduces two complementary training components. \textit{GridLoss} aligns predictions and targets in an ordinal grid space, providing a smooth, structure-aware complement to standard pointwise regression losses. \textit{RISE} ({R}eweighted {I}nstance {S}ampling by {E}rror) periodically reweights training examples according to per-sample loss, encouraging the model to focus on difficult or underfit regions of the target distribution. Together, ASIM, GridLoss, and RISE provide a unified framework for scalable and robust high-dimensional tabular regression.

In summary, the methodological contributions of this regression-focused extension are as follows:
\begin{itemize}
    \item {Regression-oriented ASIM integration:} an adaptation of the TabNSA sparse-attention pattern and TabMixer-style feature-token mixing within a unified high-dimensional regression pipeline.
    \item {Multi-Stage Regression Head:} a progressively refined prediction head with residual coupling and feature fusion for coarse-to-fine regression from the ASIM-enhanced representation.
    \item {GridLoss:} a differentiable soft-binning objective that aligns predictions and continuous targets in an ordinal grid space, complementing standard pointwise regression losses.
    \item {RISE:} a difficulty-aware sampling strategy that periodically reweights training instances using loss-quantile bins to improve robustness under non-uniform error distributions.
\end{itemize}

We evaluate TabNSM on nine real-world regression benchmarks across diverse domains and data regimes. Results show strong predictive performance and practical scalability, while ablations validate the complementary roles of ASIM, the Multi-Stage Regression Head, GridLoss, and RISE.
\section{Related Work}
\label{sec:related_work}

\textbf{Deep Learning versus Tree Ensembles for Regression.}
Gradient Boosted Decision Trees (GBDTs), including XGBoost~\citep{chen2016xgboost}, LightGBM~\citep{ke2017lightgbm}, and CatBoost~\citep{prokhorenkova2018catboost}, remain strong baselines for tabular regression and frequently outperform deep architectures on medium-scale datasets with limited feature complexity~\citep{grinsztajn2022tree,borisov2022deep}. However, GBDTs are not naturally end-to-end differentiable and are less suited to joint representation learning with high-dimensional learned features, limiting their flexibility in modern multimodal or text-enriched tabular settings. While tree-based ensembles remain competitive, their training cost can increase substantially with dataset size, number of trees, and tree depth, which may limit scalability in large-scale or high-dimensional regimes (Appendix~\ref{appendix:dimension}).

Deep tabular models have progressed beyond standard multilayer perceptrons (MLPs) to include regularization-focused architectures~\citep{kadra2023well}, logic-aware networks~\citep{abutbul2020dnf}, and hybrid tree-neural models~\citep{xu2024ncart}. Recent tabular foundation models (TFMs), notably TabPFN~\citep{hollmann2022tabpfn} and TabPFNv2~\citep{hollmann2025tabpfn}, achieve strong generalization through in-context learning. However, their transformer-based designs can incur substantial computational and memory costs due to attention over samples and/or features, often scaling quadratically in the relevant dimension. In addition, inference may require conditioning on training examples, which can limit scalability on large datasets. As a result, practical use of such models is often constrained by dataset size, feature dimensionality, and memory budget~\citep{mueller2025mothernet, TabPFNv2limitation, grinsztajn2025tabpfn}.

\textbf{Feature Interaction and Scalability.}
Transformer-based tabular models, such as TabTransformer~\citep{huang2020tabtransformer} and FT-Transformer~\citep{gorishniy2021revisiting}, enable end-to-end representation learning by using dense self-attention to capture global feature interactions. However, their dense attention mechanisms can be sensitive to feature noise in high-dimensional settings~\citep{TabNet-high,gorishniy2021revisiting}. While expressive, dense attention incurs $\mathcal{O}(D^2)$ computational and memory costs with respect to feature dimension $D$, which becomes expensive for wide tables with hundreds of features. Recent sparse attention variants~\citep{eslamian2025tabnsa, park2024optimizing} reduce this burden through block-diagonal masks, hierarchical patterns, or selective attention mechanisms. Concurrently, State-Space Models (SSMs), such as Mamba~\citep{gu2023mamba}, provide linear-time sequence mixing along the feature axis through selective scan operations, improving scalability and parameter efficiency~\citep{wang2025comprehensive}. However, SSM-based tabular models typically rely on imposed feature ordering and do not explicitly model sparse pairwise feature interactions.

\textbf{Handling Heterogeneous Data and Text Features.}
Modern tabular datasets incorporate high-dimensional textual embeddings alongside structured numerical and categorical fields~\citep{dissanayake2025distilling, TabLLM}. While preprocessing relies on ordinal, label, or one-hot encoding, recent work has shown that pretrained language models can provide semantic representations for text-rich tabular data~\citep{data2023embedding}. Integrating semantic text embeddings, such as those from Sentence-BERT, can improve performance on benchmarks containing textual attributes~\citep{hollmann2025tabpfn, choudhury2025tabard}. TabICL~\citep{TabICL} performs in-context learning at inference time but is designed for classification rather than large-scale regression. BiSHop~\citep{xu2024bishop} models column- and row-wise dependencies via bi-directional sparse Hopfield layers but scales poorly to high-dimensional datasets. MITRA~\citep{Mitra} pretrains on a mixture of synthetic priors for in-context learning but does not consistently outperform TabPFNv2 on high-dimensional regression.

\textbf{Regression Objectives and Difficulty-Aware Learning.}
Beyond standard MSE and MAE, robust losses such as Huber and quantile losses can improve resistance to outliers and heterogeneous noise~\citep{khaki2023guideline}. Distribution-aware and soft-label approaches improve regression calibration by discretizing continuous targets or representing them through label distributions~\citep{li2022rethinking, lin2024tabular}. Curriculum learning and hard-example mining are widely used in classification, but remain less developed in regression, particularly for long-tailed targets and heterogeneous error distributions. These settings motivate training strategies that emphasize difficult samples without destabilizing optimization.

\textbf{Positioning of TabNSM.}
Prior work suggests that scalable tabular regression requires more than reducing the cost of feature mixing. GBDTs provide strong feature selectivity but lack differentiable representation learning; dense-attention models learn flexible interactions but scale poorly with feature dimension; and SSM-based models improve efficiency but do not explicitly impose sparse, instance-adaptive feature interaction structure~\citep{ren2025deep, PATRO2025111279}. Our earlier TabNSA model adapted compressed-block selection and local sparse attention to tabular features, while TabMixer introduced parallel feature- and token-mixing branches~\citep{eslamian2025tabnsa,eslamian2025tabmixer}. TabNSM is a follow-on regression framework: ASIM reuses and adapts those principles rather than claiming a wholly new sparse-attention primitive. The new contribution is their regression-specific integration with a Multi-Stage Regression Head, GridLoss, and RISE. These components jointly address selective feature interaction, ordered continuous targets, and heterogeneous training difficulty.
\section{Methodology} \label{method}
\subsection{Problem Formulation and Preprocessing}
Let $\{(x_i, y_i)\}_{i=1}^{N}$ be a supervised regression dataset where $x_i \in \mathbb{R}^{D}$ represents a sample of $D$ heterogeneous features and $y_i \in \mathbb{R}$ is the continuous target. Our objective is to learn a mapping $f_\Theta : \mathbb{R}^{D} \rightarrow \mathbb{R}$ that minimizes a robust regression objective while maintaining scalability in the high-dimensional regime ($D \gg 10^2$).

To address the heterogeneity of tabular data, we unify numerical, categorical, and textual features into a dense representation. Following standard practice \citep{pedregosa2011scikit}, we categorize columns into numerical, low-cardinality categorical, and high-cardinality or free-text features.

Numerical features are standardized to zero mean and unit variance. Low-cardinality categorical variables (fewer than 10 unique values) are integer-encoded; this avoids the dimensionality expansion of one-hot encoding, while allowing downstream layers to learn category-dependent interactions. Crucially, all transformation parameters are fitted solely on the training split to prevent data leakage.

For high-cardinality and free-text fields (e.g., descriptions or addresses), we use semantic embeddings~\citep{tab2text2024, llmembeddings2025}. Text columns are concatenated per row and encoded with a frozen \texttt{all-MiniLM-L6-v2} Transformer~\citep{reimers2019sentence, wang2020minilm}. A fixed vector $E_i \in \mathbb{R}^{d_{\mathrm{text}}}$ is obtained via masked mean pooling over last-layer states $H_i$ as $E_i=\frac{\sum_t M_{i,t} H_{i,t}}{\sum_t M_{i,t}+\epsilon}$, where $M_{i,t}\in\{0,1\}$. Embeddings are $\ell_2$-normalized. The final input $x_i$ concatenates numerical, categorical, and text features. The approach is zero-shot but compatible with larger LLM encoders (e.g., Gemma) for stronger text--target alignment~\citep{TabLLM}.

\subsection{Architecture Overview}
Building on the published TabNSA--TabMixer backbone~\citep{eslamian2025tabnsa,eslamian2025tabmixer}, TabNSM makes two regression-specific architectural changes:
(i) ASIM couples the inherited structured sparse-attention and FTM pathways through learnable residual blending; and
(ii) a \emph{Multi-Stage Regression Head} applies residually coupled MLP branches followed by a compact fusion head to produce a continuous scalar prediction. Figure~\ref{fig:tabnsm_framework} summarizes the resulting architecture.
\begin{figure}[t]
  \centering
  \includegraphics[width=\linewidth]{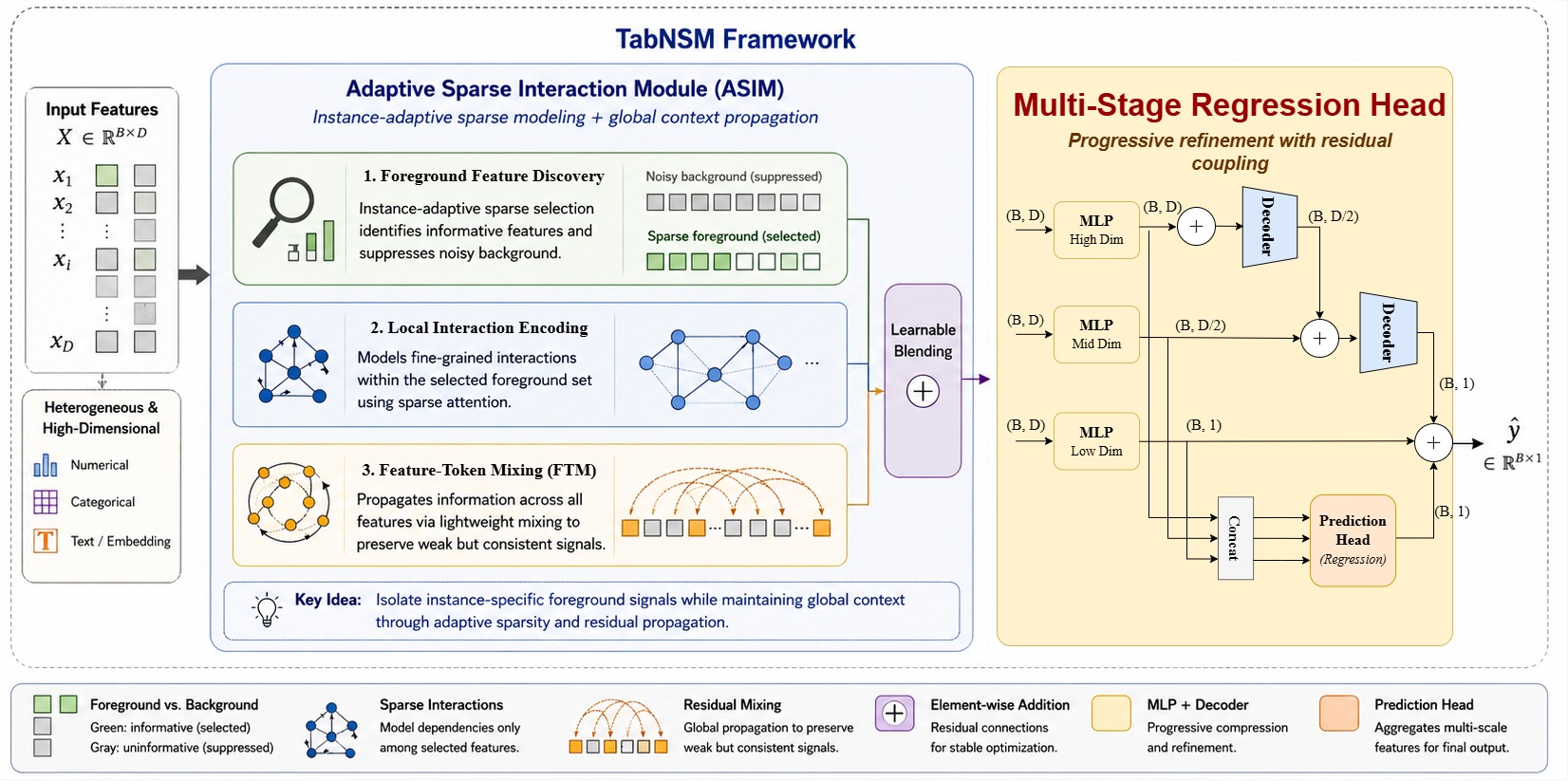}
  \caption{\textbf{Overview of the TabNSM architecture.}
TabNSM extends the published TabNSA--TabMixer representation backbone with regression-specific residual integration and a Multi-Stage Regression Head. Within ASIM, the inherited structured sparse-attention operator provides instance-adaptive feature selection, while the inherited FTM operator propagates broader feature-level context. Learnable residual blending couples these streams, and the resulting representation is passed to progressively narrowed MLP branches with residual coupling and final feature fusion to produce $\hat{y}$.}
\label{fig:tabnsm_framework}
\vspace{-10pt}
\end{figure}
\subsection{Adaptive Sparse Interaction Module (ASIM)}
To model cross-feature interactions at scale, we project each scalar feature into a $d_{\mathrm{tok}}$-dimensional embedding space. Given an input vector $x \in \mathbb{R}^{D}$, each feature $x_d$ is mapped via a learnable projection to obtain the tokenized representation $X_{\mathrm{tok}} = \operatorname{Tok}(x) \in \mathbb{R}^{D \times d_{\mathrm{tok}}}$, where $X_{\mathrm{tok}}[d,:] \in \mathbb{R}^{d_{\mathrm{tok}}}$ denotes the token vector for the $d$-th feature. (Batch dimensions are omitted for clarity).

\textbf{Instance-Adaptive Sparse Feature-wise Attention.}
Following the feature-wise adaptation introduced in TabNSA~\citep{eslamian2025tabnsa}, we use a sparse attention mechanism that treats the $D$ input features as a sequence, where each feature token serves as a query for the current sample. Foreground feature discovery is implemented through three components: (i)
a \emph{local sliding window} of size $s$ that captures local interactions under the chosen feature ordering; (ii) \emph{compressed block representations}, which aggregate blocks of size $b$ with stride $r$ into summary keys using a learnable MLP, whose resulting attention scores identify the most relevant feature groups for each query; and (iii) retrieval of the top-$m$ highest-scoring blocks at full resolution as the \emph{foreground}. Because the attention scores are computed from sample-specific query tokens, the selected blocks vary across inputs, allowing each sample to dynamically isolate its own foreground feature groups:
\begin{equation}
\tilde{X} = \mathcal{A}_{\mathrm{sparse}}\!\left(X_{\mathrm{tok}}; \, s,b,r,m,H,d_h\right) \in \mathbb{R}^{D \times d_{\mathrm{tok}}}.
\label{eq:sparse-attn}
\end{equation}
Here $H$ is the number of attention heads and $d_h$ is the per-head dimension, with $d_{\mathrm{tok}}=H d_h$. This mechanism enables instance-adaptive focus on informative feature-to-feature dependencies and can improve robustness to background noise when the sparse support excludes irrelevant dimensions (Proposition~\ref{prop:noise_short}; see Appendix~\ref{app:sparse_attn} for the full formulation).

\textbf{Feature-Token Mixing (FTM) and Residual Blending.}
To capture global dependencies that might fall outside the sparse attention mask, we adapt the dual-axis mixing design of TabMixer~\citep{eslamian2025tabmixer} as an FTM operator $\mathcal{M}(\cdot)$. It performs per-instance mixing across the embedding and feature dimensions using lightweight MLPs with parallel branches (Appendix~\ref{app:tabmixer}). To stabilize optimization, we combine the sparse and original token streams through learnable scalars $w_1,w_2,w_3\in\mathbb{R}$. All intermediate tensors share the same shape $\mathbb{R}^{D\times d_{\mathrm{tok}}}$ as $X_{\mathrm{tok}}$:
\begin{align}
U &= w_1 X_{\mathrm{tok}}+\tilde{X}, \\
Z &= \mathcal{M}(U), \\
Y &= w_2 U+Z+w_3X_{\mathrm{tok}}.
\end{align}

Finally, we aggregate across the token axis to return to the feature space via 
$y_{\mathrm{feat}}=\operatorname{mean}_{\mathrm{tok}}(Y)\in\mathbb{R}^{D}$. 
We refer to this module as the Adaptive Sparse Interaction Module (ASIM); it yields a feature-level representation enriched by adaptive selectivity and global mixing while maintaining linear memory complexity in $D$ under fixed sparse settings.

\begin{proposition}[ASIM Complexity (Informal)]
\label{prop:ASIM_complexity_brief}
Under the notation above, let the sparse attention operator attend to at most $L=L(s,b,r,m)$ keys per query feature. For fixed token dimension $d_{\mathrm{tok}}=H d_h$, number of heads $H$, and head dimension $d_h$, one ASIM layer has sparse-attention complexity $\mathcal{O}(D H d_h L)=\mathcal{O}(D d_{\mathrm{tok}} L)$ and activation memory $\mathcal{O}(D(d_{\mathrm{tok}}+L))$. 
Under fixed sparse hyperparameters, fixed token dimension, and fixed feature mini-batch size $B_f$, both compute and memory scale linearly with the feature dimension $D$.
\end{proposition}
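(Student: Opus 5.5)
The argument is a direct operation count over the pieces of one ASIM layer: tokenization, the sparse attention operator $\mathcal{A}_{\mathrm{sparse}}$, the FTM operator $\mathcal{M}$, the residual blends, and the mean over tokens. For each piece we bound arithmetic cost and the activations that must be stored, then add the bounds. No deep tool is needed. The two things to get right are (a) that each query really touches at most $L(s,b,r,m)$ keys, and (b) that nothing in the layer builds a $D\times D$ object.

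\textbf{Cheap pieces.} Tokenization applies a learnable projection to each of the $D$ scalars. This costs $\mathcal{O}(D\,d_{\mathrm{tok}})$ time and memory. The blends $U$, $Y$ and the mean pooling are elementwise or row-wise operations on $D\times d_{\mathrm{tok}}$ tensors, so they are also $\mathcal{O}(D\,d_{\mathrm{tok}})$. The $Q,K,V$ projections cost $\mathcal{O}(D\,d_{\mathrm{tok}}^2)$, which is $\mathcal{O}(D)$ for fixed $d_{\mathrm{tok}}$.

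\textbf{Sparse attention.} There are three branches.
\begin{itemize}
\item \emph{Sliding window.} Each query scores at most $s$ keys per head.
\item \emph{Compression.} There are $\lceil (D-b)/r\rceil+1=\mathcal{O}(D/r)$ blocks. Each is summarized by an MLP on $b\,d_h$ inputs, costing $\mathcal{O}((D/r)\,b\,d_h^2H)$ in total. The subtle point is that if every query scored every compressed key, this branch would cost $\mathcal{O}(D\cdot D/r)$, which is quadratic.
\item \emph{Selection.} Each query retrieves the top-$m$ blocks, so it attends to $mb$ full-resolution keys.
\end{itemize}

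The statement is phrased in terms of $L$, so I will define $L$ to include every key a query touches: $L=s+mb+n_c$, where $n_c$ is the number of compressed keys a query sees. Under the TabNSA feature-mini-batch implementation, queries are processed in feature chunks of fixed size $B_f$, and compressed keys are restricted to a bounded window. I expect this to make $n_c$ a constant independent of $D$, though this has to be checked against Appendix~\ref{app:sparse_attn}.

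Given that, each query–key score and each weighted value sum cost $\mathcal{O}(d_h)$ per head. Over $D$ queries, $H$ heads and $L$ keys this gives $\mathcal{O}(DHd_hL)=\mathcal{O}(D\,d_{\mathrm{tok}}L)$. Top-$m$ selection over $n_c$ scores adds $\mathcal{O}(D n_c)$, which is dominated. For memory, each query stores $L$ attention weights, plus the $D\times d_{\mathrm{tok}}$ token, key and value tensors, which gives $\mathcal{O}(D(d_{\mathrm{tok}}+L))$.

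\textbf{FTM.} The token-mixing MLP acts on the $d_{\mathrm{tok}}$ axis and costs $\mathcal{O}(D\,d_{\mathrm{tok}}\,h_t)$. The feature-mixing branch is the delicate part. A dense $D\times D$ mixing matrix would break linearity. I will argue from Appendix~\ref{app:tabmixer} that feature mixing either acts within fixed-size feature mini-batches of size $B_f$, or goes through a fixed-width bottleneck. Either way its cost is $\mathcal{O}(D\,d_{\mathrm{tok}}B_f)$ with $\mathcal{O}(D\,d_{\mathrm{tok}})$ activations. This is why the statement needs the explicit hypothesis on fixed $B_f$.

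\textbf{Main obstacle.} The hard step is justifying that both the compressed-key scoring and the FTM feature-axis mixing are bounded per feature independently of $D$, rather than global. I will handle it by pinning down from the appendices which keys and features each operation touches, and then absorbing those counts into $L$ and $B_f$. Once that is done, summing the pieces with $s,b,r,m,H,d_h,d_{\mathrm{tok}},B_f$ fixed gives $\mathcal{O}(D)$ time and memory, which is the claimed linear scaling.
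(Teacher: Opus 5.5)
Your route is the same as the paper's. You do a term-by-term count of arithmetic and stored activations: projections $\mathcal{O}(D d_{\mathrm{tok}}^2)$, sparse attention $\mathcal{O}(DHd_hL)$ with $\mathcal{O}(D(d_{\mathrm{tok}}+L))$ storage, FTM feature mixing in disjoint chunks of size $B_f$, and blends and pooling $\mathcal{O}(Dd_{\mathrm{tok}})$, all summed under fixed hyperparameters. Your FTM bound $\mathcal{O}(Dd_{\mathrm{tok}}B_f)$ is, if anything, more careful than the paper's $\mathcal{O}(Dd_{\mathrm{tok}}+DB_f)$.

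The problem is the step you single out as the main obstacle. You intend to show that the number $n_c$ of compressed keys a query sees does not depend on $D$. You plan to get this from query chunking of size $B_f$ and a bounded compressed-key window, but neither exists in the paper:
\begin{itemize}
\item $B_f$ belongs only to the feature-wise branch of FTM (Appendix~\ref{app:tabmixer}).
\item In Appendix~\ref{app:sparse_attn}, every query scores all $\lfloor (D-b)/r\rfloor+1$ compressed keys and ranks all blocks to pick the top $m$. The total context is therefore $N=2s+\lfloor (D-b)/r\rfloor+mb$.
\end{itemize}
For fixed $r$ this grows linearly in $D$. The compressed branch and the selection scoring then cost $\Theta(D^2 d_{\mathrm{tok}}/r)$, which is exactly the quadratic term you were worried about. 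The check you deferred comes back negative, so that step cannot be carried out as planned.

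The paper's proof does not derive this bound either. It simply takes ``each query attends to at most $L=L(s,b,r,m)$ positions'' as the hypothesis of the proposition and counts from there. You should finish the same way, by invoking the hypothesis in the statement rather than extracting it from the architecture. Then $n_c\le L$ is a $D$-independent constant by assumption, the top-$m$ selection overhead is absorbed into $\mathcal{O}(DHd_hL)$, and your sum gives linear compute and memory.

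You should also state plainly that the linear scaling is conditional on that hypothesis. The operator as literally written in Appendix~\ref{app:sparse_attn} satisfies it only if the compressed branch is windowed or $r$ grows proportionally with $D$. With $(s,b,r,m)$ fixed, its cost is quadratic in $D$ (with a $1/r$ factor), not linear.
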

The formal statement and proof are in Appendix~\ref{sec_proof_ASIM}.

The following proposition gives a simplified conditional analysis: if the sparse support covers the informative features while excluding most background dimensions, then sparse aggregation improves the signal-to-noise ratio relative to dense aggregation.
\begin{proposition}[Conditional SNR Improvement under Sparse Feature Selection]
\label{prop:noise_short}
Let $\mathbf{x}=[\mathbf{x}_{\mathcal{S}};\mathbf{x}_{\mathcal{N}}]\in\mathbb{R}^D$, where $\mathcal{S}$ indexes $s$ informative features with average signal $\mu$ and $\mathcal{N}$ indexes i.i.d.\ zero-mean noise features with variance $\sigma^2$. Consider dense aggregation over all $D$ features and sparse aggregation over a support $\mathcal{T}$ satisfying $\mathcal{S}\subseteq\mathcal{T}$ and $|\mathcal{T}|=mb$. Then, for $mb>s$,
\[
\mathrm{SNR}_{\mathrm{dense}}
=
\frac{s|\mu|}{\sigma\sqrt{D-s}},
\qquad
\mathrm{SNR}_{\mathrm{sparse}}
=
\frac{s|\mu|}{\sigma\sqrt{mb-s}}.
\]
Thus, when the sparse support covers the informative features while excluding most background dimensions, sparse aggregation improves SNR by
\[
\frac{\mathrm{SNR}_{\mathrm{sparse}}}{\mathrm{SNR}_{\mathrm{dense}}}
=
\sqrt{\frac{D-s}{mb-s}}.
\]
In the ideal case $mb=s$, the sparse support contains no background noise, so the noise contribution vanishes under this simplified model.
\end{proposition}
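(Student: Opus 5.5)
The plan is to make the simplified model explicit and compute the signal and noise parts of each aggregate directly. I will model aggregation as an unweighted sum over a support set: dense aggregation is $A_{\mathrm{dense}}=\sum_{j=1}^{D}x_j$ and sparse aggregation is $A_{\mathrm{sparse}}=\sum_{j\in\mathcal{T}}x_j$. Using a mean instead of a sum rescales signal and noise by the same factor, so the SNR is unchanged. The informative coordinates are treated as deterministic with $\sum_{j\in\mathcal{S}}x_j=s\mu$; this is what ``average signal $\mu$'' means. The coordinates of $\mathcal{N}$ are i.i.d. with mean zero and variance $\sigma^2$. I will define SNR as the absolute value of the mean of the aggregate divided by its standard deviation.

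\textbf{Dense case.} By linearity of expectation, $\mathbb{E}[A_{\mathrm{dense}}]=s\mu$. Independence gives $\mathrm{Var}(A_{\mathrm{dense}})=\sum_{j\in\mathcal{N}}\mathrm{Var}(x_j)=(D-s)\sigma^2$, because $|\mathcal{N}|=D-s$. Hence $\mathrm{SNR}_{\mathrm{dense}}=s|\mu|/(\sigma\sqrt{D-s})$.

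\textbf{Sparse case.} Since $\mathcal{S}\subseteq\mathcal{T}$, the support contains all of the signal, and $\mathbb{E}[A_{\mathrm{sparse}}]=s\mu$. The noise coordinates inside the support are $\mathcal{T}\setminus\mathcal{S}$, which has $mb-s$ elements. So $\mathrm{Var}(A_{\mathrm{sparse}})=(mb-s)\sigma^2$, and for $mb>s$ we get $\mathrm{SNR}_{\mathrm{sparse}}=s|\mu|/(\sigma\sqrt{mb-s})$. Dividing the two expressions gives the ratio $\sqrt{(D-s)/(mb-s)}$.

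\textbf{Degenerate case $mb=s$.} Here $\mathcal{T}=\mathcal{S}$, so the variance is zero. The noise term vanishes, and the SNR is infinite or undefined under this model. This is exactly the ideal case the statement describes.

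There is no real technical obstacle. The main care point is the modelling step: the support $\mathcal{T}$ must be fixed, or chosen independently of the noise values. If $\mathcal{T}$ depended on the realized noise, as data-dependent top-$m$ selection can, the summed noise would no longer be a sum of $mb-s$ independent zero-mean terms with variance $\sigma^2$. I will therefore state that assumption explicitly and note that attention weights are idealized as uniform over the support, which is the sense of ``simplified model'' in the statement.
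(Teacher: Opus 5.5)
Your proposal is correct and follows the paper's argument. Both treat the informative coordinates as fixed, take the signal to be $s\mu$, and get the noise variance from the $D-s$ (resp.\ $mb-s$) independent noise coordinates; the only difference is cosmetic, since you aggregate by sums while the paper uses the means $\frac{1}{D}\sum_{d=1}^{D}x_d$ and $\frac{1}{mb}\sum_{d\in\mathcal{T}}x_d$, which rescales signal and noise standard deviation by the same factor and leaves the SNR unchanged. Your caveat that $\mathcal{T}$ must be fixed independently of the noise realization is an assumption the paper relies on without stating it.
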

The full statement and proof are provided in Appendix~\ref{app:noise_proof}. 

\subsection{Multi-Stage Feature Refinement}
The TabNSM head processes the enriched representation $y_{\text{feat}}\in\mathbb{R}^{D}$ from the ASIM through a multi-scale residual pipeline. This design enables the model to iteratively refine predictions via coarse-to-fine pathways. 

\textbf{Progressive Branches.}
Rather than using a purely sequential bottleneck, TabNSM employs residually coupled branches that repeatedly access the full representation $y_{\mathrm{feat}}$ while progressively reducing dimensionality:
\begin{align}
y_1 &= \mathrm{Dec}_1\!\left(\mathrm{MLP}_1(y_{\mathrm{feat}}) + y_{\mathrm{feat}}\right) \in \mathbb{R}^{D/2}, \label{eq:branch1}\\
y_2 &= \mathrm{Dec}_2\!\left(\mathrm{MLP}_2(y_{\mathrm{feat}}) + y_1\right) \in \mathbb{R}^{1}, \label{eq:branch2}\\
y_3 &= \mathrm{MLP}_3(y_{\mathrm{feat}}) + y_2 \in \mathbb{R}^{1}, \label{eq:branch3}
\end{align}
where $\mathrm{MLP}_1: \mathbb{R}^D \to \mathbb{R}^D$, $\mathrm{MLP}_2: \mathbb{R}^D \to \mathbb{R}^{D/2}$, and $\mathrm{MLP}_3: \mathbb{R}^D \to \mathbb{R}^1$ are shallow networks with GELU activations. The decoders $\mathrm{Dec}_1$ and $\mathrm{Dec}_2$ facilitate dimensionality reduction. In this configuration, Eq.~\eqref{eq:branch2} performs residual learning in the intermediate $D/2$ space by combining the coarse representation $y_1$ with a secondary feature-informed projection, while Eq.~\eqref{eq:branch3} provides the final scalar refinement. This architecture ensures that even the final prediction remains grounded in the high-dimensional feature interactions encoded by $y_{\mathrm{feat}}$.

\textbf{Concatenation and Feature Fusion Head.} In tandem with these coupled branches, the multi-stage outputs are concatenated and processed by a compact fusion head to produce the final prediction:
\begin{equation}
\hat{y} = h\left([y_1, y_2, y_3]\right) + y_2 + y_3,
\label{eq:fusion}
\end{equation}
where $[\cdot]$ denotes concatenation and $h: \mathbb{R}^{D/2 + 2} \to \mathbb{R}$ is a shallow fusion MLP. This additive fusion couples the progressively narrowed signals from the final branches ($y_2$ and $y_3$) with a multi-scale summary of the entire cascade. By processing the concatenated vector $[y_1, y_2, y_3]$, the fusion head learns residual corrections that integrate coarse and fine-grained information, improving both predictive stability and expressivity at negligible computational cost.

\paragraph{Complexity Remarks.}
For fixed token dimension $d_{\mathrm{tok}}$, number of heads $H$, head dimension $d_h$, and sparse hyperparameters $(s,b, r, m)$, the computational complexity of TabNSM is dominated by three terms:
(i) \textit{Sparse Attention:} $\mathcal{O}(D d_{\mathrm{tok}} L)$, where $L$ is the number of attended keys per feature token, ensuring sub-quadratic scaling in $D$;
(ii) \textit{FTM:}  $O(Dd_{\mathrm{tok}} + DB_f)$ under fixed token dimension and feature mini-batch size $B_f$; and 
(iii) \textit{TabNSM Head:} $\mathcal{O}(D)$ through linear-time projections and shallow MLPs.
Empirically, this architecture achieves near-linear scaling with respect to the feature dimension $D$ under practical sparse mixer configurations, making it suitable for high-dimensional tabular datasets.

\subsection{GridLoss: Differentiable Soft-Binning for Ordinal Alignment}
Regression losses such as $\ell_1$ and Huber penalize pointwise errors but do not explicitly enforce \emph{ordinal alignment} between predictions and targets. To preserve relative positions along the target scale, we introduce GridLoss, a differentiable soft-binning objective that aligns predictions and targets in a shared discretized index space.

\textbf{Batch-Adaptive Grid Construction.} For a mini-batch of size $N$, let $\hat{y}, y \in \mathbb{R}^N$ denote the predictions and targets. We define a batch-adaptive interval $[\alpha,\beta]$ as:
\begin{equation}
\alpha = \min(\min(\hat{y}), \min(y)), \ \beta = \max(\max(\hat{y}), \max(y)).
\end{equation}
We construct a uniform grid of $K+1$ anchor points $e_k = \alpha + k \cdot \frac{\beta-\alpha}{K}$ for $k \in \{0, \dots, K\}$. A small constant $\epsilon > 0$ (e.g., $10^{-8}$) is added to $\beta-\alpha$ to prevent degenerate ranges. In implementation, $\alpha$ and $\beta$ are computed per mini-batch and detached from the computation graph.

\textbf{Soft Assignment via Temperature-Scaled Distances.} We compute the distances of each sample to all anchor points: $d^{\text{pred}}_{k} = |\hat{y} - e_k|$ and $d^{\text{true}}_{k} = |y - e_k|$. These are converted into soft assignments via a temperature-scaled softmax:
\[
w^{\text{pred}}_{k}= \frac{\exp(-d^{\text{pred}}_{k}/\tau)} {\sum_{j=0}^{K}\exp(-d^{\text{pred}}_{j}/\tau)}, \quad
w^{\text{true}}_{k}= \frac{\exp(-d^{\text{true}}_{k}/\tau)} {\sum_{j=0}^{K}\exp(-d^{\text{true}}_{j}/\tau)},
\] 
where $\tau > 0$ controls the assignment softness. We then compute soft indices as expectations over the anchors: $b^{\text{pred}} = \sum_{k=0}^{K} k w^{\text{pred}}_{k}$ and $b^{\text{true}} = \sum_{k=0}^{K} k w^{\text{true}}_{k}$.

\textbf{Objective Function.} The GridLoss objective is the mean $\ell_1$ distance between the soft indices:
\begin{equation}
\mathcal{L}_{\text{Grid}} = \frac{1}{N} \sum_{i=1}^{N} |b^{\text{pred}}_i - b^{\text{true}}_i|.
\end{equation}
GridLoss operates in an ordinal index space, encouraging predictions to fall into the correct relative region of the target distribution. As formally analyzed in Theorem~\ref{thm:gridloss_consistency} and Proposition~\ref{prop:gridloss_interpolation} (Appendix~\ref{app:gridloss_consistency}), under the high-resolution and low-temperature limit ($K \to \infty$, $\tau \to 0^+$), the normalized objective approaches an MAE-like ordinal distance. Thus, GridLoss provides a smooth, structure-aware approximation to robust pointwise regression losses while preserving ordinal information in the target space.

GridLoss has three useful properties: its soft index is monotone in the prediction, its gradient is bounded by $K/(2\tau)$, and it is Lipschitz continuous. 
For finite $\tau$, these properties imply stable gradients, while the high-resolution, low-temperature limit yields an MAE-like behavior; formal statements and proofs are provided in Appendix~\ref{sec_GridlossProperties}.

\subsection{Training Strategy with RISE Sampling} \label{sec_RISE_sampling}
Standard regression pipelines typically construct mini-batches via uniform sampling, implicitly treating all training instances as equally informative. In high-dimensional tasks with long-tailed error distributions, this can underrepresent ``hard'' samples that often correspond to minority subpopulations or extreme target tails, leading to suboptimal generalization. To mitigate this, we adopt a difficulty-aware sampling strategy termed RISE (Reweighted Instance Sampling by Error). RISE periodically estimates per-sample prediction errors and dynamically biases the sampling distribution toward harder examples. It is conceptually related to the non-uniform sampling used by GRIP~\citep{GRIP}, but RISE operates on regression examples and assigns bounded weights from empirical loss-quantile bins rather than sampling policy trajectories.

The RISE-enhanced training procedure adapts the model's focus as convergence progresses. Every $k$ epochs, we refresh the sampling distribution via the following steps:
\begin{enumerate}[label=\textbf{\arabic*.},leftmargin=*,topsep=2pt,itemsep=1pt,parsep=1.5pt]
\item \textbf{Loss Estimation:} Compute per-sample losses $\ell_i$ for the entire training set using the current parameters $\theta_e$.
\item \textbf{Quantile Partitioning:} Partition samples into $G$ difficulty bins based on the loss quantiles.
\item \textbf{Weight Assignment:} Assign a sampling weight $w_i = 1 + g_i \cdot \frac{\beta - 1}{G - 1}$, where $g_i \in \{0, \dots, G-1\}$ is the bin index and $\beta \ge 1$ is the maximum boost factor.
\item \textbf{Weighted Sampling:} Draw mini-batches using a weighted random sampler with probabilities $p_i \propto w_i$.
\end{enumerate}
By upweighting bins with larger errors, RISE implements a dynamic curriculum that prioritizes samples currently underfit by the model or lying in difficult regions of the target distribution.
We provide a theoretical interpretation of this mechanism below; see Appendix~\ref{sec_objective_interpretation} for the formal statement and proof.
\begin{proposition}[RISE as a Tail-aware Objective (Informal)]
\label{prop:rise_brief}
Let $\ell_i(\theta)$ denote the per-sample loss. At refresh step $e$, RISE assigns weights $w_i^{(e)} = \varphi(q_i^{(e)})$ based on the loss-quantile bin $q_i^{(e)}$. If mini-batches are drawn by weighted sampling with probabilities $p_i^{(e)} \propto w_i^{(e)}$ and updates are performed using sampled gradients without importance-weight correction, the expected SGD step is equivalent to descending the \emph{reweighted} empirical risk:
\begin{equation}
\mathcal{L}_{\mathrm{RISE}}^{(e)}(\theta) = \frac{\sum_{i=1}^N w_i^{(e)}\,\ell_i(\theta)}{\sum_{j=1}^N w_j^{(e)}}.
\label{eq:rise_objective}
\end{equation}
Thus, a monotonically increasing $\varphi(\cdot)$ ensures the optimization emphasizes the tail of the error distribution.
\end{proposition}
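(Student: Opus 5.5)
The argument is a direct computation of the expected stochastic gradient under the RISE sampler. Fix a refresh step $e$ and hold the weights $w_i^{(e)}$ fixed between refreshes. These weights are computed from losses at $\theta_e$, and in implementation they act as constants with respect to the current parameter $\theta$. Write $p_i^{(e)} = w_i^{(e)}/W^{(e)}$ with $W^{(e)}=\sum_j w_j^{(e)}>0$; this is positive because $\beta\ge 1$ gives $w_i\ge 1$.

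\textbf{Single draw.} Suppose an index $I$ is drawn with $\Pr(I=i)=p_i^{(e)}$ and the update direction is the raw sampled gradient $\nabla\ell_I(\theta)$, with no importance correction. Then
\begin{equation*}
\mathbb{E}_I\big[\nabla \ell_I(\theta)\big]=\sum_{i=1}^N \frac{w_i^{(e)}}{W^{(e)}}\nabla\ell_i(\theta)=\nabla_\theta\,\mathcal{L}^{(e)}_{\mathrm{RISE}}(\theta).
\end{equation*}
The last equality uses linearity of the gradient and the fact that the weights do not depend on $\theta$.

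\textbf{Mini-batches.} For a mini-batch of size $B$ drawn i.i.d. with replacement, which is the default for a weighted random sampler, the averaged gradient $\frac1B\sum_{t}\nabla\ell_{I_t}(\theta)$ has the same expectation by linearity. The expected SGD step $-\eta\,\mathbb{E}[g]$ therefore equals $-\eta\nabla\mathcal{L}^{(e)}_{\mathrm{RISE}}(\theta)$. In other words, the update is an unbiased stochastic gradient step on the reweighted risk \eqref{eq:rise_objective}.

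\textbf{Tail emphasis.} Compare the normalized weights with the uniform weight $1/N$. If $\varphi$ is nondecreasing in the bin index, then the bin indices $q_i^{(e)}$ are ordered consistently with the losses $\ell_i(\theta_e)$. Hence $\ell_i(\theta_e)\ge\ell_j(\theta_e)$ implies $w_i^{(e)}\ge w_j^{(e)}$. It follows that $\mathcal{L}^{(e)}_{\mathrm{RISE}}$ places mass at least uniform on the upper loss quantiles. A Chebyshev sum inequality, applied to two similarly ordered sequences, gives $\mathcal{L}^{(e)}_{\mathrm{RISE}}(\theta_e)\ge \frac1N\sum_i\ell_i(\theta_e)$.

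In the linear choice $w_i=1+g_i(\beta-1)/(G-1)$, the ratio between the largest and smallest weights is $\beta$. This makes the objective a bounded interpolation between ERM ($\beta=1$) and an objective concentrated on the top bin.

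\textbf{Main obstacle.} The delicate step is justifying that the weights are constant in $\theta$, since they are derived from the losses themselves. I would handle this by treating the objective as piecewise-defined: within each refresh window of $k$ epochs the weights are frozen, so the equivalence holds exactly there. Sampling without replacement within an epoch would change the expectation only through the inclusion probabilities. I would note this as a remark and keep the formal claim to with-replacement sampling.
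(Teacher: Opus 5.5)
Your argument is correct and follows the same route as the paper's proof. You freeze the weights within a refresh window, write $\mathbb{E}_{i\sim p^{(e)}}[\nabla_\theta\ell_i(\theta)]=\sum_i p_i^{(e)}\nabla_\theta\ell_i(\theta)$, and move the gradient outside the sum because the $p_i^{(e)}$ do not depend on $\theta$; your with-replacement mini-batch extension and the Chebyshev-sum bound $\mathcal{L}^{(e)}_{\mathrm{RISE}}(\theta_e)\ge\frac1N\sum_i\ell_i(\theta_e)$ are valid sharpenings of the paper's purely qualitative tail-emphasis remark.

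One side comment needs correcting: for the linear weights, letting $\beta\to\infty$ gives $p_i^{(e)}\to g_i/\sum_j g_j$, which is concentrated on the top bin only when $G=2$, not in general.
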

\noindent\textbf{Remark.} Standard Importance Sampling applies an inverse-probability correction ($1/p_i$) to keep the gradient estimator unbiased. RISE \emph{intentionally} omits this correction to prioritize high-loss samples, thus minimizing a risk functional that favors robustness in difficult feature-space regions.

\section{Experiments and Results} \label{results}

We evaluate TabNSM on nine real-world regression benchmarks spanning robotics, transportation, public health, criminology, real estate, air quality, and neuroimaging. Dataset statistics are reported in Table~\ref{table:datasets}, with detailed descriptions in Appendix~\ref{app:datasets}. We use a 72/8/20 train/validation/test split with fixed random partitions across methods. Samples with missing targets are removed, missing feature values are imputed with zero, and continuous features are standardized using training-set statistics. Models are trained for up to 100 epochs with early stopping on validation loss; test performance is reported from the best validation checkpoint. Additional seed-variation analysis is provided in Appendix~\ref{appendix:seed}.

\paragraph{Training protocol.}
The ASIM hyperparameters are selected via Optuna~\citep{optuna} on the validation set and fixed thereafter. The sparse operator $\mathcal{A}_{\mathrm{sparse}}$ is implemented using structured sparse attention (Eq.~\ref{eq:sparse-attn}), while $\mathcal{M}$ is instantiated as an FTM operator for global feature interaction. Unless otherwise stated, TabNSM is trained with AdamW, validation-based scheduling, mixed precision for first-order optimizers, and RISE-based sampling.

\paragraph{Main results.}
Table~\ref{tab:regression_results_compact} summarizes RMSE results against 38 baselines grouped into classical methods, tree ensembles, MLP-style models, transformer-based models, retrieval-based methods, other deep tabular architectures, and tabular foundation models (TFMs). Baseline implementation details are provided in Appendix~\ref{app:baselines}. To keep the main table compact, we report the best-performing baseline within each model family; the full per-model comparison is provided in Appendix~\ref{app:full_results}.

\begin{table}[!htbp]
\centering
\caption{RMSE on nine regression benchmarks. Lower is better. Best results are shown in \textbf{bold}, and second-best results are \underline{underlined}. We report the best-performing baseline within each model family; full per-model results are provided in Appendix~\ref{app:full_results}.}
\label{tab:regression_results_compact}
\begin{threeparttable}
\resizebox{\columnwidth}{!}{%
\begin{tabular}{lccccccccc}
\toprule
\textbf{Model} & SA & CP & AQ & TO & SC & CD & ChD & EVP & RES \\
\midrule
Best Classical  (6 models)  & 0.175 & 3.719 & 12.816 & \underline{0.027} & 3.602 & 0.236 & 8717 & 26.104 & 3163 \\
Best Tree-based (5 models)  & 0.171 & \textbf{2.617} & 5.869 & \underline{0.027} & 3.455 & \textbf{0.004} & 17525 & 2.799 & \underline{3135} \\
Best MLP-style  (5 models)  & 0.167 & 2.920 & 4.578 & 0.028 & 3.920 & 0.028 & 4963 & 1.556 & 3161 \\
Best Transformer (6 models) & 0.156 & 2.901 & \underline{4.081}& 0.028 & 3.667 & \underline{0.007} & \underline{4475} & \underline{1.507} & 3163 \\
Best Retrieval  (4 models)  & 0.150 & 2.833 & 5.062 & 0.028 & \underline{3.454} & 0.052 & 22661 & 2.653 & 3187 \\
Best Other      (8 models)  & \underline{0.149} & 2.991 & 4.690 & 0.028 & 3.570 & 0.136 & 4808 & 3.219 & 3163 \\
Best Foundation (4 models)  & 0.168 & \underline{2.740} & 6.748 & \underline{0.027} & 3.522 & 5.253 & 17858 & 65.471 & 4171 \\
\midrule
\textbf{TabNSM (Ours)} & \textbf{0.137} & 2.842 & \best{3.999} & \textbf{0.021} & \textbf{3.322} & \underline{0.007} & \textbf{4460} & \textbf{1.199} & \textbf{3103} \\
\bottomrule
\end{tabular}%
}
\begin{tablenotes}[flushleft]
\footnotesize
\item[]Per-family best is the minimum RMSE among runnable models in that family. Full per-model results are provided in Table~\ref{tab:regression_results} \\
in Appendix~\ref{app:full_results}.
\end{tablenotes}
\end{threeparttable}
\end{table}

TabNSM achieves the lowest RMSE on seven of nine benchmarks (SA, TO, AQ, SC, ChD, EVP, RES). 
Pairwise Wilcoxon signed-rank tests over the benchmark suite show statistically significant aggregate gains over the baselines after correction ($p \le 0.02$; Appendix~\ref{app:stats}). The largest margins appear on TO (+22\%, 0.021 vs.\ 0.027 for all competing models) and EVP (+21\%, 1.199 vs.\ 1.507 for FT-Transformer), two benchmarks with high feature dimensionality and structured or complex targets. On SA, TabNSM surpasses DNNR (0.149$\to$0.137), the strongest deep learning-based baseline. On AQ, TabNSM outperforms DCN-v2 (3.999 vs.\ 4.081) despite DCN-v2's explicit cross-network interaction design. 
On CD, TabNSM ranks second to Extratrees (0.007 vs. 0.004); on CP, a medium-sized, low-dimensional benchmark, tree-based ensembles dominate, making it the only dataset where TabNSM does not place in the top two.

\paragraph{Result interpretation.}
TabNSM is most effective on high-dimensional, interaction-rich datasets. On EVP (392 features) and ChD (405 features), ASIM's instance-adaptive foreground selection helps isolate informative subsets from a large noisy feature background, 
while FTM propagates global context without dense quadratic cost. On AQ, sparse selection also improves over DCN-v2's explicit cross-network mixing, suggesting that instance-specific interaction structure is important. The gains on TO are consistent with the role of GridLoss in preserving ordinal target alignment, while the strong performance on SA and RES suggests that RISE can help stabilize learning under heterogeneous target distributions.

\subsection{Ablation of Sparse Interaction Modeling}\label{sec_ablation_NSA}
We ablate ASIM by replacing its sparse feature-wise attention with either full self-attention or a Mamba block operating along the feature axis, while keeping the downstream head and training protocol fixed. As shown in Figure~\ref{fig:ablation}, full attention is unstable in high-dimensional settings, while Mamba improves efficiency but still underperforms ASIM. This indicates that linear-time mixing alone is insufficient: instance-adaptive sparse feature selection is important for robust high-dimensional regression. These findings are consistent with Proposition~\ref{prop:noise_short}, which shows that sparse selection can improve SNR when the selected support covers informative features and excludes many background dimensions. Per-instance feature attribution heatmaps further confirm that TabNSM activates distinct feature subsets across samples (Appendix~\ref{app:interpretability}).
\begin{figure}[t]
    \centering
    \begin{subfigure}[t]{0.49\linewidth}
        \centering
        \includegraphics[height=3.6cm]{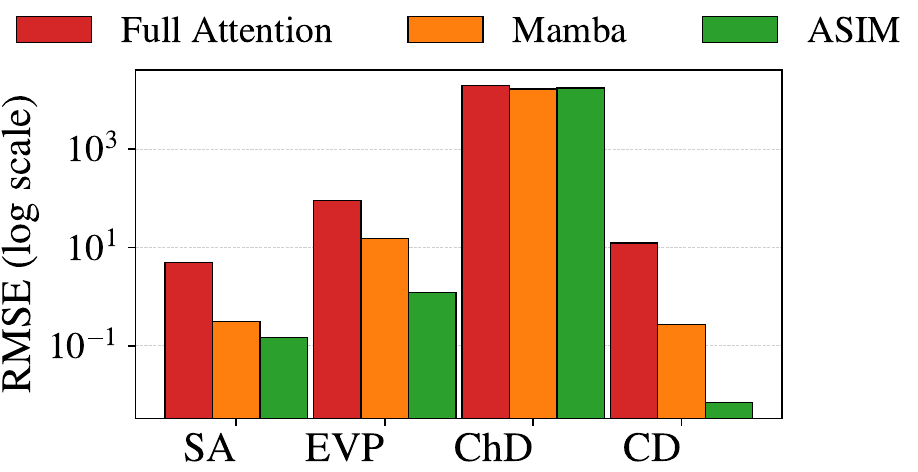}
        \caption{Ablation of sparse interaction modeling.}
        \label{fig:ablation}
    \end{subfigure}
    \hfill
    \begin{subfigure}[t]{0.49\linewidth}
        \centering
        \includegraphics[height=3.6cm, width=5.7cm]{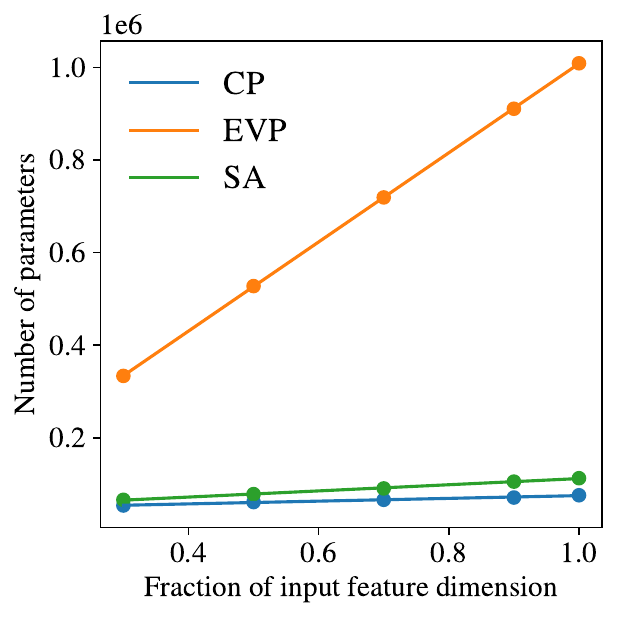}
        \caption{Parameter scaling with feature dimensionality.}
        \label{fig:param_D}
    \end{subfigure}
    \caption{\textbf{Ablation and scalability analyses.}
(a) Full attention, Mamba, and ASIM within TabNSM; ASIM achieves the lowest RMSE. 
(b) Parameter count grows approximately linearly with retained feature fraction, consistent with the ASIM complexity analysis.}
    \label{fig:ablation_scaling}
    \vspace{-10pt}
\end{figure}

\subsection{Scalability Analysis}
\label{sec:scalability}

To complement the empirical accuracy comparison, we examine whether the ASIM component exhibits the near-linear scaling predicted by Proposition~\ref{prop:ASIM_complexity_brief}. We vary the retained feature fraction while keeping the sparse-attention configuration fixed and measure the resulting parameter count. As shown in Figure~\ref{fig:param_D}, the number of parameters grows approximately linearly with the input feature dimension across datasets. This behavior supports the intended design of ASIM: sparse feature-wise interaction modeling avoids dense all-pair feature mixing while retaining sufficient capacity for high-dimensional tabular regression. Additional runtime and memory measurements are provided in Appendix~\ref{Memory}.

\paragraph{Additional component checks.}
Appendix~\ref{appendix:additional_ablation} further evaluates FTM, GridLoss, and RISE. Removing FTM degrades performance on CP, TO, SA, and EVP, especially on EVP, supporting its role in propagating information beyond the sparse attention support. GridLoss matches or improves over standard pointwise losses under heterogeneous errors, while RISE reduces validation variance by prioritizing difficult samples. Together, these results show that TabNSM's gains arise from sparse feature selection, global mixing, structure-aware supervision, and difficulty-aware sampling.

\section{Conclusion and Discussion}
We introduced TabNSM, a regression-focused extension of our earlier TabNSA and TabMixer architectures that combines an adapted sparse interaction module with a Multi-Stage Regression Head, ordinal grid supervision (GridLoss), and difficulty-aware reweighting (RISE). ASIM replaces dense quadratic interaction with sample-specific foreground selection, achieving near-linear complexity while preserving adaptive sparsity. Across nine regression benchmarks, TabNSM achieves the best performance on seven datasets and ranks second on one, outperforming GBDT baselines on most tasks. Ablations show that instance-adaptive selectivity, rather than efficiency alone, is a major performance driver, while the regression-specific head, GridLoss, and RISE provide complementary gains. Although TabNSM requires GPU training and can incur higher overhead than tree-based methods, it achieves substantially lower error than most deep learning baselines with favorable training efficiency. Future work will pursue compression and distillation, extend GridLoss to multi-target settings, and explore foreground--background interactions for multimodal tabular learning.
\newpage

\section{Acknowledgements} We acknowledge the contributors who made the publicly available datasets and baseline models accessible for research purposes. We also extend our sincere thanks to Brian Gold, PhD, and the Gold Lab at the University of Kentucky for their continued support and for providing the facilities required to conduct this study. This work was partially supported by the National Science Foundation (NSF) under Grants IIS-2327113 and ITE-2433190, the National Institutes of Health (NIH) under Grants R21AG070909 and P30AG072946, and by the National Artificial Intelligence Research Resource (NAIRR) Pilot through NSF OAC-240219, as well as the Jetstream2, Bridges2, and Neocortex computing resources.
Data for the SC benchmark were obtained through NACC/SCAN under the NACC Data.
\bibliography{references}
\bibliographystyle{unsrtnat}

\appendix
\onecolumn

\section{Theoretical Intuition of GridLoss}

\paragraph{Gradients and Monotonicity.}
GridLoss maps a scalar value $x$ to a soft grid index
\[
b(x)=\sum_{k=0}^{K} k\,w_k(x),
\qquad
w_k(x)=\frac{\exp(-|x-e_k|/\tau)}
{\sum_{j=0}^{K}\exp(-|x-e_j|/\tau)} .
\]
This mapping can be viewed as a smooth approximation to hard binning: for finite temperature $\tau$, the weights concentrate around nearby grid anchors, so $b(x)$ behaves like a differentiable coordinate on the target grid. The derivative $\partial b(x)/\partial x$ is controlled by the temperature and grid resolution, yielding bounded gradients for finite $\tau$. As $x$ increases, mass shifts from lower-index anchors to higher-index anchors, making $b(x)$ nondecreasing on an ordered grid. Thus, GridLoss provides an ordinal training signal: it encourages predictions not only to reduce pointwise error, but also to occupy the correct relative position along the target scale.

\paragraph{Why Not KL/Wasserstein Distribution Matching.}
Distributional objectives such as KL divergence or Wasserstein distance typically require constructing normalized distributions, such as histograms or kernel-smoothed densities, over predicted and true targets. In per-sample regression, this introduces additional design choices, including binning, smoothing, regularization, and, for optimal transport, potentially iterative solvers. Moreover, batch-level distribution matching can encourage agreement between marginal target distributions without directly enforcing alignment between each prediction--target pair. GridLoss instead performs a sample-wise comparison by mapping each $(\hat{y}_i,y_i)$ pair onto a shared grid and penalizing their soft-index distance. This captures ordinal geometry with low overhead, controlled mainly by the grid resolution $K$ and temperature $\tau$.

\paragraph{Why Pointwise Regression Losses Alone Can Be Insufficient.}
Standard pointwise losses optimize residual magnitudes but do not explicitly encode ordinal placement on the target scale. MSE and RMSE emphasize large residuals and can over-focus on outliers; MAE and robust variants, such as Huber and log-cosh, improve robustness but still compare predictions and targets only through pointwise deviations. MAPE can be unstable near zero targets and may overweight small values, while quantile loss targets conditional quantiles rather than the conditional mean. GridLoss complements these objectives by providing an ordinal, calibration-oriented signal in grid space.

\paragraph{Ablation Protocol.}
We ablate two aspects of GridLoss. First, we compare different regression objectives, including MSE, MAE, Huber, log-cosh, GridLoss, and mixed objectives
\[
\lambda \mathcal{L}_{\mathrm{Huber}} + (1-\lambda)\mathcal{L}_{\mathrm{Grid}},
\qquad
\lambda \in \{1.0,0.9,0.7,0.5,0.3,0.0\}.
\]
Second, we study the sensitivity of GridLoss to its hyperparameters, using
\[
K\in\{50,100,200,500,1000\},
\qquad
\tau\in\{0.1,0.3,1.0,3.0,10.0\}.
\]
In addition to RMSE, MAE, and $R^2$, we report ordinal and calibration-oriented metrics, including Spearman's $\rho$ and bin accuracy within $\pm 1$ bin, to quantify improvements in target-scale alignment.

\section{Proofs of Theoretical Results}

\subsection{ASIM Complexity}
\label{sec_proof_ASIM}

The full statement of Proposition~\ref{prop:ASIM_complexity_brief} is as follows.

\begin{proposition}[Time and Memory Complexity of ASIM]
\label{prop:ASIM_complexity_full}
Let $D$ be the number of scalar features and let $d_{\mathrm{tok}}$ be the token dimension per feature. Let
$X_{\mathrm{tok}}\in\mathbb{R}^{D\times d_{\mathrm{tok}}}$ denote the tokenized feature representation. ASIM computes
\begin{align}
\tilde{X} 
&= \mathcal{A}_{\mathrm{sparse}}(X_{\mathrm{tok}};s,b,r,m,H,d_h)
\in \mathbb{R}^{D\times d_{\mathrm{tok}}}, \\
U 
&= w_1X_{\mathrm{tok}}+\tilde{X}
\in \mathbb{R}^{D\times d_{\mathrm{tok}}}, \\
Z 
&= \mathcal{M}(U)
\in \mathbb{R}^{D\times d_{\mathrm{tok}}}, \\
Y 
&= w_2U+Z+w_3X_{\mathrm{tok}}
\in \mathbb{R}^{D\times d_{\mathrm{tok}}},
\end{align}
followed by $y_{\mathrm{feat}}=\operatorname{mean}_{\mathrm{tok}}(Y)\in\mathbb{R}^{D}$.
Assume the sparse attention operator attends, for each query feature, to at most
\[
L := L(s,b,r,m)
\]
feature positions, where $L$ is determined by the fixed sparse configuration. If $d_{\mathrm{tok}}=H d_h$, then the sparse interaction part of one ASIM layer has complexity
\[
\mathcal{O}(D H d_h L)=\mathcal{O}(D d_{\mathrm{tok}}L)
\]
and activation memory
\[
\mathcal{O}\!\left(D(d_{\mathrm{tok}}+L)\right).
\]
Including the token projections, FTM operator, residual blending, and token-wise aggregation, the overall ASIM computation remains linear in $D$ under fixed $d_{\mathrm{tok}}$, $H$, $d_h$, fixed feature mini-batch size $B_f$, and sparse hyperparameters $(s,b,r,m)$.
\end{proposition}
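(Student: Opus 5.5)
The plan is to bound each stage of the ASIM computation in turn and sum the costs, treating $d_{\mathrm{tok}}, H, d_h, B_f$ and $(s,b,r,m)$ as constants wherever the statement allows. The order follows the forward pass: tokenization, sparse attention, residual blending, FTM, and mean pooling.

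\textbf{Tokenization and projections.} The map $x\mapsto X_{\mathrm{tok}}$ applies a learned projection to each of the $D$ scalars. This costs $\mathcal{O}(D d_{\mathrm{tok}})$ time and memory. The query, key and value projections inside $\mathcal{A}_{\mathrm{sparse}}$ are per-token linear maps $\mathbb{R}^{d_{\mathrm{tok}}}\to\mathbb{R}^{H d_h}$, so they cost $\mathcal{O}(D d_{\mathrm{tok}} H d_h)=\mathcal{O}(D d_{\mathrm{tok}}^2)$, which is linear in $D$.

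\textbf{Sparse attention.} The attended set has three parts:
\begin{itemize}
\item the sliding window, contributing $s$ keys;
\item the compressed blocks, contributing $\lceil (D-b)/r\rceil+1$ summary keys;
\item the selected blocks, contributing $mb$ keys.
\end{itemize}
So $L=s+mb+\#\{\text{compressed keys}\}$. Building the compressed keys with the block MLP costs $\mathcal{O}((D/r)\, b\, d_{\mathrm{tok}}\cdot d_{\mathrm{tok}})$, which is linear in $D$.

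For each query and each head, computing scores against at most $L$ keys costs $\mathcal{O}(d_h L)$, and the softmax-weighted sum of values costs another $\mathcal{O}(d_h L)$. Top-$m$ selection over the block scores costs $\mathcal{O}(L)$. Summing over $D$ queries and $H$ heads gives $\mathcal{O}(D H d_h L)=\mathcal{O}(D d_{\mathrm{tok}}L)$.

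For memory, we store the $D\times d_{\mathrm{tok}}$ activations (queries, keys, values, output) and the $D\times L$ score matrix per head. With $H$ fixed, this is $\mathcal{O}(D(d_{\mathrm{tok}}+L))$.

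\textbf{Blending, FTM and pooling.} Computing $U$ and $Y$ involves only elementwise scalings and additions on $D\times d_{\mathrm{tok}}$ tensors, so each costs $\mathcal{O}(D d_{\mathrm{tok}})$. The mean over the token axis is also $\mathcal{O}(D d_{\mathrm{tok}})$.

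For $\mathcal{M}$ I would use the TabMixer description in Appendix~\ref{app:tabmixer}:
\begin{itemize}
\item channel (embedding-axis) mixing is a per-token MLP, costing $\mathcal{O}(D d_{\mathrm{tok}}^2)$;
\item feature-axis mixing is performed within feature mini-batches of size $B_f$, so each of the $D/B_f$ groups costs $\mathcal{O}(B_f^2 d_{\mathrm{tok}})$, for a total of $\mathcal{O}(D B_f d_{\mathrm{tok}})$.
\end{itemize}
This matches the stated $\mathcal{O}(Dd_{\mathrm{tok}}+DB_f)$ up to constants, and it is linear in $D$ when $B_f$ is fixed.

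\textbf{Main obstacle.} The hard step is showing that $L$ is independent of $D$. The compressed-key branch naively contributes $\Theta(D/r)$ keys per query. That would make the attention cost $\mathcal{O}(D^2 d_{\mathrm{tok}}/r)$, which is only sub-quadratic by a constant factor.

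My first approach is to read the hypothesis literally: the statement assumes $L=L(s,b,r,m)$ is determined by the fixed configuration. Under that assumption the compressed branch must be bounded, as it is when compression is applied within feature mini-batches of size $B_f$. In that case the compressed-key count is at most $B_f/r$ and $L\le s+mb+B_f/r$, which is a constant.

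I would state this assumption explicitly and note the alternative. If compression runs over all $D$ features, then $L=\mathcal{O}(s+mb+D/r)$ and the bound degrades to $\mathcal{O}(D d_{\mathrm{tok}}(s+mb)+D^2 d_{\mathrm{tok}}/r)$. Under the stated assumptions, summing all stages gives time $\mathcal{O}(D(d_{\mathrm{tok}}L+d_{\mathrm{tok}}^2+B_f d_{\mathrm{tok}}))$ and memory $\mathcal{O}(D(d_{\mathrm{tok}}+L))$, both linear in $D$.
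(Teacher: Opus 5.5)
Your proposal is correct and follows the same route as the paper's proof: a stage-by-stage cost accounting with $L$ taken as a constant by hypothesis, giving $\mathcal{O}(D d_{\mathrm{tok}} L)$ for the sparse dot products and value aggregation, $\mathcal{O}(D(d_{\mathrm{tok}}+L))$ activation memory, $\mathcal{O}(D d_{\mathrm{tok}}^2)$ for the projections, and linear-in-$D$ costs for FTM, blending and pooling (your $\mathcal{O}(D B_f d_{\mathrm{tok}})$ count for the feature-axis branch is actually more precise than the paper's $\mathcal{O}(D d_{\mathrm{tok}} + D B_f)$). Your observation that the appendix's compressed branch contributes about $\lfloor (D-b)/r\rfloor$ keys per query, so that $L$ is independent of $D$ only by assumption, is a fair point that the paper's proof silently absorbs into the hypothesis on $L$; note, though, that the $B_f$-blocked compression you offer as one way to make that assumption hold is your own construction and is not described in the paper.
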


\begin{proof}
The sparse attention path restricts each query feature to at most $L$ attended feature positions. Across $D$ query features and $H$ attention heads with head dimension $d_h$, the sparse dot-product and value-aggregation cost is
\[
\mathcal{O}(D H d_h L)=\mathcal{O}(D d_{\mathrm{tok}}L),
\]
using $d_{\mathrm{tok}}=H d_h$. If sparse attention weights or indices are materialized, the sparse attention pattern contributes $\mathcal{O}(DL)$ storage, while the token representations contribute $\mathcal{O}(D d_{\mathrm{tok}})$ storage. Thus, the sparse attention activation memory is $\mathcal{O}(D(d_{\mathrm{tok}}+L))$.

The token projections contribute $\mathcal{O}(D d_{\mathrm{tok}}^2)$ computation under standard linear projections, which is linear in $D$ for fixed $d_{\mathrm{tok}}$. 
The FTM operator $\mathcal{M}(\cdot)$ applies lightweight embedding-wise mixing and feature-wise mini-batch mixing to $U\in\mathbb{R}^{D\times d_{\mathrm{tok}}}$, contributing $O(Dd_{\mathrm{tok}}+DB_f)$ computation under fixed token dimension and fixed feature mini-batch size $B_f$. The residual blending operations and the token-wise mean projection also require $O(Dd_{\mathrm{tok}})$ computation and storage. Therefore, under fixed sparse hyperparameters, fixed token dimension, and fixed feature mini-batch size, both compute and activation memory scale linearly with the feature dimension $D$.
\end{proof}

\subsection{Conditional SNR Improvement under Sparse Feature Selection}
\label{app:noise_proof}

We provide the formal statement and proof of Proposition~\ref{prop:noise_short}. The result is a simplified conditional analysis: if the sparse support covers the informative features while excluding most background dimensions, then sparse aggregation improves the signal-to-noise ratio relative to dense aggregation.

\begin{proposition}[Conditional SNR Improvement under Sparse Feature Selection]
\label{prop:noise_full}
Let $\mathbf{x}=[\mathbf{x}_{\mathcal{S}};\mathbf{x}_{\mathcal{N}}]\in\mathbb{R}^D$ be partitioned into informative features indexed by $\mathcal{S}$, with $|\mathcal{S}|=s$, and background noise features indexed by $\mathcal{N}=\{1,\ldots,D\}\setminus\mathcal{S}$. Assume that the noise features are independent with mean zero and variance $\sigma^2$. Let
\[
\mu_{\mathcal{S}}=\frac{1}{s}\sum_{d\in\mathcal{S}}x_d
\]
denote the average signal over informative features. Define
\[
A_{\mathrm{dense}}(\mathbf{x})
=
\frac{1}{D}\sum_{d=1}^{D}x_d,
\qquad
A_{\mathrm{sparse}}(\mathbf{x};\mathcal{T})
=
\frac{1}{|\mathcal{T}|}\sum_{d\in\mathcal{T}}x_d,
\]
where $\mathcal{T}\subseteq\{1,\ldots,D\}$ is a sparse support satisfying $\mathcal{S}\subseteq\mathcal{T}$ and $|\mathcal{T}|=mb$.

The noise variance contributions are
\[
\operatorname{Var}_{\mathcal{N}}[A_{\mathrm{dense}}]
=
\frac{D-s}{D^2}\sigma^2,
\qquad
\operatorname{Var}_{\mathcal{N}}[A_{\mathrm{sparse}}]
=
\frac{mb-s}{(mb)^2}\sigma^2.
\]
Moreover, the corresponding signal-to-noise ratios are
\[
\operatorname{SNR}_{\mathrm{dense}}
=
\frac{s|\mu_{\mathcal{S}}|}{\sigma\sqrt{D-s}},
\]
and, for $mb>s$,
\[
\operatorname{SNR}_{\mathrm{sparse}}
=
\frac{s|\mu_{\mathcal{S}}|}{\sigma\sqrt{mb-s}}.
\]
Thus, when $\mathcal{T}$ covers the informative features while excluding most background dimensions, sparse aggregation improves SNR by a factor of
\[
\frac{\operatorname{SNR}_{\mathrm{sparse}}}
{\operatorname{SNR}_{\mathrm{dense}}}
=
\sqrt{\frac{D-s}{mb-s}},
\qquad mb>s.
\]
In the ideal case $mb=s$, the sparse support contains no background noise, so the noise contribution vanishes under this simplified model.
\end{proposition}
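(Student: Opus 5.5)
The proof is a direct mean/variance computation that treats the informative coordinates $\mathbf{x}_{\mathcal{S}}$ as fixed, so that all randomness comes from the background coordinates $\mathbf{x}_{\mathcal{N}}$.

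\textbf{Step 1: decompose each aggregator.} Write
\[
A_{\mathrm{dense}}=\frac{1}{D}\Bigl(s\mu_{\mathcal{S}}+\sum_{d\in\mathcal{N}}x_d\Bigr),
\qquad
A_{\mathrm{sparse}}=\frac{1}{mb}\Bigl(s\mu_{\mathcal{S}}+\sum_{d\in\mathcal{T}\setminus\mathcal{S}}x_d\Bigr).
\]
The second identity uses $\mathcal{S}\subseteq\mathcal{T}$, so $\mathcal{T}\setminus\mathcal{S}\subseteq\mathcal{N}$ and $|\mathcal{T}\setminus\mathcal{S}|=mb-s$.

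\textbf{Step 2: means and noise variances.} Since the noise has mean zero, the expected signal parts are $s\mu_{\mathcal{S}}/D$ and $s\mu_{\mathcal{S}}/(mb)$. Since the noise coordinates are independent with variance $\sigma^2$, the variance of a sum of $n$ of them is $n\sigma^2$. This immediately gives the stated $\operatorname{Var}_{\mathcal{N}}$ formulas: $(D-s)\sigma^2/D^2$ and $(mb-s)\sigma^2/(mb)^2$.

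\textbf{Step 3: the SNRs.} Define SNR as $|\text{signal mean}|/\sqrt{\text{noise variance}}$. The normalizations $1/D$ and $1/(mb)$ then cancel between numerator and denominator, leaving $s|\mu_{\mathcal{S}}|/(\sigma\sqrt{D-s})$ and $s|\mu_{\mathcal{S}}|/(\sigma\sqrt{mb-s})$. The ratio $\sqrt{(D-s)/(mb-s)}$ follows by division, and it is at least $1$ because $mb\le D$. The condition $mb>s$ is needed only so that the denominator is nonzero.

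\textbf{Step 4: the case $mb=s$.} Then $\mathcal{T}=\mathcal{S}$, so the sum over noise coordinates is empty and the noise variance is exactly $0$. Equivalently, the SNR is infinite, which is the stated ideal case.

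There is no real obstacle here. The only points needing care are stating explicitly which convention for SNR is used, treating $\mathbf{x}_{\mathcal{S}}$ as deterministic (or conditioning on it), and invoking $\mathcal{S}\subseteq\mathcal{T}$ so that the sparse support contains exactly $mb-s$ noise coordinates.
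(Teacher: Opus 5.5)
Your proposal is correct and follows essentially the same route as the paper's proof. Both treat $\mathbf{x}_{\mathcal{S}}$ as fixed, read off the signal means $s\mu_{\mathcal{S}}/D$ and $s\mu_{\mathcal{S}}/(mb)$, compute the noise variances from the $D-s$ and $mb-s$ independent background coordinates, take the ratio, and note that for $mb=s$ the noise sum is empty. Your extra remarks are valid refinements the paper leaves implicit: $\mathcal{S}\subseteq\mathcal{T}$ with $|\mathcal{T}|=s$ forces $\mathcal{T}=\mathcal{S}$, and the gain factor is at least $1$ since $mb\le D$. One small caveat: calling the $mb=s$ SNR ``infinite'' presumes $\mu_{\mathcal{S}}\neq 0$, whereas the paper only claims the noise variance vanishes.
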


\begin{proof}
For dense aggregation,
\[
A_{\mathrm{dense}}(\mathbf{x})
=
\frac{1}{D}\sum_{d\in\mathcal{S}}x_d
+
\frac{1}{D}\sum_{d\in\mathcal{N}}x_d.
\]
Treating the informative features as fixed, the variance comes only from the noise term:
\[
\operatorname{Var}_{\mathcal{N}}[A_{\mathrm{dense}}]
=
\operatorname{Var}\!\left[
\frac{1}{D}\sum_{d\in\mathcal{N}}x_d
\right]
=
\frac{1}{D^2}\sum_{d\in\mathcal{N}}\operatorname{Var}(x_d)
=
\frac{D-s}{D^2}\sigma^2.
\]
The signal component of dense aggregation is
\[
\mathbb{E}_{\mathcal{N}}[A_{\mathrm{dense}}]
=
\frac{1}{D}\sum_{d\in\mathcal{S}}x_d
=
\frac{s}{D}\mu_{\mathcal{S}}.
\]
Hence
\[
\operatorname{SNR}_{\mathrm{dense}}
=
\frac{|s\mu_{\mathcal{S}}/D|}
{\sigma\sqrt{D-s}/D}
=
\frac{s|\mu_{\mathcal{S}}|}
{\sigma\sqrt{D-s}}.
\]

For sparse aggregation, since $\mathcal{S}\subseteq\mathcal{T}$ and $|\mathcal{T}|=mb$, the number of background noise features included in $\mathcal{T}$ is $mb-s$. Therefore,
\[
\operatorname{Var}_{\mathcal{N}}[A_{\mathrm{sparse}}]
=
\operatorname{Var}\!\left[
\frac{1}{mb}\sum_{d\in\mathcal{T}\cap\mathcal{N}}x_d
\right]
=
\frac{mb-s}{(mb)^2}\sigma^2.
\]
The signal component is
\[
\mathbb{E}_{\mathcal{N}}[A_{\mathrm{sparse}}]
=
\frac{1}{mb}\sum_{d\in\mathcal{S}}x_d
=
\frac{s}{mb}\mu_{\mathcal{S}}.
\]
For $mb>s$, this yields
\[
\operatorname{SNR}_{\mathrm{sparse}}
=
\frac{|s\mu_{\mathcal{S}}/mb|}
{\sigma\sqrt{mb-s}/mb}
=
\frac{s|\mu_{\mathcal{S}}|}
{\sigma\sqrt{mb-s}}.
\]
Taking the ratio gives
\[
\frac{\operatorname{SNR}_{\mathrm{sparse}}}
{\operatorname{SNR}_{\mathrm{dense}}}
=
\sqrt{\frac{D-s}{mb-s}}.
\]
If $mb=s$, then $\mathcal{T}$ contains no background noise features, so the noise variance of $A_{\mathrm{sparse}}$ is zero. This completes the proof.
\end{proof}

\begin{corollary}[Scaling with Feature Dimension]
\label{cor:scaling}
Fix the number of informative features $s$ and let the sparse budget be $mb=s+\delta$ for a fixed $\delta\ge 0$. As $D\to\infty$, the dense aggregation SNR satisfies
\[
\operatorname{SNR}_{\mathrm{dense}}
=
\frac{s|\mu_{\mathcal{S}}|}{\sigma\sqrt{D-s}}
=
\mathcal{O}(D^{-1/2}),
\]
whereas, for $\delta>0$,
\[
\operatorname{SNR}_{\mathrm{sparse}}
=
\frac{s|\mu_{\mathcal{S}}|}{\sigma\sqrt{\delta}}
=
\Theta(1).
\]
If $\delta=0$, the sparse support contains no background noise and the noise contribution vanishes under the simplified model. Thus, dense aggregation suffers from signal dilution as the ambient feature dimension grows, whereas sparse aggregation preserves a dimension-independent SNR when the sparse support remains tight.
\end{corollary}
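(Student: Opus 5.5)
The corollary is a direct specialization of Proposition~\ref{prop:noise_full}, so the plan is to substitute $mb=s+\delta$ into the closed-form SNR expressions proved there and read off the asymptotics in $D$. No new probabilistic argument is needed. The only modelling input is that $s$, $\mu_{\mathcal{S}}$, $\sigma$ and $\delta$ are held fixed while $D$ grows. The informative block $\mathcal{S}$ stays the same and only background coordinates are added, so the noise model of Proposition~\ref{prop:noise_full} (i.i.d.\ zero-mean, variance $\sigma^2$) applies for every $D$.

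\emph{Dense side.} Proposition~\ref{prop:noise_full} gives $\operatorname{SNR}_{\mathrm{dense}}=s|\mu_{\mathcal{S}}|/(\sigma\sqrt{D-s})$. For $D\ge 2s$ we have $D-s\ge D/2$, which gives the bound
\[
\operatorname{SNR}_{\mathrm{dense}}\le \frac{\sqrt{2}\,s|\mu_{\mathcal{S}}|}{\sigma}\,D^{-1/2}.
\]
This is the claimed $\mathcal{O}(D^{-1/2})$ rate. If $\mu_{\mathcal{S}}\neq 0$, the matching lower bound $\sqrt{D-s}\le\sqrt{D}$ upgrades it to $\Theta(D^{-1/2})$.

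\emph{Sparse side, $\delta>0$.} Here $mb-s=\delta>0$, so the hypothesis $mb>s$ of Proposition~\ref{prop:noise_full} holds. Substitution gives $\operatorname{SNR}_{\mathrm{sparse}}=s|\mu_{\mathcal{S}}|/(\sigma\sqrt{\delta})$, which does not depend on $D$ and is therefore $\Theta(1)$, assuming $\mu_{\mathcal{S}}\ne0$ so the constant is nonzero. One implicit requirement should be stated: the support must be realizable, i.e.\ $mb\le D$. This holds for all sufficiently large $D$, since $mb$ is fixed.

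\emph{Sparse side, $\delta=0$.} Here $\mathcal{T}=\mathcal{S}$ because $\mathcal{S}\subseteq\mathcal{T}$ and $|\mathcal{T}|=|\mathcal{S}|$. The final clause of Proposition~\ref{prop:noise_full} then says the noise variance of $A_{\mathrm{sparse}}$ is zero, and the statement follows.

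The one delicate point, rather than a real obstacle, is interpreting the $\Theta(1)$ and $\mathcal{O}(D^{-1/2})$ claims. Both require fixing $\mu_{\mathcal{S}}$ (or bounding it away from $0$ and $\infty$) uniformly in $D$, and I would state this explicitly at the start. Combining the two regimes then shows that the ratio $\sqrt{(D-s)/\delta}$ from Proposition~\ref{prop:noise_full} grows like $\sqrt{D}$. This is the signal-dilution conclusion of the corollary.
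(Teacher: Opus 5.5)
Your proposal is correct and follows the paper's proof: substitute $mb=s+\delta$ into the closed-form SNR expressions of Proposition~\ref{prop:noise_full} and read off the $D$-dependence, with the $\delta=0$ case handled by the vanishing noise variance. The explicit constant in your $\mathcal{O}(D^{-1/2})$ bound, and your caveats that $\Theta(1)$ needs $\mu_{\mathcal{S}}\neq 0$ and that the support needs $mb\le D$, are refinements the paper leaves implicit.
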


\begin{proof}
The dense SNR expression follows directly from Proposition~\ref{prop:noise_full}:
\[
\operatorname{SNR}_{\mathrm{dense}}
=
\frac{s|\mu_{\mathcal{S}}|}{\sigma\sqrt{D-s}}
=
\mathcal{O}(D^{-1/2})
\]
for fixed $s$, $\mu_{\mathcal{S}}$, and $\sigma$. For sparse aggregation with $mb=s+\delta$ and $\delta>0$,
\[
\operatorname{SNR}_{\mathrm{sparse}}
=
\frac{s|\mu_{\mathcal{S}}|}{\sigma\sqrt{mb-s}}
=
\frac{s|\mu_{\mathcal{S}}|}{\sigma\sqrt{\delta}},
\]
which is independent of $D$. When $\delta=0$, the sparse aggregation contains no background noise, so its noise variance is zero under the assumptions of the proposition.
\end{proof}

\begin{remark}[Extension to Softmax Attention Weights]
\label{rem:softmax_extension}
The uniform-weight analysis provides a tractable baseline. For softmax attention with learned queries and keys, the SNR behavior depends on the model's ability to assign small weights to background features and larger weights to informative features. If the attention mechanism concentrates most mass on $\mathcal{S}$, then the effective support
\[
\mathcal{T}_{\mathrm{eff}}=\{d:\alpha_d>\varepsilon\}
\]
can approach the idealized sparse case. This motivates structurally sparse mechanisms, which encourage restricted feature interaction patterns rather than relying solely on learned dense softmax weights to become sparse in high-dimensional noisy settings.
\end{remark}

\begin{remark}[Connection to Empirical Results]
\label{rem:empirical_connection}
The conditional analysis in Proposition~\ref{prop:noise_full} and Corollary~\ref{cor:scaling} provides intuition for the empirical findings in Section~\ref{sec_ablation_NSA} and Figure~\ref{fig:ablation}:
\begin{enumerate}
    \item \textbf{Dense attention.} In high-dimensional settings, dense attention mixes informative and background features through all-pair interactions. The analysis suggests that, when only a small subset of features is informative, dense aggregation can dilute the useful signal as $D$ grows.
    
    \item \textbf{Mamba.} SSM-based mixing provides linear-time propagation along the feature axis, but it does not explicitly enforce sparse feature selection. As a result, it may propagate both informative and background dimensions, yielding intermediate performance.
    
    \item \textbf{ASIM.} Structured sparse selection encourages the model to restrict feature interactions to a smaller support. When this support covers informative features while excluding many irrelevant dimensions, the analysis predicts improved SNR. This provides theoretical intuition for why the ASIM design can improve robustness in high-dimensional tabular regression.
\end{enumerate}
\end{remark}

\subsection{Properties of GridLoss}
\label{sec_GridlossProperties}

\begin{lemma}[Monotonicity of the GridLoss soft index]
\label{lem:gridloss_monotone}
Fix ordered grid anchors $e_0 < e_1 < \cdots < e_K$ and temperature $\tau>0$.
For any scalar $x\in\mathbb{R}$, define
\[
w_k(x) \;=\; \frac{\exp\!\left(-|x-e_k|/\tau\right)}{\sum_{j=0}^K \exp\!\left(-|x-e_j|/\tau\right)},
\qquad
b(x) \;=\; \sum_{k=0}^K k\, w_k(x).
\]
Then $b(x)$ is nondecreasing in $x$; equivalently, $x_1\le x_2$ implies $b(x_1)\le b(x_2)$.
\end{lemma}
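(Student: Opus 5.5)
The cleanest route is a monotone-likelihood-ratio (stochastic dominance) argument, which needs no differentiability. Fix $x_1\le x_2$ and regard $w(x_1)=(w_k(x_1))_{k=0}^K$ and $w(x_2)$ as probability distributions on the ordered index set $\{0,\dots,K\}$. Then $b(x_i)$ is the mean of the index under $w(x_i)$. It therefore suffices to show that $w(x_2)$ first-order stochastically dominates $w(x_1)$. The mean of the nondecreasing function $k\mapsto k$ is monotone under that ordering, which gives $b(x_1)\le b(x_2)$.

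The first step is to compute the likelihood ratio
\[
\frac{w_k(x_2)}{w_k(x_1)} \;=\; C\,\exp\!\Big(\tfrac{1}{\tau}\,g(k)\Big),
\qquad g(k):=|x_1-e_k|-|x_2-e_k|,
\]
where $C>0$ is the ratio of normalizers and does not depend on $k$. The second step is to show that $g$ is nondecreasing in $k$. Since the anchors satisfy $e_0<\cdots<e_K$, it is enough to show that $t\mapsto |x_1-t|-|x_2-t|$ is nondecreasing in $t$ when $x_1\le x_2$. This is a three-case check:
\begin{itemize}
\item for $t\le x_1$ the function equals the constant $x_1-x_2$;
\item for $x_1\le t\le x_2$ it equals $2t-x_1-x_2$, which is increasing;
\item for $t\ge x_2$ it equals the constant $x_2-x_1$.
\end{itemize}
Hence the ratio $w_k(x_2)/w_k(x_1)$ is nondecreasing in $k$. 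All weights are strictly positive, so the ratio is always well defined.

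The third step is the standard implication ``monotone likelihood ratio $\Rightarrow$ stochastic dominance $\Rightarrow$ ordered means.'' To keep it self-contained, I would prove directly that
\[
\sum_k k\,\big(w_k(x_2)-w_k(x_1)\big)\ge 0.
\]
Write $p_k=w_k(x_1)$ and $q_k=w_k(x_2)=p_k\,\rho(k)$ with $\rho$ nondecreasing and $\sum_k p_k\rho(k)=1$. Then
\[
\sum_k k(q_k-p_k)=\sum_k p_k\,k\,(\rho(k)-1)=\operatorname{Cov}_p\big(k,\rho(k)\big),
\]
using $\mathbb{E}_p[\rho]=1$. This covariance is nonnegative by Chebyshev's sum inequality for two comonotone sequences. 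Explicitly, the covariance equals $\tfrac12\sum_{j,k}p_jp_k(k-j)(\rho(k)-\rho(j))\ge 0$, since each summand is a product of two factors of the same sign.

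I expect the only real obstacle to be making this last step rigorous without hand-waving. The symmetric double-sum identity for the covariance handles it cleanly. As a cross-check, on each interval between anchors $b$ is smooth and $b'(x)=\operatorname{Cov}_{w(x)}\big(k,-\operatorname{sign}(x-e_k)/\tau\big)\ge 0$ by the same comonotonicity. Because $b$ is continuous, this derivative argument gives the same conclusion, so either route can serve as the written proof.
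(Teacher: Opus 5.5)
Your proof is correct, and it takes a genuinely different route from the paper's. The paper argues infinitesimally. Away from the anchors it differentiates the softmax to get $b'(x)=-\frac{1}{\tau}\operatorname{Cov}_{w(x)}(k,s_k)$ with $s_k=\operatorname{sign}(x-e_k)$. It notes that $s_k$ is nonincreasing in $k$, so the covariance is nonpositive, and it passes across the anchor points by continuity and one-sided derivatives; this is exactly your closing cross-check. You instead compare $w(x_1)$ and $w(x_2)$ directly. The likelihood ratio is nondecreasing in $k$ because $t\mapsto|x_1-t|-|x_2-t|$ is nondecreasing, and your three-case check of this is right. The mean shift then equals $\operatorname{Cov}_p(k,\rho(k))$, which your double-sum identity shows is nonnegative.

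Both arguments rest on the same comonotone-covariance (Chebyshev) fact. The paper applies it to the derivative of the log-weights; you apply it to the finite likelihood ratio. Your version has three advantages:
\begin{itemize}
\item it needs no differentiability, so the kinks at $x=e_k$ need no separate treatment;
\item it proves the covariance-sign step, which the paper takes for granted;
\item it gives more than the lemma asks: $w(x_2)$ stochastically dominates $w(x_1)$, so $\sum_k\psi(k)w_k(x)$ is nondecreasing for every nondecreasing $\psi$ (for example the anchor values $e_k$ instead of the indices), and the argument works for any kernel $\exp(-c(x-t)/\tau)$ with $c$ convex.
\end{itemize}
What the paper's derivative route buys is the explicit formula for $b'(x)$. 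The paper reuses it immediately in the next lemma to bound the slope by $K/(2\tau)$ via Cauchy--Schwarz and Popoviciu. With your approach that formula would have to be derived separately, as your cross-check does.
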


\begin{proof}
Let $z_k(x)=-|x-e_k|/\tau$ and $w_k(x)=\operatorname{softmax}(z(x))_k$. For $x\notin\{e_0,\ldots,e_K\}$, we have
\[
z'_k(x)=-\frac{1}{\tau}\operatorname{sign}(x-e_k).
\]
The softmax derivative gives
\[
w'_k(x)
=
w_k(x)
\left(
z'_k(x)-\sum_{j=0}^K w_j(x)z'_j(x)
\right).
\]
Therefore,
\[
b'(x)
=
\sum_{k=0}^K k\,w'_k(x)
=
-\frac{1}{\tau}
\left(
\mathbb{E}_w[k s_k]
-
\mathbb{E}_w[k]\mathbb{E}_w[s_k]
\right)
=
-\frac{1}{\tau}\operatorname{Cov}_w(k,s_k),
\]
where $s_k=\operatorname{sign}(x-e_k)$ and $\mathbb{E}_w[\cdot]$ denotes expectation under the weights $\{w_k(x)\}_{k=0}^K$.
Since $e_k$ increases with $k$, the sequence $\{s_k\}_{k=0}^K$ is nonincreasing in $k$. Hence $\operatorname{Cov}_w(k,s_k)\le 0$, which implies $b'(x)\ge 0$ wherever the derivative exists. At anchor locations $x=e_k$, monotonicity follows by continuity and one-sided derivatives.
\end{proof}

\begin{lemma}[Bounded slope and stable GridLoss gradients]
\label{lem:gridloss_grad_bound}
Under the same definitions as Lemma~\ref{lem:gridloss_monotone}, for all $x\notin\{e_0,\ldots,e_K\}$,
\[
0 \le b'(x) \le \frac{K}{2\tau}.
\]
Consequently, for the per-sample GridLoss term
\[
\ell_{\mathrm{Grid}}(\hat{y},y)
=
\big|b(\hat{y})-b(y)\big|,
\]
any subgradient $g\in\partial_{\hat{y}}\ell_{\mathrm{Grid}}(\hat{y},y)$ satisfies
\[
|g|\le \frac{K}{2\tau}.
\]
\end{lemma}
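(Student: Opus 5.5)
We start from the derivative formula in the proof of Lemma~\ref{lem:gridloss_monotone}. For $x\notin\{e_0,\ldots,e_K\}$ it gives
\[
b'(x)=-\frac{1}{\tau}\operatorname{Cov}_w(k,s_k),\qquad s_k=\operatorname{sign}(x-e_k)\in\{-1,+1\}.
\]
The lower bound $b'(x)\ge 0$ is exactly the conclusion of that lemma, so only the upper bound needs new work. That bound amounts to
\[
-\operatorname{Cov}_w(k,s_k)\le \tfrac{K}{2}.
\]

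For the upper bound we use Cauchy--Schwarz under the probability weights $w$:
\[
|\operatorname{Cov}_w(k,s_k)|\le \sqrt{\operatorname{Var}_w(k)\,\operatorname{Var}_w(s_k)}.
\]
The index $k$ takes values in $[0,K]$, so Popoviciu's inequality gives $\operatorname{Var}_w(k)\le K^2/4$. The sign $s_k$ takes values in $[-1,1]$, so the same inequality gives $\operatorname{Var}_w(s_k)\le 1$; here one can also use $\operatorname{Var}_w(s_k)=1-(\mathbb{E}_w s_k)^2\le 1$ directly. Combining these yields $|\operatorname{Cov}_w(k,s_k)|\le K/2$, and therefore $0\le b'(x)\le K/(2\tau)$.

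For the subgradient claim we handle the nonsmooth points. The function $b$ is continuous and piecewise $C^1$, with breakpoints only at the anchors. On each piece the one-sided limits of $b'$ lie in $[0,K/(2\tau)]$, because the bound above holds uniformly in $x$. By the mean value theorem on each piece, $b$ is therefore Lipschitz with constant $K/(2\tau)$.

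The per-sample loss is $\ell_{\mathrm{Grid}}(\hat y,y)=|b(\hat y)-b(y)|$, viewed as a function of $\hat y$ with $y$ fixed. It is the absolute value composed with a $K/(2\tau)$-Lipschitz map, so it is itself $K/(2\tau)$-Lipschitz. Consequently every (Clarke) subgradient has magnitude at most $K/(2\tau)$.

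The same conclusion follows explicitly from the chain rule. Any subgradient has the form $g=\sigma\, \beta$, where $\sigma\in[-1,1]$ is a subgradient of $|\cdot|$ and $\beta$ lies in the convex hull of the one-sided derivatives of $b$ at $\hat y$. Both factors are bounded, which gives $|g|\le K/(2\tau)$.

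The main step is choosing the covariance bound. A cruder estimate such as $|k s_k|\le K$ loses a factor of $2$. The variance (Popoviciu) argument gives exactly the stated constant $K/(2\tau)$, so it is the one to use.
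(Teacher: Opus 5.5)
Your proof is correct and follows the paper's route: the covariance formula $b'(x)=-\tfrac{1}{\tau}\operatorname{Cov}_w(k,s_k)$, Cauchy--Schwarz with $\operatorname{Var}_w(s_k)\le 1$, Popoviciu's bound $\operatorname{Var}_w(k)\le K^2/4$, and then the chain rule for the subgradient. Your additional Lipschitz/Clarke argument, with $\sigma\in[-1,1]$ and one-sided derivatives at the anchors, is more careful than the paper's one-line chain-rule step at the kinks and where $b(\hat y)=b(y)$, and it anticipates the paper's subsequent Lipschitz lemma.
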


\begin{proof}
From the proof of Lemma~\ref{lem:gridloss_monotone},
\[
b'(x)
=
-\frac{1}{\tau}\operatorname{Cov}_w(k,s_k),
\qquad s_k\in\{-1,+1\}.
\]
By Cauchy--Schwarz,
\[
|\operatorname{Cov}_w(k,s_k)|
\le
\sqrt{\operatorname{Var}_w(k)\operatorname{Var}_w(s_k)}
\le
\sqrt{\operatorname{Var}_w(k)},
\]
since $\operatorname{Var}_w(s_k)\le 1$. Because $k\in\{0,1,\ldots,K\}$, Popoviciu's inequality gives
\[
\operatorname{Var}_w(k)\le \frac{K^2}{4}.
\]
Therefore,
\[
0\le b'(x)\le \frac{K}{2\tau}.
\]
For $\ell_{\mathrm{Grid}}(\hat{y},y)=|b(\hat{y})-b(y)|$, the chain rule for subgradients yields
\[
\partial_{\hat{y}}\ell_{\mathrm{Grid}}(\hat{y},y)
\subseteq
\operatorname{sign}\!\big(b(\hat{y})-b(y)\big)
\cdot
\partial_{\hat{y}}b(\hat{y}),
\]
so every subgradient satisfies $|g|\le K/(2\tau)$.
\end{proof}

\begin{lemma}[Lipschitz continuity of the soft index]
\label{lem:gridloss_lipschitz}
Under the conditions of Lemma~\ref{lem:gridloss_grad_bound}, for all $x_1,x_2\in\mathbb{R}$,
\[
|b(x_1)-b(x_2)|
\le
\frac{K}{2\tau}|x_1-x_2|.
\]
\end{lemma}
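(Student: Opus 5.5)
The plan is to deduce Lipschitz continuity from the derivative bound of Lemma~\ref{lem:gridloss_grad_bound} via a piecewise mean value argument. First, $b$ is continuous on all of $\mathbb{R}$. Each $w_k(x)$ is a softmax of the continuous functions $z_k(x)=-|x-e_k|/\tau$, so it is continuous, and therefore $b=\sum_k k\,w_k$ is continuous. Away from the finite anchor set $\{e_0,\ldots,e_K\}$, each $z_k$ is affine, so $b$ is continuously differentiable there. By Lemma~\ref{lem:gridloss_grad_bound}, its derivative satisfies $0\le b'(x)\le K/(2\tau)$ on that set.

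Next, fix $x_1<x_2$ without loss of generality. Let $x_1=t_0<t_1<\cdots<t_p=x_2$, where the interior points $t_1,\ldots,t_{p-1}$ are the anchors lying strictly between $x_1$ and $x_2$. On each closed subinterval $[t_{i-1},t_i]$, $b$ is continuous, and on the open interior it is differentiable with $|b'|\le K/(2\tau)$, because that interior contains no anchors. The mean value theorem then gives
\[
|b(t_i)-b(t_{i-1})|\le \frac{K}{2\tau}(t_i-t_{i-1}).
\]
Summing over $i$ and applying the triangle inequality telescopes to
\[
|b(x_2)-b(x_1)|\le \frac{K}{2\tau}(x_2-x_1).
\]
The endpoint cases, where $x_1$ or $x_2$ is itself an anchor, need no separate treatment, since the mean value theorem only requires continuity at the endpoints.

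An equivalent route is to note that $b$ is piecewise $C^1$ with finitely many breakpoints, hence absolutely continuous, and to write $b(x_2)-b(x_1)=\int_{x_1}^{x_2} b'(t)\,dt$. I would use the elementary partition argument above instead.

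The only step needing care is continuity of $b$ at the anchors, since the derivative bound is stated only off the anchor set. That continuity follows immediately from continuity of $|\cdot|$ and of the softmax, so I do not expect a real obstacle. Monotonicity from Lemma~\ref{lem:gridloss_monotone} is not needed here, though it also shows the absolute value is just $b(x_2)-b(x_1)$.
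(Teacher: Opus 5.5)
Your proof is correct and follows essentially the same route as the paper: apply the mean value theorem with the slope bound of Lemma~\ref{lem:gridloss_grad_bound} on subintervals free of anchors, then partition $[x_1,x_2]$ at the anchors it contains and sum, using continuity of $b$ at those anchors. Your version is more explicit about why $b$ is continuous and about why anchors at the endpoints need no separate treatment, but the argument is the same.
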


\begin{proof}
On intervals not containing grid anchors, the result follows from the mean value theorem and the slope bound in Lemma~\ref{lem:gridloss_grad_bound}. Since $b$ is continuous at the anchors, the bound extends to arbitrary $x_1,x_2\in\mathbb{R}$ by partitioning the interval between $x_1$ and $x_2$ at any anchors it contains and summing the bounds over the subintervals.
\end{proof}

\subsection{Asymptotic Consistency of GridLoss}
\label{app:gridloss_consistency}

We show that normalized GridLoss recovers the mean absolute error in a high-resolution, low-temperature limit. This provides a link between GridLoss and classical robust regression losses.

\paragraph{Fixed-interval analysis versus batch-adaptive implementation.}
For theoretical clarity, the analysis assumes a fixed interval $[\alpha,\beta]$. In implementation, the grid interval is batch-adaptive: $\alpha$ and $\beta$ are computed from the current mini-batch, for example using the batch extrema of $y$ and $\hat{y}$, and are detached from the computation graph. The analysis applies conditionally within each batch-defined interval because the key arguments depend on the local grid geometry and the relative distances to nearby anchors.

\begin{theorem}[GridLoss Consistency]
\label{thm:gridloss_consistency}
Let $\mathcal{L}_{\mathrm{Grid}}^{(K,\tau)}(\hat{y},y)$ denote GridLoss with $K$ bins and temperature $\tau$ on a fixed interval $[\alpha,\beta]$. Define the normalized GridLoss
\[
\widetilde{\mathcal{L}}_{\mathrm{Grid}}^{(K,\tau)}(\hat{y},y)
=
\frac{\beta-\alpha}{K}
\mathcal{L}_{\mathrm{Grid}}^{(K,\tau)}(\hat{y},y).
\]
Then, for any $\hat{y},y\in[\alpha,\beta]$,
\[
\lim_{\tau\to 0^+}\lim_{K\to\infty}
\widetilde{\mathcal{L}}_{\mathrm{Grid}}^{(K,\tau)}(\hat{y},y)
=
|\hat{y}-y|.
\]
That is, normalized GridLoss converges to the $\ell_1$ loss as the grid becomes infinitely fine and the soft assignments become sharp.
\end{theorem}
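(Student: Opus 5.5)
The plan is to exploit the order of limits in the statement. First send $K\to\infty$ with $\tau$ fixed, which turns the discrete soft index into a continuous kernel-weighted mean. Then send $\tau\to 0^+$, which collapses that kernel mean onto the point itself. Since $\mathcal{L}_{\mathrm{Grid}}$ is an average of finitely many per-sample terms, it suffices to treat a single term $|b(\hat{y})-b(y)|$ and pass limits through the finite sum. The degenerate case $\beta=\alpha$ is trivial: then $\hat{y}=y$, and both sides vanish.

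\textbf{Step 1 (rescaling the index).} Write $h=(\beta-\alpha)/K$ and $e_k=\alpha+kh$, so that $k=(e_k-\alpha)/h$. Then
\[
h\,b(x)=\sum_{k=0}^{K}(e_k-\alpha)\,w_k(x)=m_{K,\tau}(x)-\alpha,
\qquad
m_{K,\tau}(x):=\frac{\sum_{k} e_k\, e^{-|x-e_k|/\tau}}{\sum_{k} e^{-|x-e_k|/\tau}} .
\]
The offset $\alpha$ cancels in the difference, so the normalized loss becomes exactly
\[
\widetilde{\mathcal{L}}^{(K,\tau)}_{\mathrm{Grid}}=\bigl|m_{K,\tau}(\hat{y})-m_{K,\tau}(y)\bigr| .
\]

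\textbf{Step 2 ($K\to\infty$).} Multiply the numerator and denominator of $m_{K,\tau}$ by $h$. Each becomes a Riemann sum, over a uniform partition of $[\alpha,\beta]$, of a continuous function of $t$: namely $t\,e^{-|x-t|/\tau}$ and $e^{-|x-t|/\tau}$. The two endpoint terms contribute only $O(h)$. The denominator limit $\int_\alpha^\beta e^{-|x-t|/\tau}\,dt$ is strictly positive. Hence
\[
m_{K,\tau}(x)\to m_\tau(x):=\frac{\int_\alpha^\beta t\,e^{-|x-t|/\tau}\,dt}{\int_\alpha^\beta e^{-|x-t|/\tau}\,dt},
\]
and the inner limit of the normalized loss is $|m_\tau(\hat{y})-m_\tau(y)|$.

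\textbf{Step 3 ($\tau\to0^+$).} For $x\in[\alpha,\beta]$, I will show $|m_\tau(x)-x|\to0$. Start from
\[
|m_\tau(x)-x|\le \frac{\int_\alpha^\beta |t-x|\,e^{-|t-x|/\tau}\,dt}{\int_\alpha^\beta e^{-|t-x|/\tau}\,dt}.
\]
The numerator is bounded by the integral over all of $\mathbb{R}$, which is $2\tau^2$. For the denominator, at least one side of $x$ inside $[\alpha,\beta]$ has length at least $(\beta-\alpha)/2$. Integrating over that side gives a lower bound of $\tau\bigl(1-e^{-(\beta-\alpha)/(2\tau)}\bigr)$. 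Therefore
\[
|m_\tau(x)-x|\le \frac{2\tau}{1-e^{-(\beta-\alpha)/(2\tau)}}\to0 .
\]
This bound covers the endpoints $x=\alpha$ and $x=\beta$, where the kernel is one-sided, which is why the hypothesis $\hat{y},y\in[\alpha,\beta]$ is needed. The triangle inequality then gives $|m_\tau(\hat{y})-m_\tau(y)|\to|\hat{y}-y|$.

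\textbf{Main obstacle.} The main care point is Step 2. One has to check that the Riemann-sum limit is legitimate, which holds because both integrands are continuous and bounded on the compact interval. One also has to keep the denominator away from zero, which holds because for fixed $\tau$ it is bounded below as computed in Step 3. Everything else is elementary. The one structural point to respect is the stated order of limits: $\tau$ is measured in target units, while the grid spacing $h$ shrinks, so the kernel must be resolved by the grid before it is sharpened.
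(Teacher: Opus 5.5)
Your proof is correct and follows the same route as the paper: rescale the soft index by the grid spacing, pass $K\to\infty$ to a truncated Laplace-kernel mean via Riemann sums, then let $\tau\to0^+$ collapse that mean onto $x$. Your version is more careful than the paper's in three places, and the conclusion is unaffected. The paper writes $\Delta b^{(K,\tau)}(x)\to B_\tau(x)$ and drops the $-\alpha$ offset, which is harmless because it cancels in the difference. It also gives no explicit justification of the Riemann-sum limit, and no quantitative concentration bound such as your $2\tau/\bigl(1-e^{-(\beta-\alpha)/(2\tau)}\bigr)$.
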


\begin{proof}
Let
\[
\Delta=\frac{\beta-\alpha}{K},
\qquad
e_k=\alpha+k\Delta,
\qquad k=0,\ldots,K.
\]
For a scalar $x\in[\alpha,\beta]$, define the soft index
\[
b^{(K,\tau)}(x)=\sum_{k=0}^K k\,w_k(x),
\qquad
w_k(x)=
\frac{\exp(-|x-e_k|/\tau)}
{\sum_{j=0}^K\exp(-|x-e_j|/\tau)}.
\]
The normalized soft coordinate is $\Delta b^{(K,\tau)}(x)$. As $K\to\infty$ with $\tau$ fixed, the discrete grid becomes dense and the weighted sum converges to the expectation of a truncated Laplace-type density on $[\alpha,\beta]$ centered at $x$:
\[
\Delta b^{(K,\tau)}(x)
\longrightarrow
B_{\tau}(x)
:=
\frac{\int_{\alpha}^{\beta} u \exp(-|u-x|/\tau)\,du}
{\int_{\alpha}^{\beta} \exp(-|u-x|/\tau)\,du}.
\]
The quantity $B_{\tau}(x)$ is a smoothed version of $x$. Since the exponential kernel concentrates around $x$ as $\tau\to 0^+$, we have
\[
B_{\tau}(x)\to x
\qquad
\text{for every } x\in[\alpha,\beta].
\]
Therefore,
\[
\lim_{\tau\to 0^+}\lim_{K\to\infty}
\Delta b^{(K,\tau)}(x)
=
x.
\]
Applying this to $\hat{y}$ and $y$ gives
\[
\lim_{\tau\to 0^+}\lim_{K\to\infty}
\widetilde{\mathcal{L}}_{\mathrm{Grid}}^{(K,\tau)}(\hat{y},y)
=
\lim_{\tau\to 0^+}
\left|
B_{\tau}(\hat{y})-B_{\tau}(y)
\right|
=
|\hat{y}-y|,
\]
which completes the proof.
\end{proof}

\begin{corollary}[Approximation Behavior]
\label{cor:gridloss_approx}
On a fixed interval $[\alpha,\beta]$, normalized GridLoss approaches an MAE-like ordinal distance as the grid resolution increases and the temperature decreases. Finer grids reduce discretization error, while finite temperature smooths the hard binning operation and yields bounded gradients. Thus, $K$ controls target-scale resolution, whereas $\tau$ controls the smoothness--sharpness trade-off.
\end{corollary}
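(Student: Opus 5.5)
The statement to prove is Corollary~\ref{cor:gridloss_approx}. It is qualitative, so I would prove it by making its three claims precise and deriving each from results already established: Theorem~\ref{thm:gridloss_consistency} and Lemmas~\ref{lem:gridloss_monotone}--\ref{lem:gridloss_lipschitz}. The natural split is through the intermediate quantity $B_\tau(x)$ from the proof of Theorem~\ref{thm:gridloss_consistency}. For $\hat y,y\in[\alpha,\beta]$ I would write
\[
\bigl|\Delta\,|b(\hat y)-b(y)| - |\hat y-y|\bigr|
\le \bigl|\Delta b(\hat y)-B_\tau(\hat y)\bigr| + \bigl|\Delta b(y)-B_\tau(y)\bigr| + \bigl|B_\tau(\hat y)-\hat y\bigr| + \bigl|B_\tau(y)-y\bigr|,
\]
which follows from the reverse triangle inequality. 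The first two terms are the \emph{discretization error}, governed by $K$. The last two are the \emph{smoothing bias}, governed by $\tau$.

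For the discretization term, I would observe that $\Delta b^{(K,\tau)}(x)$ is a ratio of Riemann sums with mesh $\Delta$ for the integrals defining $B_\tau(x)$. The integrand $u\mapsto e^{-|u-x|/\tau}$ has total variation $O(1)$ on $[\alpha,\beta]$, so each Riemann sum errs by $O(\Delta)$ relative to its integral. The denominator integral is bounded below by roughly $\tau\,(1-e^{-(\beta-\alpha)/2\tau})$, which keeps the ratio stable. Together these give an error of order $O(\Delta(1+(\beta-\alpha)/\tau))$, which vanishes as $K\to\infty$ and supports "finer grids reduce discretization error". An alternative is to invoke the limit already stated in Theorem~\ref{thm:gridloss_consistency} and only remark on the rate.

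For the smoothing term, I would compute $B_\tau(x)-x$ explicitly as the mean of a truncated Laplace density centred at $x$. It is bounded by $\tau$ in absolute value, because truncation only pulls the mean toward the interior, within one scale parameter. It vanishes at interior points as $\tau\to0$, with boundary effects decaying like $e^{-\min(x-\alpha,\beta-x)/\tau}$. Combining the two bounds yields the "MAE-like" approximation with an explicit trade-off in $K$ and $\tau$.

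The gradient claim follows directly from Lemma~\ref{lem:gridloss_grad_bound}. For finite $\tau$, the normalized loss has subgradients bounded by $\Delta\cdot K/(2\tau)=(\beta-\alpha)/(2\tau)$, independent of $K$. By Lemma~\ref{lem:gridloss_lipschitz} the loss is also $(\beta-\alpha)/(2\tau)$-Lipschitz. This shows $K$ controls resolution without affecting gradient scale, while $\tau$ trades sharpness (smaller bias) against smoothness (smaller gradient bound).

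The main obstacle is the uniform Riemann-sum estimate near the interval endpoints and in the regime $\tau\lesssim\Delta$, where the kernel is sharper than the grid and the ratio estimate degenerates. I would first restrict to $\Delta\ll\tau$, which matches the iterated limit order $K\to\infty$ before $\tau\to0$ used in Theorem~\ref{thm:gridloss_consistency}, and state the corollary in that regime.
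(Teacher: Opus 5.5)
Your route differs from the paper's. The paper proves the corollary in two sentences: it quotes Theorem~\ref{thm:gridloss_consistency} for the MAE limit and Lemma~\ref{lem:gridloss_grad_bound} for the bound $K/(2\tau)$. It leaves the claims about what $K$ and $\tau$ ``control'' as interpretation.

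Your proposal is more quantitative. Splitting the error into a $K$-dependent discretization term and a $\tau$-dependent smoothing bias turns those two sentences into actual estimates. Your Riemann-sum step also supplies a justification that the theorem's proof only asserts.

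The bias bound $|B_\tau(x)-x|\le\tau$ is correct. Take $a=x-\alpha$, $c=\beta-x$ with $c\ge a$. The numerator is $F(c)-F(a)$, where $0\le F(t)=\tau^2(1-e^{-t/\tau}(1+t/\tau))\le\tau\cdot\tau(1-e^{-t/\tau})$, so it is at most $\tau$ times the denominator.

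Your normalized constant $(\beta-\alpha)/(2\tau)$ is also right, but it supports a different reading than the paper's. For the unnormalized GridLoss actually used in training, the bound $K/(2\tau)$ grows with $K$, which is what Remark~\ref{rem:practical_gridloss} relies on.

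There is one slip to fix. Since $\sum_k w_k=1$ and $e_k=\alpha+k\Delta$, you have $\Delta b(x)=\sum_k e_k w_k(x)-\alpha$. So $\Delta b(x)\to B_\tau(x)-\alpha$, not $B_\tau(x)$; the paper's proof of the theorem carries the same offset. As written, your two discretization terms tend to $|\alpha|$ rather than $0$. Replace $\Delta b(x)$ by $\alpha+\Delta b(x)=\sum_k e_k w_k(x)$. The offset cancels in $\Delta\bigl(b(\hat y)-b(y)\bigr)$, so your reverse-triangle decomposition is otherwise unchanged. Centering the numerator at $x$ also keeps your Riemann-sum constants free of $|\alpha|$ and $|\beta|$.

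Finally, the obstacle you flag (uniform Riemann estimates and the regime $\tau\lesssim\Delta$) disappears if you bypass $B_\tau$:
\begin{enumerate}
\item By the triangle inequality, $|\sum_k e_k w_k(x)-x|\le\sum_k|e_k-x|\,w_k(x)$.
\item On each side of $x$ the anchor distances are $r+j\Delta$, $j=0,1,\dots$, with $r\le\Delta$.
\item Conditionally on the side, the weights are proportional to $q^j$ with $q=e^{-\Delta/\tau}$.
\item A truncated geometric law has mean at most $q/(1-q)$, and $\Delta q/(1-q)=\Delta/(e^{\Delta/\tau}-1)\le\tau$.
\end{enumerate}
Hence $\sum_k|e_k-x|\,w_k(x)\le\Delta+\tau$ for every $x\in[\alpha,\beta]$ and all $K,\tau$, which gives
\[
\Bigl|\Delta\,\bigl|b(\hat y)-b(y)\bigr|-|\hat y-y|\Bigr|\le\frac{2(\beta-\alpha)}{K}+2\tau .
\]
This covers $\tau\lesssim\Delta$ as well, and shows the limit holds jointly, in either order, not only in the iterated order used by the theorem.
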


\begin{proof}
The convergence to MAE follows from Theorem~\ref{thm:gridloss_consistency}. The role of finite temperature follows from Lemma~\ref{lem:gridloss_grad_bound}, which bounds the gradient magnitude by $K/(2\tau)$ for fixed $K$ and $\tau$. Thus, larger $K$ gives a finer discretization of the target interval, while $\tau$ controls how sharply each scalar value is assigned to nearby grid anchors.
\end{proof}

\begin{remark}[Relationship to Soft Label Regression]
\label{rem:soft_label}
Theorem~\ref{thm:gridloss_consistency} positions GridLoss within a broader family of soft-binning regression losses. Related approaches include:
\begin{itemize}
    \item \textbf{Label Distribution Learning (LDL):} LDL represents targets as distributions over discrete labels; GridLoss can be viewed as using Gibbs-type soft assignments centered at $\hat{y}$ and $y$.
    
    \item \textbf{Ordinal Regression:} Ordinal regression methods often enforce rank consistency through hard or probabilistic class decompositions. GridLoss instead uses soft indices to provide a differentiable relaxation of ordinal positioning for continuous targets.
    
    \item \textbf{Histogram or Distributional Losses:} Histogram losses and Earth Mover's Distance compare target distributions. GridLoss compares each prediction--target pair through its soft grid index, yielding a sample-wise ordinal distance.
\end{itemize}
The consistency result shows that GridLoss approaches MAE in the high-resolution, low-temperature limit while maintaining smooth gradients for finite $\tau$.
\end{remark}

\begin{remark}[Practical Implications]
\label{rem:practical_gridloss}
Theorem~\ref{thm:gridloss_consistency} has several practical implications:
\begin{enumerate}
    \item \textbf{Hyperparameter guidance:} Increasing $K$ improves target-scale resolution, but very large $K$ with very small $\tau$ can increase gradient magnitudes, as Lemma~\ref{lem:gridloss_grad_bound} shows. Thus, $K$ and $\tau$ should be selected jointly to balance resolution and stability.
    
    \item \textbf{MAE-like behavior:} Since normalized GridLoss approaches MAE in the high-resolution, low-temperature limit, it inherits an MAE-like robustness profile while retaining smooth gradients at finite temperature.
    
    \item \textbf{Ordinal structure:} Unlike standard pointwise losses, GridLoss explicitly operates in a discretized ordinal space, encouraging predictions to respect relative placement along the target range.
\end{enumerate}
\end{remark}

\begin{proposition}[GridLoss as Interpolation]
\label{prop:gridloss_interpolation}
For fixed $K$ and varying $\tau$, GridLoss interpolates between:
\begin{enumerate}
    \item As $\tau\to\infty$, $w_k(x)\to 1/(K+1)$ for all $k$, so $b(x)\to K/2$ and $\mathcal{L}_{\mathrm{Grid}}^{(K,\tau)}(\hat{y},y)\to 0$ for all $\hat{y},y$.
    
    \item As $\tau\to 0^+$, the weights concentrate on the nearest anchor, so $\mathcal{L}_{\mathrm{Grid}}^{(K,\tau)}(\hat{y},y)$ converges to a hard ordinal bin distance.
\end{enumerate}
As $K$ increases, the grid provides a finer discretization of the target interval. Under the high-resolution and low-temperature limit in Theorem~\ref{thm:gridloss_consistency}, the normalized GridLoss converges to MAE.
\end{proposition}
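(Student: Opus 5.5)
For part 1 ($\tau\to\infty$), I will work directly with the logits $z_k(x)=-|x-e_k|/\tau$. Fix $\hat{y},y$ and the finite grid $e_0<\cdots<e_K$. Each $|x-e_k|$ is a finite number, so every logit tends to $0$ as $\tau\to\infty$. Since softmax is continuous, $w_k(x)\to 1/(K+1)$ for every $k$. Then
\[
b(x)\to \frac{1}{K+1}\sum_{k=0}^K k=\frac{K}{2}.
\]
This limit does not depend on $x$. By continuity of $|\cdot|$, it follows that $|b(\hat y)-b(y)|\to 0$. For a whole batch, the loss is a finite average of such terms, so it also tends to $0$. For a quantitative version I can use Lemma~\ref{lem:gridloss_lipschitz}: $|b(\hat y)-b(y)|\le \frac{K}{2\tau}|\hat y-y|\to 0$. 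This gives the same conclusion directly.

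For part 2 ($\tau\to 0^+$), I will first fix $x$ that has a unique nearest anchor $e_{k^\ast(x)}$. For any $k\neq k^\ast$,
\[
\frac{w_k}{w_{k^\ast}}=\exp\!\big(-(|x-e_k|-|x-e_{k^\ast}|)/\tau\big)\to 0,
\]
because the gap in the exponent is strictly positive. Hence $w\to \delta_{k^\ast}$ and $b(x)\to k^\ast(x)$. The per-sample loss therefore converges to the hard ordinal bin distance $|k^\ast(\hat y)-k^\ast(y)|$. Averaging over the batch preserves the limit.

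The main obstacle is ties, where $x$ lies exactly at a midpoint $(e_k+e_{k+1})/2$. There the two nearest anchors share the weight equally in the limit, so $b(x)\to k+\tfrac12$. I will state the limit with the rounding convention $k^\ast(x)\in\{k,k+\tfrac12\}$. Alternatively, I will note that midpoints form a finite, hence measure-zero, set, so the hard bin distance holds almost everywhere. In either form the statement of a hard ordinal bin distance remains correct.

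The final sentence follows from results already in the paper. The finer-discretization claim follows since the grid spacing is $(\beta-\alpha)/K$. The MAE claim is exactly Theorem~\ref{thm:gridloss_consistency}, which I will cite without reproving.
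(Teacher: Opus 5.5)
Your proposal is correct and follows the same route as the paper's proof: the logits $-|x-e_k|/\tau$ vanish as $\tau\to\infty$, giving uniform weights and $b(x)\to K/2$; as $\tau\to 0^+$ the weights concentrate on the nearest anchor; and the MAE claim is delegated to Theorem~\ref{thm:gridloss_consistency}. Your additions are valid refinements of steps the paper's proof glosses over: the explicit ratio argument $w_k/w_{k^\ast}\to 0$, the handling of midpoint ties (where $b(x)\to k+\tfrac12$), and the rate $|b(\hat y)-b(y)|\le \frac{K}{2\tau}|\hat y-y|$ obtained from Lemma~\ref{lem:gridloss_lipschitz}.
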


\begin{proof}
As $\tau\to\infty$, each exponential term satisfies $\exp(-|x-e_k|/\tau)\to 1$, so $w_k(x)\to 1/(K+1)$ for all $k$. Hence
\[
b(x)\to \frac{1}{K+1}\sum_{k=0}^K k = \frac{K}{2}
\]
for all $x$, and the index distance between any $\hat{y}$ and $y$ converges to zero.

As $\tau\to 0^+$, the weights concentrate on the nearest grid anchor. Thus $b(x)$ converges to the nearest-anchor index, and GridLoss becomes the absolute distance between hard ordinal bin indices. Finally, the convergence of normalized GridLoss to MAE under joint high-resolution and low-temperature behavior follows from Theorem~\ref{thm:gridloss_consistency}.
\end{proof}

\subsection{Objective Interpretation}
\label{sec_objective_interpretation}

\begin{proposition}[Objective Interpretation of RISE as Reweighted Empirical Risk]
\label{prop:rise_objective_interpretation}
Let $\{(x_i,y_i)\}_{i=1}^N$ be the training set and let $\ell_i(\theta)$ denote the per-sample loss at parameters $\theta$, e.g.,
$\ell_i(\theta)=\ell(f_\theta(x_i),y_i)$. At RISE refresh step $e$, define a positive weight
\[
w_i^{(e)}=\varphi(q_i^{(e)})>0,
\]
where $q_i^{(e)}\in\{0,\ldots,G-1\}$ is the empirical loss-quantile bin index of sample $i$, computed from $\{\ell_j(\theta_e)\}_{j=1}^N$, and $\varphi(\cdot)$ is a monotone increasing bin-to-weight map. Consider SGD where each mini-batch is drawn by weighted sampling using probabilities
\[
p_i^{(e)}
=
\frac{w_i^{(e)}}{\sum_{j=1}^N w_j^{(e)}},
\]
and the update uses the uncorrected stochastic gradient $\nabla_\theta \ell_i(\theta)$ of the sampled example, without inverse-probability correction. Conditional on the weights $\{w_i^{(e)}\}$, the expected stochastic gradient equals the gradient of the reweighted empirical risk
\[
\mathcal{L}_{\mathrm{RISE}}^{(e)}(\theta)
=
\sum_{i=1}^N p_i^{(e)}\ell_i(\theta)
=
\frac{\sum_{i=1}^N w_i^{(e)}\ell_i(\theta)}
{\sum_{j=1}^N w_j^{(e)}}.
\]
That is,
\[
\mathbb{E}_{i\sim p^{(e)}}[\nabla_\theta \ell_i(\theta)]
=
\nabla_\theta \mathcal{L}_{\mathrm{RISE}}^{(e)}(\theta),
\]
where the weights are treated as fixed between refresh steps. Therefore, RISE can be interpreted as following stochastic gradients of a sequence of reweighted empirical risks that emphasize higher-loss quantile bins when $\varphi(\cdot)$ is increasing.
\end{proposition}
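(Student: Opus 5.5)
The argument is a direct computation of an expectation under a discrete distribution, followed by linearity of the gradient. Throughout I fix the refresh step $e$ and condition on the weights $\{w_i^{(e)}\}$. Each weight is a deterministic function of the bin indices $q_i^{(e)}$, which are computed from $\theta_e$ and not from the current iterate $\theta$, so the weights are constants as functions of $\theta$. I first check that $p^{(e)}$ is a valid probability distribution on $\{1,\ldots,N\}$. Since $\varphi>0$, every $w_i^{(e)}>0$, so $\sum_j w_j^{(e)}>0$, each $p_i^{(e)}\in(0,1]$, and the probabilities sum to one.

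Next I compute the expected stochastic gradient for a single draw $i\sim p^{(e)}$. By definition of expectation over a finite distribution,
\[
\mathbb{E}_{i\sim p^{(e)}}[\nabla_\theta \ell_i(\theta)]=\sum_{i=1}^N p_i^{(e)}\nabla_\theta\ell_i(\theta).
\]
The $p_i^{(e)}$ do not depend on $\theta$, so linearity of differentiation lets me pull the gradient outside the finite sum. This gives $\nabla_\theta\sum_i p_i^{(e)}\ell_i(\theta)=\nabla_\theta\mathcal{L}_{\mathrm{RISE}}^{(e)}(\theta)$. Substituting $p_i^{(e)}=w_i^{(e)}/\sum_j w_j^{(e)}$ yields the second closed form of the objective. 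The only regularity needed is that each $\ell_i$ is differentiable at $\theta$; if it is not, the same identity holds with subgradients, by linearity of the sum rule for the finite sum. For a mini-batch of size $n$ drawn i.i.d. (with replacement) from $p^{(e)}$, the averaged gradient $\frac1n\sum_{t=1}^n\nabla\ell_{i_t}$ has the same expectation by linearity. So the identity holds for mini-batches as well as single samples.

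The main subtlety, and the point I would treat most carefully, is what "conditional on the weights" means. The weights are refreshed from $\theta_e$, which itself depends on earlier samples. So the statement should be read as an expectation with respect to the current draw only, given the filtration up to the current step, during which $\theta$ and the $w^{(e)}$ are fixed. I would also note that PyTorch's weighted sampler without replacement only approximately matches i.i.d. draws; the exact identity is for sampling with replacement.

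Finally, I justify the emphasis claim. Because $\varphi$ is increasing, samples in higher loss-quantile bins receive weakly larger coefficients $p_i^{(e)}$. The instantiation $w_i=1+g_i(\beta-1)/(G-1)$ with $\beta\ge1$ is nondecreasing in $g_i$, and it is strictly increasing when $\beta>1$. Hence $\mathcal{L}_{\mathrm{RISE}}^{(e)}$ places more mass on the tail of the error distribution than the uniform empirical risk. The two coincide when $\beta=1$.
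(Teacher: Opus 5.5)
Your proposal is correct and matches the paper's proof step for step: you write the expectation as $\sum_i p_i^{(e)}\nabla_\theta\ell_i(\theta)$, pull the gradient outside the finite sum because the $p_i^{(e)}$ are constant in $\theta$, and substitute $p_i^{(e)}=w_i^{(e)}/\sum_j w_j^{(e)}$. Your additions (positivity of the normalizer, the i.i.d.\ mini-batch extension, the filtration reading of ``conditional on the weights,'' and the with/without-replacement caveat) are sound refinements the paper leaves implicit; only the subgradient aside overreaches, because for nonconvex, nonsmooth $\ell_i$ (e.g.\ ReLU networks) the Clarke sum rule is merely an inclusion, so an averaged selection of subgradients need not be a subgradient of $\mathcal{L}_{\mathrm{RISE}}^{(e)}$.
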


\begin{proof}
Fix a refresh step $e$ and condition on the weights $\{w_i^{(e)}\}$ and probabilities $\{p_i^{(e)}\}$. A single-sample stochastic gradient drawn by weighted sampling satisfies
\[
\mathbb{E}_{i\sim p^{(e)}}[\nabla_\theta \ell_i(\theta)]
=
\sum_{i=1}^N p_i^{(e)}\nabla_\theta \ell_i(\theta).
\]
Between refresh steps, the sampling weights are fixed with respect to the current optimization variable $\theta$. Hence
\[
\sum_{i=1}^N p_i^{(e)}\nabla_\theta \ell_i(\theta)
=
\nabla_\theta
\left(
\sum_{i=1}^N p_i^{(e)}\ell_i(\theta)
\right)
=
\nabla_\theta \mathcal{L}_{\mathrm{RISE}}^{(e)}(\theta).
\]
Substituting
\[
p_i^{(e)}
=
\frac{w_i^{(e)}}{\sum_{j=1}^N w_j^{(e)}}
\]
gives the normalized reweighted empirical risk. If $\varphi(\cdot)$ is increasing in the loss-quantile bin, higher-loss samples receive larger sampling probabilities, so the resulting stochastic gradients emphasize the tail of the empirical loss distribution.
\end{proof}

\section{Full Benchmark Results and Statistical Analyses} 
\subsection{Full Benchmark Results} \label{app:full_results}
Table~\ref{tab:regression_results} extends the summary in Table~\ref{tab:regression_results_compact} with per-model RMSE across all nine benchmarks. This table compares against classical baselines~\cite{pedregosa2011scikit}, tree-based ensembles~\cite{breiman2001random, prokhorenkova2018catboost, ke2017lightgbm, chen2016xgboost, geurts2006extratrees}, MLP-style architectures~\cite{gorishniy2021revisiting, gorishniy2022embeddings, gorishniy2024tabm, yan2023t2gformer}, transformer-based
models~\cite{huang2020tabtransformer, somepalli2021saint, song2019autoint, wang2021dcnv2, chen2023excelformer}, retrieval-based
methods~\cite{gorishniy2024tabr, ye2024modernnca, NODE, arik2021tabnet}, other deep tabular architectures \cite{klambauer2017snn, jeffares2023tangos,
ye2024ptarl, DANets, badirli2020grownet, nader2022dnnr, wu2024switchtab, marton2024grande}, and tabular foundation
models (either pretrained parametric models or non‑parametric inferences)~\cite{wang2022amformer, hollmann2022tabpfn, hollmann2025tabpfn, liu2023tabptm, margeloiu2024tabebm}. Symbols indicate runs that could not be completed due to time constraints~($\dag$), GPU memory limitations~($\ddag$), training divergence~($\S$), hard-coded dataset size restrictions~($\ast$), or prohibitive algorithm complexity~($\diamondsuit$). TabNSM achieves the lowest RMSE on seven of nine benchmarks, with competitive performance on CP where LightGBM remains the strongest baseline. 

\begin{table*}[t]
\centering
\caption{Evaluation of different models on regression tasks. Results report RMSE (lower is better).} \label{tab:regression_results}
\begin{threeparttable}
\resizebox{\textwidth}{!}{%
\begin{tabular}{llccccccccc}
\toprule
& \textbf{Model} & SA & CP & AQ & TO & SC & CD & ChD & EVP & RES \\
\midrule
\multirow{6}{*}{\rotatebox[origin=c]{90}{Classical}}
& LinearRegression     & 0.402          & 9.277         & 15.062    & 0.028    & 3.997      & 0.236 & 8717.544        & 35.156   & 3163 \\
& KNN-5                & 0.238         & 3.719         & 12.816    & 0.029    & 3.602      & 1.347 & 15860.523       & 26.104   & 3212 \\
& Lasso                & 0.402          & 9.277         & 15.064    & 0.028    & 3.964      & 0.251 & 8717.439        & 35.155   & 3163 \\
& Ridge                & 0.402          & 9.277         & 15.062    & \second{0.027}    & 3.982      & 0.236 & 8717.479        & 35.156   & 3163\\
& SVR-Linear           & 0.406         & 11.663        & 17.854    & 0.042    & 3.974     & $\diamondsuit, \dag$   & $\diamondsuit, \dag$    & $\diamondsuit, \dag$    & $\diamondsuit, \dag$ \\
& SVR-RBF              & 0.175         & 7.491         & 19.416   & 0.038    & 3.964       & $\ast$  & $\ast$            & $\ast$     & $\ast$ \\

\cmidrule(lr){2-11}
\multirow{5}{*}{\rotatebox[origin=c]{90}{Tree-based}}
& Random Forest  & 0.201 & 2.842 & 9.095 & \second{0.027} & 4.092 & 0.733 & 17525 & $\dag$ & $\dag$ \\
& CatBoost       & 0.171 & 2.739 & 7.692 & \second{0.027} & 3.971 & 0.009 & 19000 & 3.291 & 3144 \\
& LightGBM       & 0.174 & \second{2.680} & 7.902 & \second{0.027} & 4.639 & 0.043 & 18688 & 2.799 & \second{3135} \\
& XGBoost        & 0.197 & 2.690 & 11.583 & 0.028 & 4.116 & 0.040 & 18188 & 2.894 & 3186 \\
& Extratrees     & 0.182 & \best{2.617} & 5.869 & \second{0.027} & 3.455 & \best{0.004} & 22882 & 2.809 & 46267 \\
\cmidrule(lr){2-11}
\multirow{5}{*}{\rotatebox[origin=c]{90}{MLP-style}}
& MLP            & 0.181 & 2.939 & 8.967 & 0.028 & 3.920 & 0.046 & 4963 & 1.847 & 3161 \\
& ResNet         & 0.167 & 2.920 & 4.578 & 0.028 & 3.948 & 0.028 & 13022 & 1.556 & 3163 \\
& MLP-PLR        & 0.192 & 2.925 & 5.296 & 0.028 & 3.966 & 0.143 & 18476 & 3.222 & 40947 \\
& TabM           & 0.269 & 3.139 & 8.862 & 0.028 & 3.966 & 0.095 & 16032 & 2.101 & 40941 \\
& T2G-Former     & 0.192 & 2.937 & 5.358 & 0.028 & 3.962 & $\ddag$ & $\ddag$ & 1.867 & 21058 \\
\midrule
\multirow{6}{*}{\rotatebox[origin=c]{90}{Transformer}}
& TabTransformer & 0.298 & 8.605 & 4.189 & 0.028 & 3.866 & 0.046 & 7188 & 2.338 & 3936 \\  
& SAINT          & 0.156 & 2.901 & $\ddag$ & $\ddag$ & $\ddag$ & $\ddag$ & $\ddag$ & $\ddag$ & $\ddag$ \\
& FT-Transformer & 0.192 & 2.924 & 7.330 & 0.029 & 3.967 & 0.014 & 8940 & \second{1.507} & 3163 \\
& AutoInt        & 0.213 & 3.050 & 6.969 & 0.028 & 3.919 & 0.007 & 4999 & 2.587 & 3163 \\
& DCN-v2         & 0.165 & 8.133 & \second{4.081} & 0.038 & 3.751 & $\S$ & \second{4475} & 1.992 & 6667 \\
& ExcelFormer    & 0.165 & 2.901 & 8.088 & 0.028 & 3.667 & $\ddag$ & $\ddag$ & 1.608 & $\ddag$ \\
\cmidrule(lr){2-11}
\multirow{4}{*}{\rotatebox[origin=c]{90}{Retrieval}}
& TabR           & 0.150 & 2.867 & 5.062 & 0.029 & 3.967 & $\ddag$ & $\ddag$ & $\ddag$ & $\ddag$ \\
& ModernNCA      & 0.167 & 2.833 & 12.186 & 0.028 & 3.730 & $\ddag$ & $\ddag$ & $\ddag$ & $\ddag$ \\
& NODE           & 0.270 & 7.150 & 15.575 & 0.028 & 3.966 & 0.052 & 22661 & 5.157 & 3187 \\
& TabNet         & 0.279 & 3.346 & 8.615 & 0.028 & \second{3.454} & 0.284 & 24271 & 2.653 & 4152 \\
\midrule
\multirow{8}{*}{\rotatebox[origin=c]{90}{Other}}
& SNN            & 2.453 & 14.809 & 17.833 & 0.028 & 3.932 & 5.276 & 18759 & 71.000 & 3163 \\
& TANGOS         & 0.181 & 2.991 & 4.690 & 0.028 & 3.953  & 0.136 & 7298 & 3.219 & 4094 \\
& PTaRL          & 0.251 & 3.183 & 5.201 & 0.028 & 3.570  & 0.196 & 7865 & 4.157 & 4082 \\
& DANets         & 0.289 & 3.256 & 5.148 & 0.028 & 3.970  & 0.606 & 4808 & 9.367 & 41321 \\
& GrowNet        & 2.141 & 16.855 & 20.089 & 0.028 & 3.982 & $\S$ & 25264 & 13.448 & 124527 \\
& DNNR           & \second{0.149} & 3.065 & $\S$ & $\S$ & $\S$ & $\dag$ & $\dag$ & $\dag$ & $\dag$ \\
& SwitchTab      & 0.457 & 3.884 & 6.384 & 0.029 & 3.969  & 0.285 & 17978 & 6.227 & 4083 \\
& GRANDE         & 0.347 & 11.716 & 23.815 & 0.028 & 3.964 & 4.353 & 25440 & 69.129 & 25620 \\
\cmidrule(lr){2-11}
\multirow{3}{*}{\rotatebox[origin=c]{90}{\footnotesize Foundation}}
& AMFormer       & 0.205 & 2.980 & 16.363 & 0.028 & 3.964 & $\ddag$ & 17858 & $\ddag$ & 21149 \\
& TabEBM         & 0.178 & 2.740 & 6.748 & \second{0.027} & 3.522 & $\diamondsuit$ & $\diamondsuit$ & $\diamondsuit$ & $\diamondsuit$ \\
& TabPFN-v2      & 0.168 & 3.006 & 10.609 & \second{0.027} & 3.965 & $\ast$ & $\ast$ & $\ast$ & $\ast$ \\
& TabPTM         & 0.579 & 4.015 & 18.627 & 0.029 & 3.827 & 5.253 & 25630  & 65.471 & 4171 \\
\midrule
& \textbf{Ours}  & \best{0.137} & 2.842 & \best{3.999} & \best{0.021} & \best{3.322} & \second{0.007} & \best{4460} & \best{1.199} & \best{3103} \\
\bottomrule
\end{tabular}%
}
\begin{tablenotes}[flushleft]
\footnotesize
\item[] $\dag$ Not reported due to time constraints (24-hour limit); runs exceeding this duration were terminated.
\item[] $\ddag$ Not reported due to GPU memory limitations (GPU capacity: 79.21 GiB); models exceeding this requirement could not be executed.
\item[$\S$] Training diverged; the model produced losses orders of magnitude outside the comparable range and is treated as non-convergent.
\item[] $\ast$ Applicable only to limited numbers of features and samples (hard-coded).
\item[] $\diamondsuit$ Infeasible at dataset scale (algorithm complexity prohibitive).

\end{tablenotes}
\end{threeparttable}
\end{table*}

\subsection{Statistical Significance of Results} \label{app:stats}

We assess whether TabNSM's improvements over baselines are statistically significant using two complementary non-parametric tests~\cite{demsar2006statistical}.

\paragraph{Friedman Test}
We first test the global null hypothesis that all models perform equally across benchmarks. For each benchmark we assign integer ranks (1 = lowest RMSE = best), then apply the Friedman test to the resulting $k \times N$ rank matrix. We use the $k = 37$ models that completed all five benchmarks with reliable results (SA, CP, AQ, TO, SC), giving $N = 5$; the two excluded models (SAINT, DNNR) diverged or ran out of memory on three of the five. The Friedman statistic is $\chi^2(36) = 79.3$, $p = 4.25 \times 10^{-5}$, rejecting the null hypothesis and confirming that performance differences across models are not due to chance. Figure~\ref{fig:cd_diagram} shows the resulting Critical Difference diagram.

\begin{figure}[ht]
\centering
\includegraphics[width=0.95\linewidth]{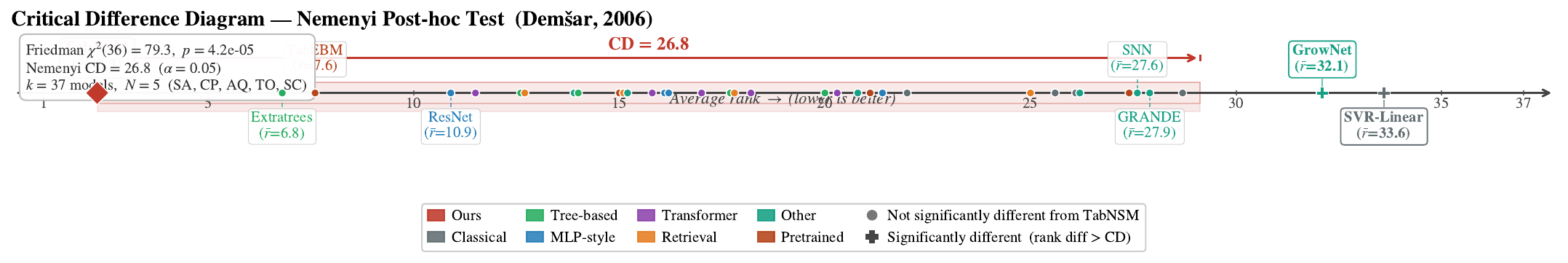}
\caption{Critical Difference diagram (Nemenyi post-hoc test, $\alpha = 0.05$). Each dot is a model coloured by architectural family; its horizontal position is its average rank across five benchmarks (lower = better). The CD bracket shows the minimum rank difference required for statistical significance (CD = 26.8). Models marked with $+$ (GrowNet, SVR-Linear) fall outside TabNSM's critical difference band and are significantly worse. The large CD is expected when $k \gg N$; the Friedman test nonetheless confirms a significant overall difference ($p = 4.25\times10^{-5}$).}
\label{fig:cd_diagram}
\end{figure}

The Nemenyi post-hoc test is known to be conservative when $k \gg N$: with 37 models and only 5 benchmarks, the critical difference is CD = 26.8 rank units, leaving most pairwise comparisons inconclusive regardless of the true effect. This is a known limitation of the Friedman--Nemenyi procedure in wide-but-short rank matrices. The Wilcoxon signed-rank test sidesteps this by testing each pair directly across all nine benchmarks, providing substantially more statistical power. The two tests are therefore not contradictory: Nemenyi confirms only the largest rank gaps with high confidence given $N=5$, while Wilcoxon leverages all nine benchmarks to establish significance for all baselines.

\paragraph{Pairwise Wilcoxon Signed-Rank Test}
Following the rejection of the global null, we conduct 38 one-sided pairwise Wilcoxon signed-rank tests comparing TabNSM against each baseline (H$_1$: TabNSM rank $<$ baseline rank) across all nine benchmarks, imputing missing values as worst rank. $p$-values are corrected for multiple comparisons using the Benjamini--Hochberg procedure~\citep{benjamini1995controlling} at $\alpha = 0.05$. TabNSM achieves a significantly lower rank than every baseline after correction (all corrected $p \leq 0.02$). The closest competitors by average rank are ResNet ($\bar{r} = 9.72$, $p = 0.002$), Extratrees ($\bar{r} = 11.44$, $p = 0.020$), MLP ($\bar{r} = 11.78$, $p = 0.002$), CatBoost ($\bar{r} = 12.44$, $p = 0.010$), and LightGBM ($\bar{r} = 12.78$, $p = 0.010$), compared to TabNSM's average rank of $\bar{r} = 1.89$ across nine benchmarks. Figure~\ref{fig:wilcoxon} summarises the average ranks and significance results for all baselines. 

We note that worst-rank imputation is conservative for TabNSM (which has no missing runs) and liberal for baselines with many incomplete runs; results should therefore be interpreted as an aggregate trend rather than a precise pairwise comparison for models with many incomplete entries.

\begin{figure}[ht]
\centering
\includegraphics[width=0.95\linewidth]{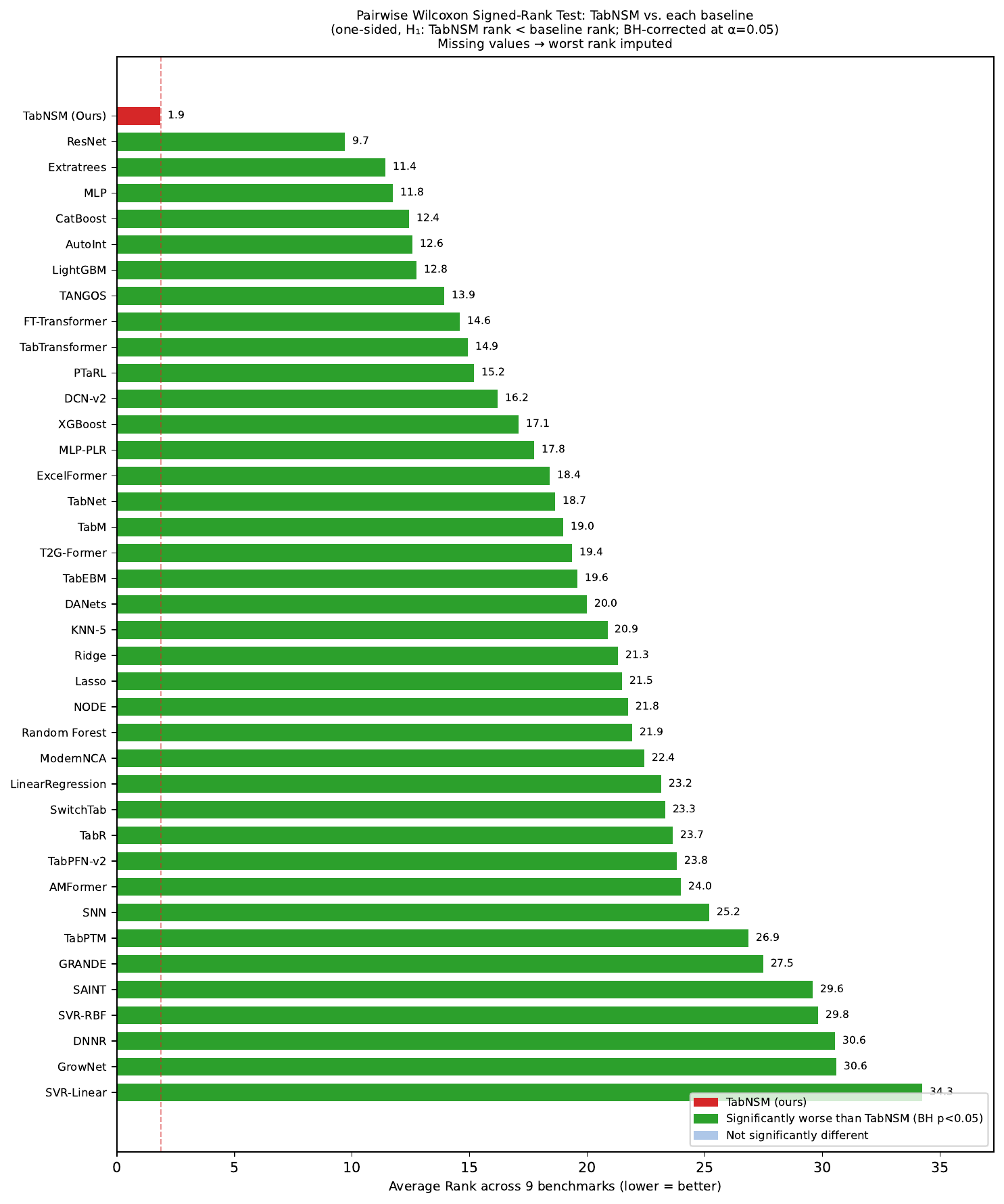}
\caption{Average rank across nine benchmarks for each model (lower is better), coloured by architectural family. All baselines are statistically significantly worse than TabNSM (one-sided Wilcoxon signed-rank test, Benjamini--Hochberg corrected~\citep{benjamini1995controlling}, $\alpha = 0.05$; all corrected $p \leq 0.02$). Missing values are imputed as worst rank.}
\label{fig:wilcoxon}
\end{figure}
\section{Brief Summary of Baseline Model Setup} \label{app:baselines}

All baseline models were implemented in Python using standard libraries. Data were loaded from ARFF files into pandas DataFrames, categorical strings decoded to UTF-8 and converted to numeric where possible, missing values filled with zeros, and the target variable removed before training. The data were split into 80\% training+validation and 20\% test, with the training+validation portion further split into 90\% training and 10\% validation. Neural and hybrid models (DeepGBM, TabNet, FT-Transformer) used z-score normalization, while tree-based models used raw values.

Performance was evaluated on the held-out test set using RMSE (primary), MSE, MAE, and $R^2$. Hyperparameter optimization was performed with Optuna (3-fold CV on training+validation, 20 trials, maximizing negative RMSE). The best configuration was used to train a final model on the training set (using the validation set when supported) and evaluated on the held-out test set.

The code for models marked with $^*$ was not provided for the regression task. Models marked with $\star$ were not tuned for most large-dataset benchmarks, often resulting in non-convergence or outlier performance. Therefore, we do not report their results in Table~\ref{tab:regression_results}.

\section{Instance-adaptive feature selectivity} \label{app:interpretability}

To examine whether TabNSM learns instance-adaptive feature selectivity, we visualize per-instance feature attributions on the test set. Figure~\ref{fig:interpretability} shows a feature attribution heatmap, where each row corresponds to a test instance and each column corresponds to an input feature. The attribution score is computed using gradient--embedding attribution and normalized to $[0,1]$ within each instance. Brighter values indicate features with stronger influence on the model's regression output for that sample.

The heatmap reveals two patterns. First, some features appear as consistently bright columns across many instances, suggesting globally influential predictors. Second, many instances exhibit distinct sparse attribution patterns, indicating that TabNSM dynamically emphasizes different feature subsets depending on the input context. This behavior is consistent with the design of the Adaptive Sparse Interaction Module (ASIM), which encourages instance-adaptive sparse feature interaction rather than relying on a fixed global feature weighting.

\begin{figure}[tbh]
  \centering
  \includegraphics[height=6cm, width=0.85\linewidth]{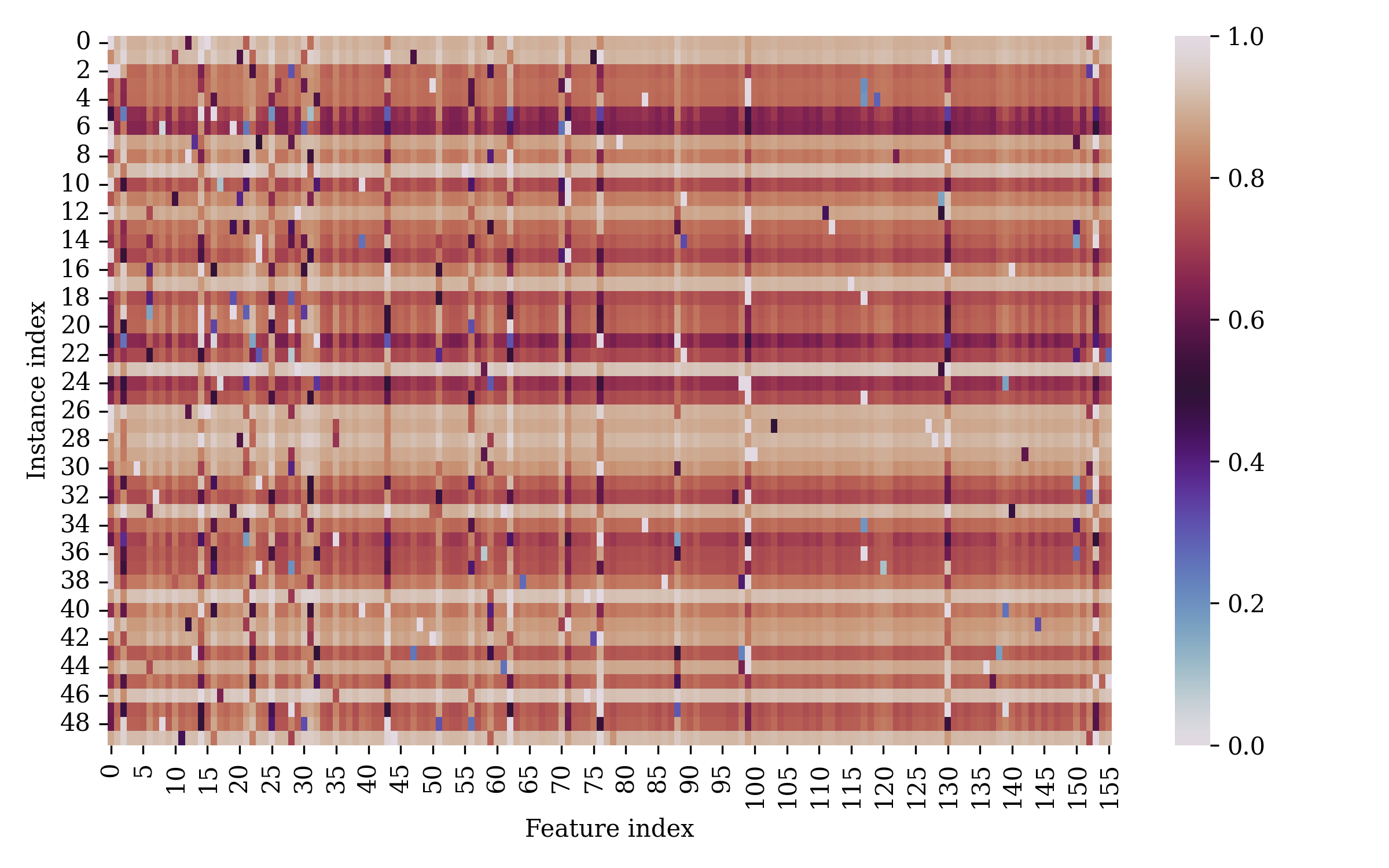}
  \caption{\textbf{Per-instance feature attribution heatmap for TabNSM.}
  Each row corresponds to a test instance and each column corresponds to an input feature. Cell intensity represents the normalized attribution score, computed using gradient--embedding attribution and scaled to $[0,1]$ within each instance. Brighter cells indicate features with stronger influence on the prediction. The sparse and instance-varying attribution patterns suggest that TabNSM adaptively focuses on different feature subsets across samples.}
  \label{fig:interpretability}
\end{figure}
\section{Dataset Descriptions} \label{app:datasets}

This section provides detailed descriptions of the datasets used in our
experiments, along with the associated prediction tasks.

\begin{table}[tbh]
    \centering
    \caption{Dataset Statistics}
    \label{table:datasets}
    \resizebox{0.7\textwidth}{!}{
    \begin{tabular}{lccc}
    \toprule
    \textbf{Dataset} & \textbf{Abbreviation} & \textbf{\# Samples} & \textbf{\# Features} \\
    \midrule
    Sarcos  & SA & 48,933 & 27 \\
    CPU     & CP & 8,192 & 12 \\
    Topo    & TO & 8,885 & 266 \\
    SCAN    & SC & 4,355 &  386 \\
    Crime Data & CD & 1,004,991 & 401 \\
    U.S. Chronic Disease & ChD & 309,215 & 405 \\
    Electric Vehicle Population Data & EVP & 264,628 & 392 \\
    Air Quality & AQ & 18,862 & 390 \\
    Real Estate Sales & RES & 1,048,575 & 389 \\
    \bottomrule
    \end{tabular}}
\end{table}

\paragraph{ topo\_2\_1.}
The Topo dataset was used, with 1,143 molecular descriptors computed using the Adriana. Molecular structures and response values were taken directly from the original studies. The final attribute in each file was used as the prediction target; for the Topo dataset, this target corresponds to oz267.

\paragraph{Sarcos Robot Arm Dataset.}
The Sarcos dataset consists of measurements from a 7-degree-of-freedom
robotic arm performing various motions. Each instance includes joint
positions, velocities, and accelerations, and the prediction task is to
estimate the torque of a specific joint (V22). This dataset is commonly
used to evaluate regression models for learning inverse dynamics.

\paragraph{Electric Vehicle Population Dataset.}
This dataset contains records of battery electric and plug-in hybrid
vehicle registrations in Washington State. Features include vehicle
attributes and registration information, and the task is to predict the
electric driving range.

\paragraph{Air Quality}
The dataset consists of New York City air quality surveillance indicators measured across neighborhoods and time. Features describe pollutant exposure and related environmental health metrics, while the prediction target, Data Value, represents the quantitative measurement of each air quality indicator used for supervised learning.

\paragraph{U.S. Chronic Disease Dataset.}
The Chronic Disease dataset provides standardized public health
indicators collected across U.S. states and territories. The prediction
task involves estimating numeric values associated with chronic disease
metrics.

\paragraph{Crime Dataset.}
The Crime dataset consists of crime incident reports from Los Angeles
since 2020. Each record includes temporal, categorical, and spatial
features, and the task is to predict the geographic area associated with
each incident.

\paragraph{Real Estate Sales Dataset.}
This dataset contains property transaction records reported by
municipalities in Connecticut. Given property and transaction features,
the task is to predict the sales ratio, a commonly used indicator in real
estate valuation and property tax assessment.

\paragraph{Standardized Centralized Alzheimer’s Neuroimaging (SCAN) Dataset.}
The SC benchmark was constructed from controlled-access multimodal data
obtained through the NACC data-request process under the NACC Data Use
Agreement. The data include neuroimaging, fluid biomarkers, and demographic
variables collected across Alzheimer's Disease Research Centers. The
prediction task is to estimate Mini-Mental State Examination (MMSE) scores.

\subsection{Dataset Links}
\label{links}
Eight benchmark datasets are publicly available; NACC/SCAN data are available
to researchers through NACC's data-request process under the NACC Data Use
Agreement. Table~\ref{tab:benchmark_datasets} provides the public source or
access page for each dataset.

\begin{table}[tbh]
\centering
\caption{Benchmark dataset sources.}
\label{tab:benchmark_datasets}
\begin{threeparttable}
\small
\begin{tabularx}{0.95\textwidth}{l>{\raggedright\arraybackslash}X}
\toprule
Dataset & URL \\
\midrule
CPU & \url{https://www.openml.org/search?type=data&status=active&id=42841} \\
Topo & \url{https://www.openml.org/search?type=data&status=active&id=422} \\
Sarcos Robot Arm Data & \url{https://www.openml.org/search?type=data&status=active&id=43873&sort=runs} \\
Electric Vehicle Population Data & \url{https://catalog.data.gov/dataset/electric-vehicle-population-data} \\
U.S. Chronic Disease Data & \url{https://catalog.data.gov/dataset/u-s-chronic-disease-indicators} \\
Air Quality & \url{https://catalog.data.gov/dataset/air-quality} \\
Crime Data from 2020 to Present & \url{https://catalog.data.gov/dataset/crime-data-from-2020-to-present} \\
Real Estate Sales Data & \url{https://catalog.data.gov/dataset/real-estate-sales-2001-2018} \\
Standardized Centralized Alzheimer's Neuroimaging (SCAN) Data & \url{https://scan.naccdata.org/} \\
\bottomrule
\end{tabularx}
\begin{tablenotes}[flushleft]
\footnotesize
\item[] Note: All datasets were accessed by August 2025; NACC/SCAN data require an approved request and Data Use Agreement.
\end{tablenotes}
\end{threeparttable}
\end{table}

\section{Seed Variation}
\label{appendix:seed}
To assess the robustness of the proposed approach with respect to random initialization and training stochasticity, we repeat each experiment 20 times using different random seeds. Figure~\ref{fig:seed} reports the distribution of test Root Mean Squared Error (RMSE) across runs using boxplots. The results show that TabNSM maintains stable performance across seeds, with relatively small interquartile ranges and limited variation across runs. This suggests that the reported performance is not driven by a favorable initialization, but reflects consistent behavior under repeated training. Overall, the limited variance indicates that the proposed method is robust to randomness introduced during training.
\begin{figure}[tbh]
  \centering
  \includegraphics[width=0.5\textwidth]{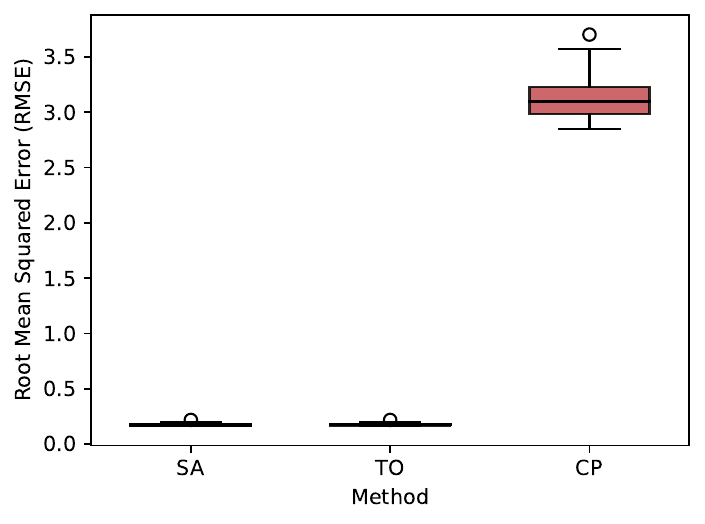}
  \caption{\textbf{Sensitivity analysis with respect to random seed initialization.}
  The boxplot shows the distribution of test Root Mean Squared Error (RMSE) over 20 independent runs with different random seeds. Boxes indicate the interquartile range (IQR), the center line denotes the median, and whiskers extend to $1.5\times\mathrm{IQR}$. Lower values indicate better performance.}
  \label{fig:seed}
\end{figure}
\section{Efficiency and Memory Footprint}
\label{Memory}

Figure~\ref{fig:sampling} and Table~\ref{tab:catboost_tabnsm_efficiency} compare the practical training time of CatBoost and TabNSM across datasets with varying sample sizes $n$ and feature dimensions $d$. Runtime is measured under a fixed validation-loss threshold protocol, where training time is recorded until each method reaches a predefined validation-loss level. This threshold-based criterion provides a practical comparison of convergence speed across models with different optimization behavior, although the methods use different computational backends: CatBoost is evaluated on CPU, whereas TabNSM uses GPU acceleration.

As shown in Table~\ref{tab:catboost_tabnsm_efficiency}, CatBoost runtime increases as dataset size and feature dimensionality grow, consistent with the scaling behavior of gradient-boosted decision trees on large tabular datasets. In contrast, TabNSM maintains low runtime across the evaluated settings. This reflects the mini-batch training procedure and the use of sparse feature interaction modeling, which avoids dense all-pair feature mixing and limits the dependence on feature dimensionality. These results suggest that TabNSM provides favorable practical efficiency in large-scale, high-dimensional regression settings.

Tables~\ref{tab:gpu_peak_mem} and~\ref{tab:cpu_mem_increase} further report memory usage on representative large datasets. Since CatBoost is run on CPU, GPU memory is reported only for TabNSM. CPU memory increase is reported for both methods. TabNSM shows moderate GPU memory usage and lower CPU memory increase than CatBoost on the evaluated large datasets. Overall, the results highlight complementary scalability profiles: CatBoost remains a strong CPU-based tree ensemble, while TabNSM offers efficient GPU-accelerated training with favorable memory behavior on large, high-dimensional datasets.

\begin{table}[t]
\centering
\small
\caption{Runtime in seconds for CatBoost and TabNSM under the validation-loss threshold protocol. Lower is better. CatBoost is evaluated on CPU, whereas TabNSM uses GPU acceleration.}
\label{tab:catboost_tabnsm_efficiency}
\begin{tabular}{lrr|rr}
\toprule
Dataset & $n$ & $d$ & CatBoost Time (s) & TabNSM Time (s) \\
\midrule
cpu    & 8,192     & 12  & 1.042  & \textbf{0.346}  \\
topo   & 8,885     & 266 & 10.788 & \textbf{0.333}  \\
sarcos & 48,933    & 27  & 2.111  & \textbf{0.333}  \\
EVP    & 264,628   & 392 & 14.986 & \textbf{0.346}  \\
ChD    & 309,215   & 405 & 17.252 & \textbf{0.349}  \\
CD     & 1,004,991 & 401 & 45.476 & \textbf{0.398}  \\
\bottomrule
\end{tabular}
\end{table}

\begin{table}[t]
\centering
\small
\caption{Peak GPU memory usage in MB for TabNSM. CatBoost is run on CPU, so GPU memory is not applicable.}
\label{tab:gpu_peak_mem}
\begin{tabular}{lcc}
\toprule
\textbf{Dataset} & \textbf{CatBoost} & \textbf{TabNSM} \\
\midrule
EVP & N/A & 477 \\
ChD & N/A & 529 \\
CD  & N/A & 1621 \\
\bottomrule
\end{tabular}
\end{table}

\begin{table}[t]
\centering
\small
\caption{CPU memory increase in MB during training. Lower is better.}
\label{tab:cpu_mem_increase}
\begin{threeparttable}
\begin{tabular}{lcc}
\toprule
\textbf{Dataset} & \textbf{CatBoost} & \textbf{TabNSM} \\
\midrule
EVP & 3763.77  & \textbf{3006.73} \\
ChD & 4325.94  & \textbf{3408.70} \\
CD  & 12080.61 & \textbf{8524.16} \\
\bottomrule
\end{tabular}
\begin{tablenotes}[flushleft]
\footnotesize
\item[] These large-scale settings are not evaluated with TabPFN because of its configuration limits on the number of samples and features.
\end{tablenotes}
\end{threeparttable}
\end{table}

\begin{figure}[tbh]
  \centering
  \includegraphics[width=0.65\textwidth]{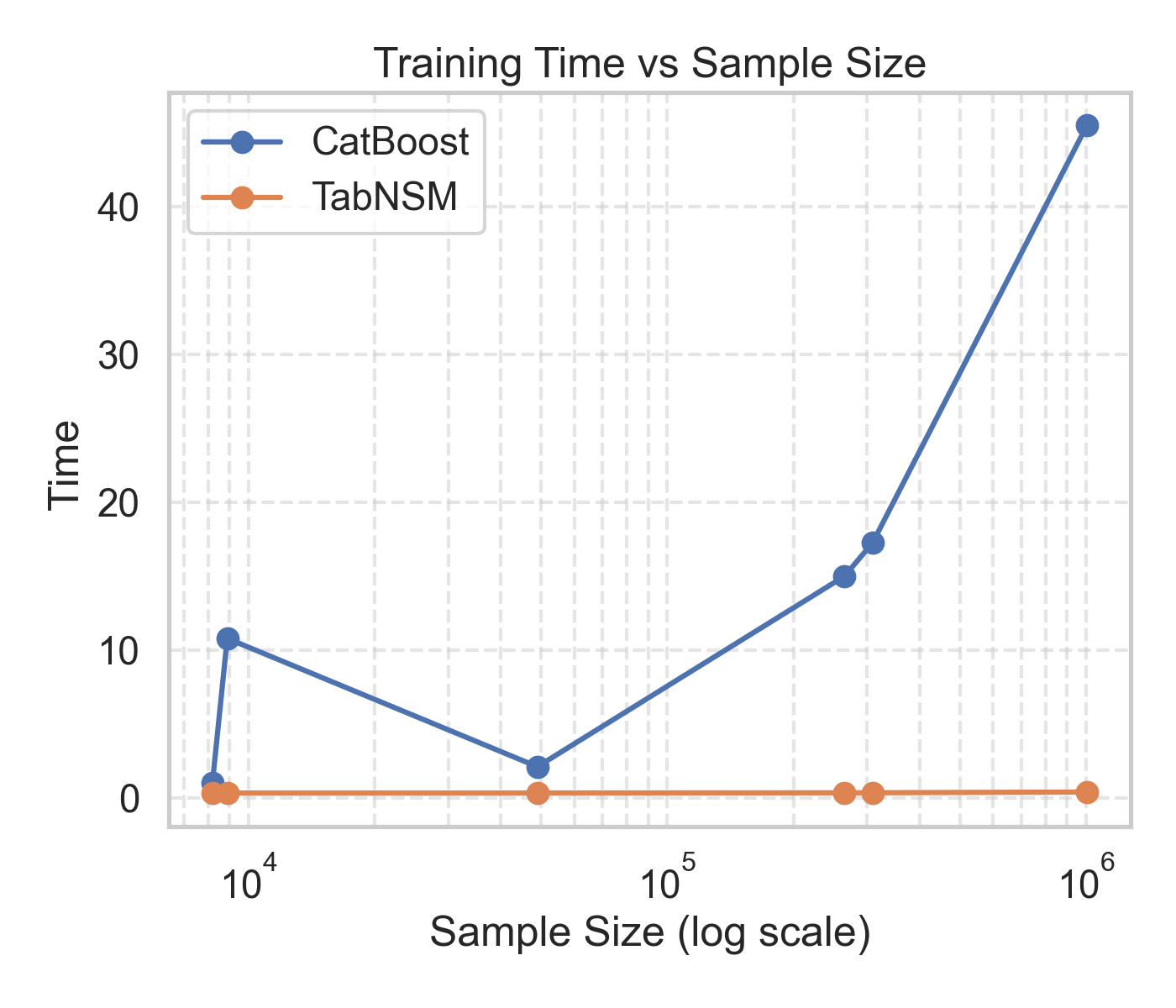}
  \caption{\textbf{Training time versus sample size.}
  Runtime is measured under the validation-loss threshold protocol. CatBoost runtime increases with dataset size, whereas TabNSM maintains low runtime across the evaluated settings, reflecting mini-batch GPU training and sparse feature interaction modeling.}
  \label{fig:sampling}
\end{figure}

Additional runtime and memory measurements are provided in Appendix~\ref{Memory}; Figure~\ref{fig:bubble_chart} summarises the accuracy--efficiency trade-off across all evaluated methods.
\begin{figure}[tbh]
  \centering
  \includegraphics[width=\textwidth]{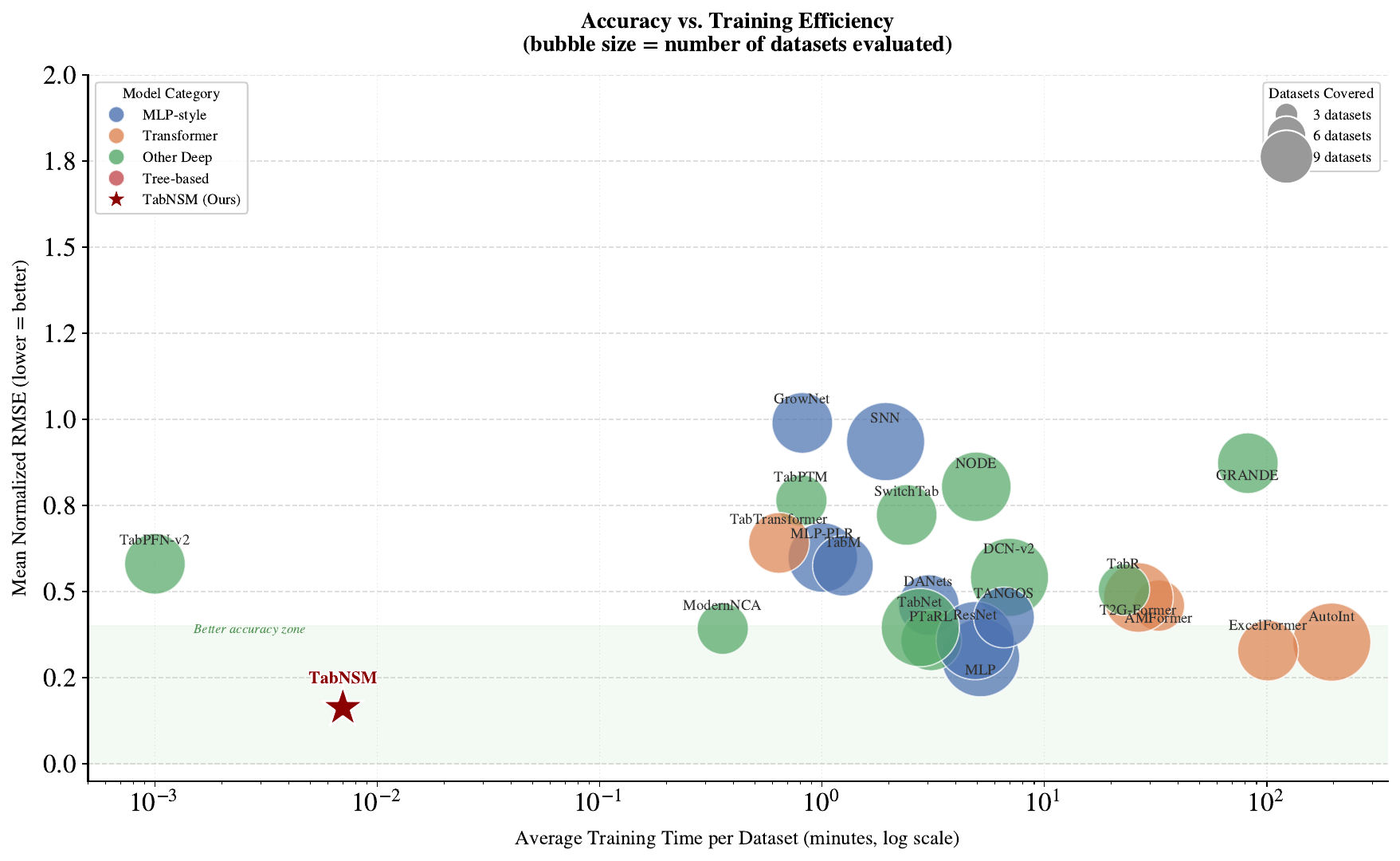}
  \caption{\textbf{Accuracy--efficiency trade-off for tabular learning methods.}
  The figure summarizes predictive accuracy and computational efficiency across representative tabular learning models.}\label{fig:bubble_chart}
\end{figure}


\section{Sparse Feature-wise Attention: Full Formulation} \label{app:sparse_attn}

Within ASIM, we treat the $D$ input features as a sequence of token embeddings $X_{\mathrm{tok}} \in \mathbb{R}^{D \times d_{\mathrm{tok}}}$ and apply a structured sparse self-attention operator $\mathcal{A}_{\mathrm{sparse}}$~\citep{eslamian2025tabnsa, NSA}. Rather than computing all $D^2$ pairwise feature interactions, each feature token attends only to a compact, instance-adaptive subset of other features. This subset is constructed from three complementary components designed to jointly cover local, global, and instance-specific interaction patterns.

\paragraph{Local neighborhood.}
Features adjacent in the input ordering often encode correlated measurements (e.g., related embedding dimensions or grouped sensor readings). For feature $d$, we retain the $s$ nearest neighbors as a sliding window:
\begin{equation}
\mathcal{K}^{\mathrm{win}}_d = \mathbf{k}_{d-s:d+s}, \quad \mathcal{V}^{\mathrm{win}}_d = \mathbf{v}_{d-s:d+s}.
\end{equation}

\paragraph{Compressed global summary.}
To give each query feature coarse awareness of the full feature set, we partition the $D$ features into blocks of size $b$ (stride $r \le b$) and compress each block to a single representative key--value pair via a learnable MLP $\phi$:
\begin{equation}
\mathcal{K}^{\mathrm{cmp}}_d = \left\{\phi\!\left(\mathbf{k}_{ib+1:\,ib+b}\right) \;\middle|\; 0 \le i \le \left\lfloor\tfrac{D-b}{r}\right\rfloor\right\}, \label{eq:compress}
\end{equation}
with an analogous $\mathcal{V}^{\mathrm{cmp}}_d$. This reduces global context to $\mathcal{O}(D/r)$ keys.

\paragraph{Instance-adaptive block selection.}
The compressed attention scores in Eq.~\eqref{eq:compress} provide a low-cost proxy for identifying which feature groups are most relevant to the current query. We compute:
\begin{equation}
\mathbf{p}_d^{\mathrm{slc}} = \operatorname{Softmax}\!\left(\mathbf{q}_d^\top \mathcal{K}^{\mathrm{cmp}}_d\right),
\end{equation}
rank all blocks by $\mathbf{p}_d^{\mathrm{slc}}$, and retrieve the original full-resolution keys and values for the top-$m$ blocks:
\begin{equation}
\mathcal{I}_d = \{i \mid \mathrm{rank}(\mathbf{p}_d^{\mathrm{slc}}[i]) \le m\}, \quad
\mathcal{K}^{\mathrm{slc}}_d = \operatorname{Cat}\bigl[\{\mathbf{k}_{ib+1:(i+1)b} \mid i \in \mathcal{I}_d\}\bigr].
\end{equation}
This recovers fine-grained interactions for the most relevant feature groups with a fixed budget of $mb$ keys per query.

\paragraph{Gated aggregation.}
The three branch outputs are merged via input-dependent gating scalars $g_d^c \in [0,1]$, produced by a small MLP with sigmoid activation applied to the query embedding:
\begin{equation}
\tilde{X}_d = \sum_{c \in \{\mathrm{win},\,\mathrm{cmp},\,\mathrm{slc}\}} g_d^c \cdot \operatorname{Attn}\!\left(\mathbf{q}_d,\, \mathcal{K}^c_d,\, \mathcal{V}^c_d\right).
\end{equation}
The total attended context per feature is $N = 2s + \lfloor(D-b)/r\rfloor + mb \ll D$, giving overall complexity $\mathcal{O}(D\,d_{\mathrm{tok}}\,N)$, sub-quadratic in $D$ for fixed sparse hyperparameters $(s,b,r,m)$.

\section{Feature-Token Mixing (FTM): Full Formulation} \label{app:tabmixer}

Adapting the dual-axis mixing block introduced in TabMixer~\citep{eslamian2025tabmixer}, the FTM operator $\mathcal{M}(\cdot)$ acts on the residual sparse representation $U=w_1X_{\mathrm{tok}}+\tilde{X}\in\mathbb{R}^{D\times d_{\mathrm{tok}}}$, where $D$ feature tokens each carry a $d_{\mathrm{tok}}$-dimensional embedding.

The operator applies two parallel mixing branches whose outputs are fused by element-wise gating:
\begin{align}
U_1 &= \mathrm{GeLU}\!\left(W_2\bigl[\mathrm{ReLU}(W_1\,\mathrm{LN}(U)^\top + b_1)\bigr] + b_2\right)^\top \label{eq:tm1} \\
U_2 &= W_4\bigl(\mathrm{ReLU}(W_3\cdot\mathrm{LN}(U) + b_3)\bigr) + b_4 \label{eq:tm2} \\
U_3 &= \mathrm{SiLU}\!\left(\mathrm{LN}(U_1 \odot U_2)\right) \label{eq:tm3} \\
Z   &= U + U_3, \label{eq:tm4}
\end{align}
where $W_1,W_2 \in \mathbb{R}^{d_{\mathrm{tok}}\times d_{\mathrm{tok}}}$ and $W_3,W_4 \in \mathbb{R}^{B_f \times B_f}$ are learnable weights, with $B_f$ a constant feature mini-batch size ($B_f \ll D$); $\mathrm{LN}(\cdot)$ denotes Layer Normalization; and $\odot$ is element-wise multiplication.

The two branches serve distinct mixing roles. The \emph{embedding-wise} branch (Eq.~\eqref{eq:tm1}) transposes the input so that the MLP operates across the $d_{\mathrm{tok}}$ embedding dimensions for each feature independently, capturing intra-token structure. The \emph{feature-wise} branch (Eq.~\eqref{eq:tm2}) mixes along the token axis. To avoid a quadratic cost in $D$, we process the $D$ feature tokens in disjoint mini-batches of size $B_f$ (padding the last batch if needed). The weight matrices $W_3,W_4$ are applied per batch, leading to a total complexity of $\mathcal{O}(D \cdot B_f)$ for this branch---linear in $D$. In our experiments, $B_f=64$ and the number of feature tokens $D$ ranges from $128$ to $512$, so the feature-wise mixing overhead remains small.

The outputs of the two branches are combined via element-wise multiplication, followed by Layer Normalization and a SiLU activation (Eq.~\eqref{eq:tm3}). A residual connection adds the result back to the original representation (Eq.~\eqref{eq:tm4}), ensuring stable gradient flow while enriching the representation with globally mixed context.

\section{Additional Ablation Studies and Sensitivity Analyses}
\label{appendix:additional_ablation}

\subsection{Ablation of the FTM Pathway} \label{ablation:tabmixer}

We ablate the FTM pathway by removing it while keeping the sparse attention backbone and training protocol fixed. Table~\ref{tab:tabmixer-ablation} shows that ablating FTM degrades RMSE on CP, TO, SA, and EVP, with particularly large drops on EVP (570.09 vs. 1.199). This indicates that the adapted FTM pathway contributes to the overall model performance, while sparse selection remains the primary architectural performance driver.

\begin{table}[tbh]
\centering
\caption{Ablation of the FTM pathway. Lower RMSE is better.}
\label{tab:tabmixer-ablation}
\begin{tabular}{lcccc}
\toprule
 & CP & TO & SA & EVP \\
\midrule
w/o FTM & 3.966 & 0.034 & 0.176 & 570.09 \\
Full TabNSM & \textbf{2.842} & \textbf{0.021} & \textbf{0.137} & \textbf{1.199} \\
\bottomrule
\end{tabular}
\end{table}

\subsection{Ablation on Loss Function}
\label{ablation:loss_function}

We analyze the effect of different regression losses to justify the proposed GridLoss and its use in a blended objective. We compare standard pointwise losses, including MSE, MAE, and Huber, a logarithmic robust variant, LogCosh, and the proposed GridLoss across three representative datasets with different scales and noise characteristics. Figure~\ref{fig:ablation_lossfunction} reports RMSE on a log scale, with relative changes annotated with respect to GridLoss.

\begin{figure}[tbh]
  \centering
  \includegraphics[height=4.5cm, width=0.6\linewidth]{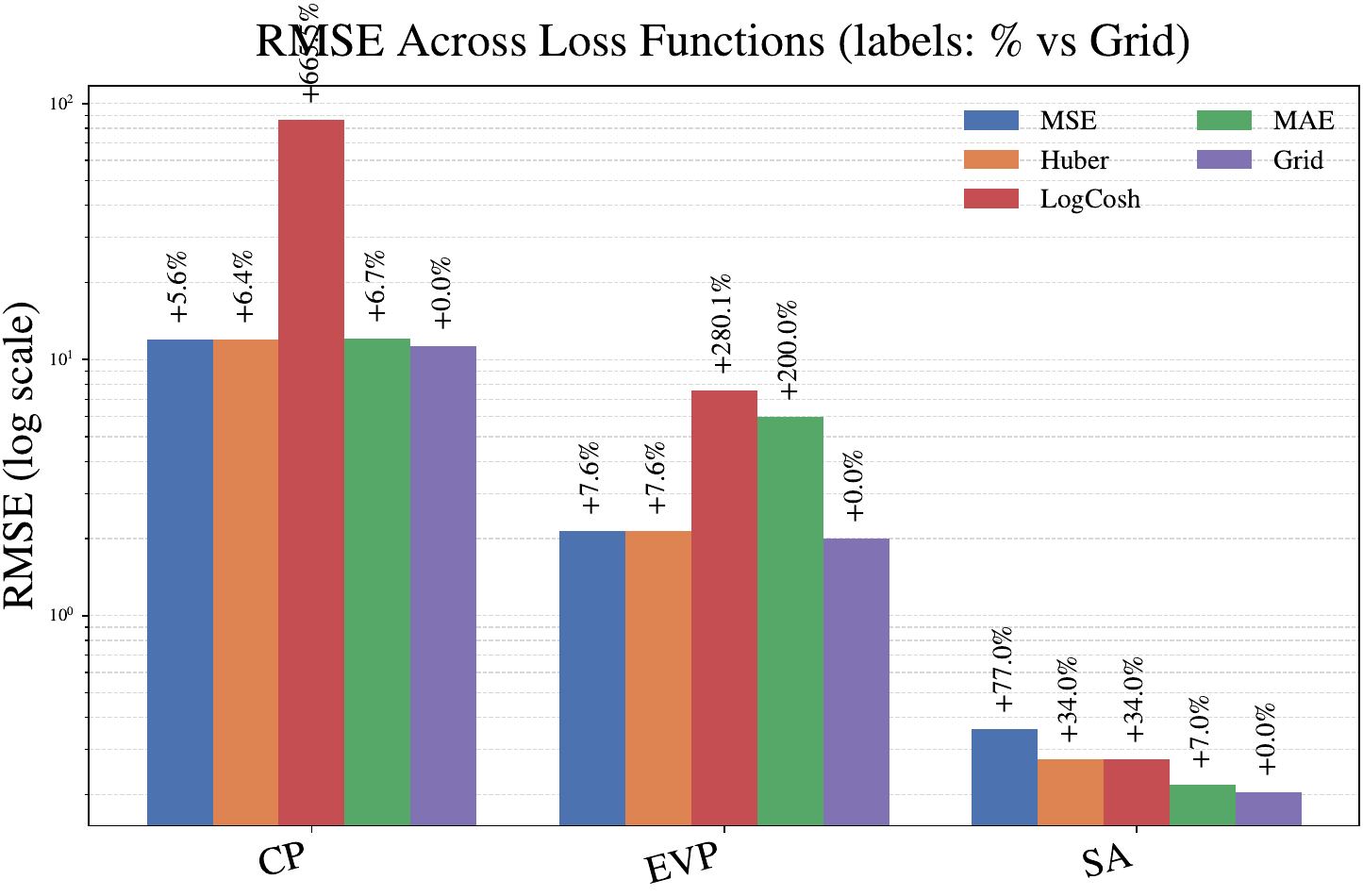}
  \caption{\textbf{Loss-function ablation across three datasets.}
  RMSE is reported on a log scale. Values above bars indicate relative change with respect to GridLoss. GridLoss achieves the lowest or comparable RMSE across the evaluated datasets, while pointwise losses such as MAE and LogCosh can degrade substantially in high-error regimes.}
  \label{fig:ablation_lossfunction}
\end{figure}

Across the evaluated benchmarks, GridLoss achieves competitive or superior RMSE relative to standard regression losses. Compared with MAE and LogCosh, GridLoss is less sensitive to scale variation and large residuals, while remaining competitive with Huber loss. The gains are most pronounced on datasets with heterogeneous error distributions, where purely pointwise losses may not adequately capture ordinal alignment along the target scale.

These results motivate the use of a blended objective in the full model,
\[
\mathcal{L}
=
\alpha \mathcal{L}_{\mathrm{Huber}}
+
(1-\alpha)\mathcal{L}_{\mathrm{Grid}},
\]
which combines the local robustness of Huber loss with the structure-aware ordinal supervision of GridLoss. Empirically, this combination yields stable optimization and strong predictive performance, particularly in high-dimensional regression settings. We fix $\alpha$ across all experiments and observe limited sensitivity to this choice; additional analysis is provided in Appendix~\ref{app:gridloss_consistency}.

\subsection{Ablation on effect of RISE}
 We evaluate the impact of RISE by disabling it and reverting to uniform sampling. As shown in Figure~\ref{fig:final_val_boxplot}, removing RISE increases both the mean validation error and the run-to-run variance. When RISE is enabled, validation performance is more consistent across runs, indicating improved optimization stability. Although RISE introduces additional sampling overhead, this cost is modest relative to the observed gains in validation performance. The box plot summarizes the distribution of final validation metrics across multiple random seeds, showing that RISE achieves a lower median error and reduced variability compared to uniform sampling.

\begin{figure}[t]
    \centering
    \begin{minipage}[b]{0.49\textwidth}
        \centering
        \includegraphics[height=5cm, width=5cm]{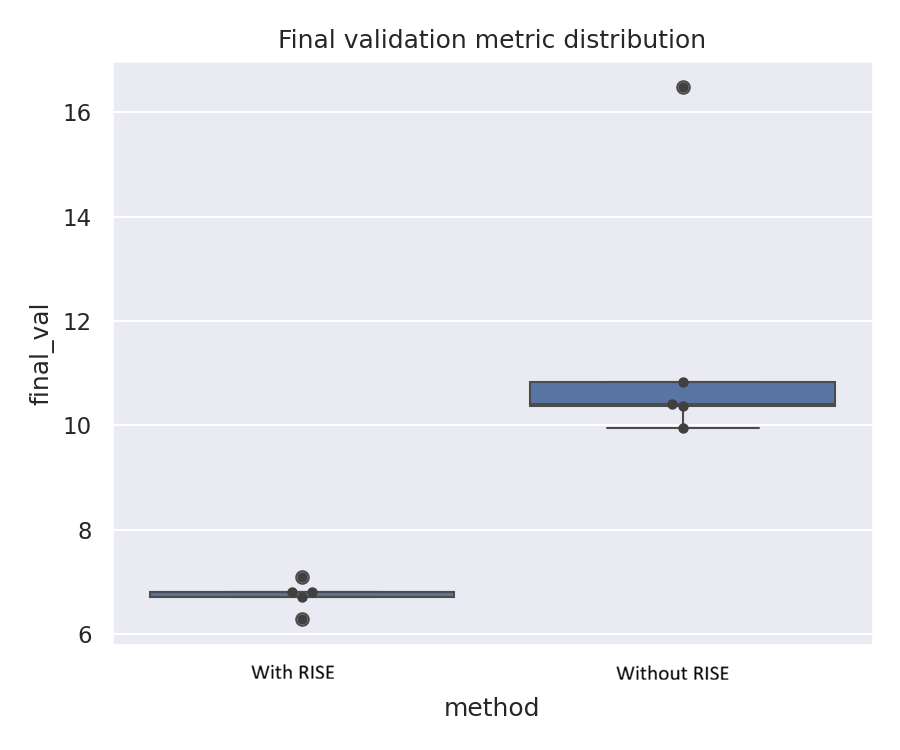}
        \caption{Final validation metric distributions across multiple runs for models trained with and without RISE.}
        \label{fig:final_val_boxplot}
    \end{minipage}
\end{figure}

\subsection{Sensitivity to Feature Dimensionality and Sample Size}
\label{appendix:dimension}

We evaluate robustness to changing feature dimensionality and training-set size. For feature-dimensionality analysis, we vary the \emph{feature fraction}, defined as the proportion of retained input features for each dataset, while keeping the number of rows fixed. For each fraction, we subsample feature columns using a fixed random seed and keep all other training settings identical to the main experimental protocol. For sample-size analysis, we vary the fraction of training samples while keeping the full feature set fixed.

Figure~\ref{fig:dimension_sample} reports MSE for TabNSM and baseline models on the TO and SC datasets. In the feature-fraction setting, TabNSM maintains stable performance across increasing dimensionality, whereas the baseline exhibits higher error and more pronounced non-monotonic behavior. In the sample-fraction setting, TabNSM remains competitive under reduced data availability. These trends suggest that TabNSM is robust to both high-dimensional feature spaces and limited training data. GridLoss and RISE further contribute to stability by providing structure-aware gradients and emphasizing difficult examples during training.

\begin{figure}[t]
  \centering
  \begin{subfigure}{0.7\textwidth}
    \centering
    \includegraphics[width=\linewidth]{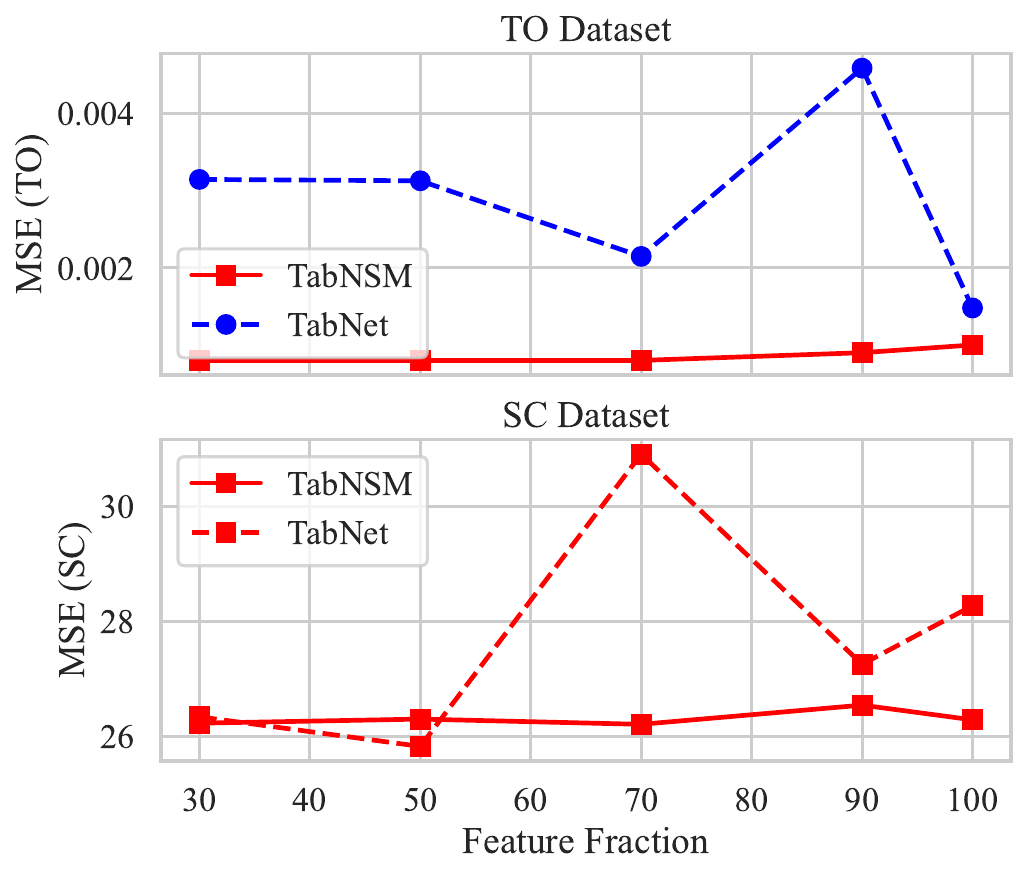}
    \caption{MSE versus feature fraction on the TO and SC datasets.}
    \label{fig:dimension}
  \end{subfigure}
  \begin{subfigure}{0.7\textwidth}
    \centering
    \includegraphics[width=\linewidth]{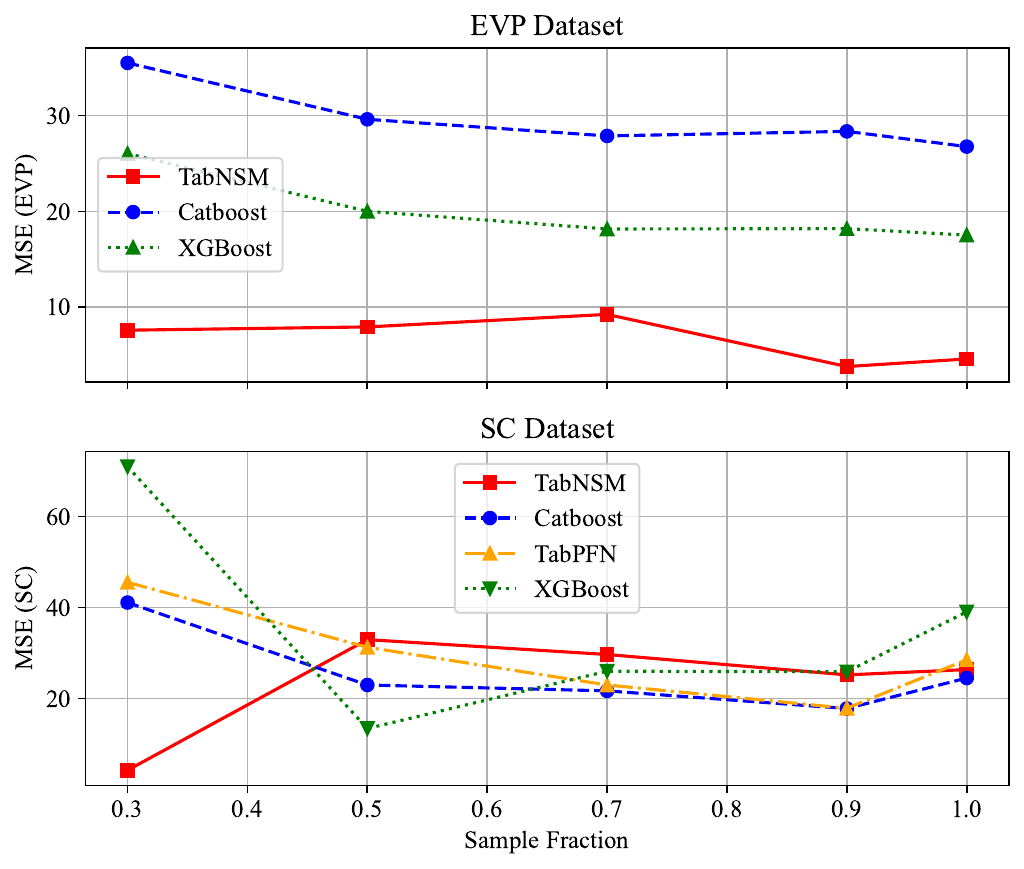}
    \caption{MSE versus training sample fraction on the TO and SC datasets.}
    \label{fig:sample}
  \end{subfigure}
  \caption{\textbf{Sensitivity to feature dimensionality and sample size.}
  The top panel evaluates robustness under varying numbers of retained input features, while the bottom panel evaluates robustness under varying fractions of training samples. Separate y-axes are used to accommodate differences in error scale.}
  \label{fig:dimension_sample}
\end{figure}

\subsection{Sensitivity to GridLoss Hyperparameters} \label{appendix:Gridloss}
We analyze the sensitivity of GridLoss to the temperature parameter $\tau$ and the blending coefficient $\alpha$ in the combined objective:
\[
\mathcal{L} = \alpha \mathcal{L}_{\text{Huber}} + (1-\alpha)\mathcal{L}_{\text{Grid}} .
\]
Figure~\ref{fig:grid_tau} reports RMSE as a function of $\tau$ across several datasets (CP, TO, and SCAN-2, abbreviated as SC). Performance varies smoothly over multiple orders of magnitude, indicating that GridLoss is not overly sensitive to the choice of $\tau$ within a broad range. In the main experiments, we use a fixed default setting for both $\tau$ and $\alpha$, while tuning the number of grid bins during training based on dataset-specific target statistics. The effects of varying $\tau$ and $\alpha$ are analyzed separately below.

\subsection{Sensitivity of \texorpdfstring{$\tau$}{tau} to Target Distribution Properties}
Table~\ref{tab:tau_behavior} summarizes the relationship between target distribution characteristics and the observed sensitivity of RMSE to the regularization parameter $\tau$. We find that the optimal choice of $\tau$ is primarily governed by the concentration of the bulk of the target distribution, as quantified by the ratio $\mathrm{IQR}/\mathrm{Std}$, rather than by tail heaviness alone.

For CP, the target distribution exhibits a moderately dispersed bulk with heavy tails. Increasing $\tau$ consistently improves robustness, resulting in a monotonic decrease in RMSE. In contrast, TO has a tightly concentrated bulk with rare but extreme outliers, favoring small $\tau$ values that preserve precision on the dominant mass while avoiding excessive smoothing. Finally, SCAN-2 is characterized by a degenerate bulk (IQR $=0$) and extremely rare but large outliers, leading to a highly non-monotonic RMSE--$\tau$ relationship in which very small $\tau$ values perform best.

These observations suggest that $\tau$ should be adapted to the effective bulk structure of the target distribution. Relying solely on tail statistics such as skewness or kurtosis is insufficient for selecting an appropriate regularization strength.

\begin{table}[tbh]
\centering
\caption{Target distribution statistics and their relationship to $\tau$ sensitivity.}
\label{tab:tau_behavior}
\begin{tabular}{lcccccl}
\toprule
Dataset & Std & IQR & IQR/Std & Tail Type & Preferred $\tau$ & RMSE--$\tau$ Behavior \\
\midrule
CP     & 18.40 & 13.00 & 0.71 & Heavy-tailed     & Large      & Monotonic decrease \\
TO     & 0.03  & 0.02  & 0.67 & Rare extremes    & Small      & Sharp peak \\
SCAN-2 & 4.99  & 0.00  & 0.00 & Degenerate bulk  & Very small & Highly non-monotonic \\
\bottomrule
\end{tabular}
\end{table}

\begin{figure}[tbh]
  \centering
  \resizebox{0.7\textwidth}{!}{
  \includegraphics[width=\linewidth]{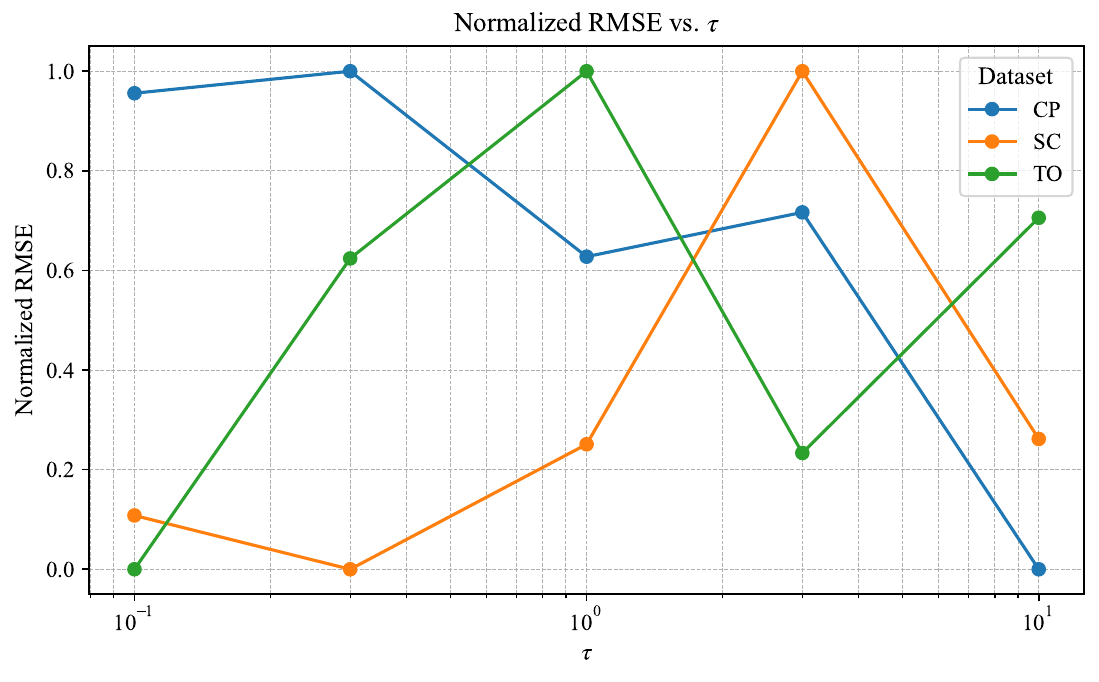}
  }
  \caption{Normalized RMSE as a function of $\tau$ across datasets.}
  \label{fig:grid_tau}
\end{figure}

\subsection{Effect of the Blending Coefficient \texorpdfstring{$\alpha$}{alpha}}
We next study the impact of the blending coefficient $\alpha$, which controls the trade-off between the Huber loss and the grid-based loss. As shown in Figure~\ref{fig:grid_alpha}, relying on either loss alone ($\alpha=0$ or $\alpha=1$) leads to degraded performance. In contrast, intermediate values of $\alpha$ consistently achieve lower normalized RMSE across all datasets, with optimal performance observed for $\alpha \in [0.4, 0.6]$. These results indicate that the two loss terms are complementary and that balancing robustness with structural regularization is critical for optimal performance.
\begin{figure}[tbh]
  \centering
  \resizebox{0.7\textwidth}{!}{
  \includegraphics[width=\linewidth]{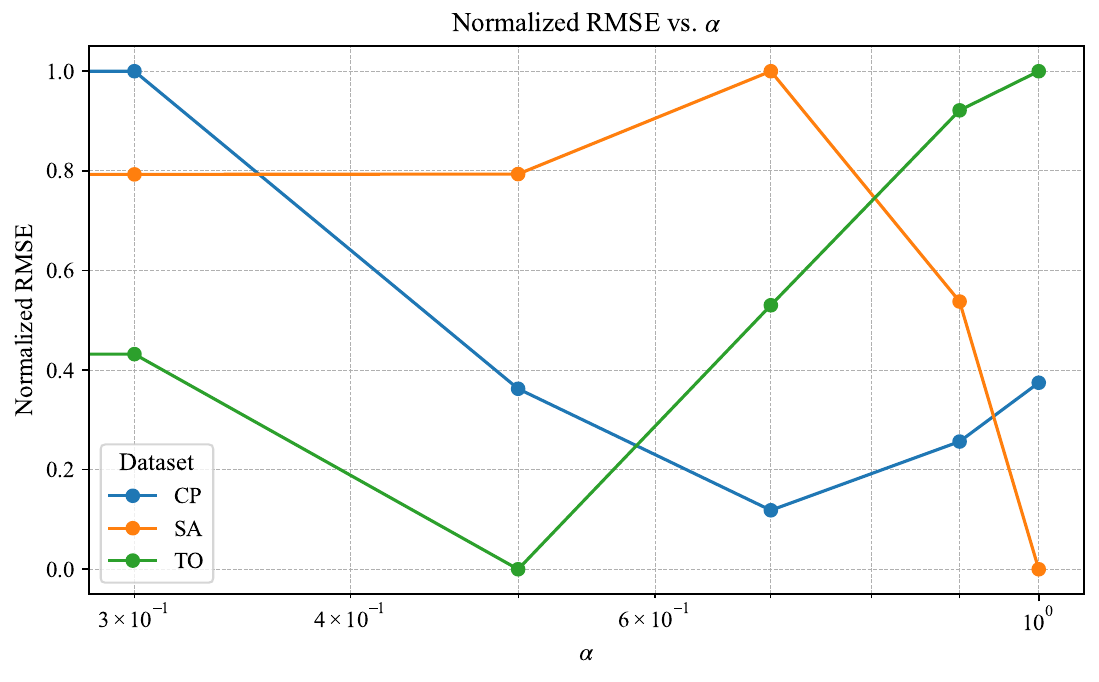}
  }
  \caption{Normalized RMSE as a function of $\alpha$ across datasets.}
  \label{fig:grid_alpha}
\end{figure}

\subsection{Component-wise Ablation of Sparse Attention Parameters}
\label{App:CompL}

We conduct a component-wise ablation study to assess the contribution of each sparse attention hyperparameter to model performance (Figure~\ref{fig:all_steps}). In each experiment, a single hyperparameter is varied while all others are held fixed, and performance is evaluated using average validation RMSE across four validation sets. Lower RMSE indicates better performance. For the \textit{compress block size}, a value of 10 achieves the best average validation RMSE, with performance degrading for both smaller and larger values. The \textit{number of selected blocks} performs best at 3, while deviations in either direction reduce effectiveness. Similarly, a \textit{selection block size} of 10 yields the lowest validation RMSE among the tested values. Finally, a \textit{sliding window size} of 6 provides the best performance, suggesting the importance of selecting an appropriate local context size. Based on these findings, we restrict the hyperparameter search space in the first-stage optimization to improve search efficiency. Nevertheless, joint hyperparameter optimization remains necessary to achieve optimal performance across datasets.

\begin{figure*}[tbh]
  \centering
  \includegraphics[width=0.48\textwidth]{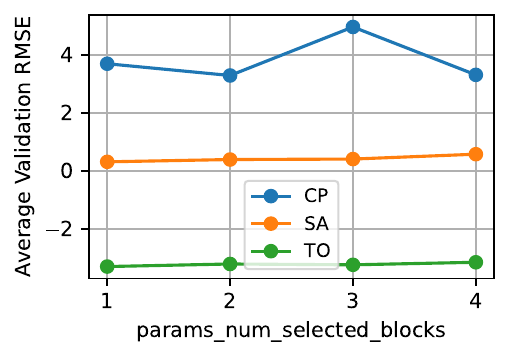}\hfill
  \includegraphics[width=0.48\textwidth]{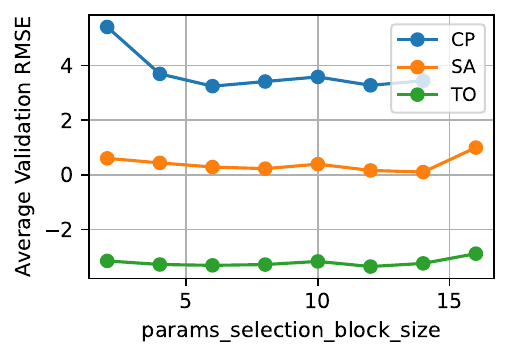}

  \vspace{1ex}

  \includegraphics[width=0.48\textwidth]{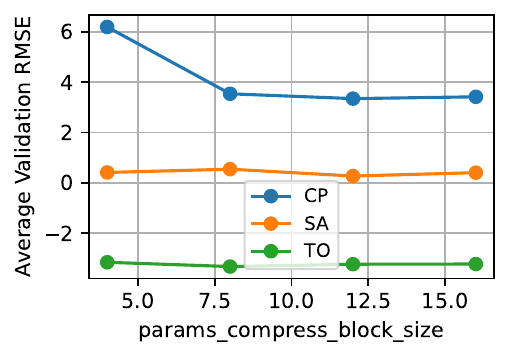}\hfill
  \includegraphics[width=0.48\textwidth]{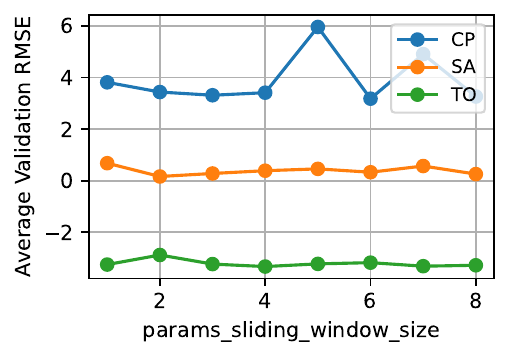}

  \caption{Component-wise ablation of sparse attention parameters. Each plot shows the effect of varying one parameter: \textit{number\_of\_selected\_blocks} (top-left), \textit{selection\_block\_size} (top-right), \textit{compress\_block\_size} (bottom-left), and \textit{sliding\_window\_size} (bottom-right) on the average validation RMSE across four datasets.}
  \label{fig:all_steps}
\end{figure*}

\end{document}